\documentclass[twoside]{article}

\usepackage[preprint]{aistats2027}
\usepackage[round]{natbib}

\usepackage{multirow}
\usepackage{booktabs}  
\usepackage{graphicx}  
\usepackage{booktabs}
\usepackage{makecell}
\usepackage{tabularx}
\usepackage[table]{xcolor}
\usepackage{tablefootnote}

\definecolor{neurocolor}{RGB}{235,242,250}
\definecolor{chemcolor}{RGB}{245,239,230}
\definecolor{metacolor}{RGB}{238,246,238}
\definecolor{omicscolor}{RGB}{246,238,246}
\usepackage{hyperref}
\usepackage{url}
\usepackage{booktabs}
\usepackage{graphicx}
\usepackage{amsthm}
\usepackage{array}
\newtheorem{proposition}{Proposition}[section]
\newtheorem{corollary}[proposition]{Corollary}
\theoremstyle{remark}
\newtheorem{remark}[proposition]{Remark}
\usepackage{amssymb}
\usepackage{placeins}

\begin{document}

%

%

\twocolumn[

\aistatstitle{How High Is 0.6? Floors, Ceilings, and Headroom in Interpretability Probing}

\aistatsauthor{Pranjal Garg}

\aistatsaddress{Independent Researcher } ]

\begin{abstract}
Probes are the workhorse of interpretability. If a model's hidden states predict a variable, the model is said to represent it. But a probe score has no fixed meaning. An $R^2$ of 0.6 may only reflect what the input already gives away, and the same score can mean different things on different data. We propose reading every probe score against two reference points: a floor, what a declared set of simple inputs already predicts, and a ceiling, what the full input can predict. The gap between them, the headroom, is the range in which a probe can show that a model computes something beyond the simple inputs. We prove that headroom vanishes in two ways: the target stops depending on a hidden variable the model must infer, or the input stops revealing it. We test this on transformers trained for in-context meta-analysis, which must infer the hidden heterogeneity between studies to weight them correctly, and where both reference points are known. Under distribution shift, probe scores fall and prediction error rises $12$--$15\times$, yet the model recovers a similar share of the headroom, indicating that the data lost information, not the representation. We then analyze the real models.The single-cell foundation model scGPT encodes biological variability only partially. We also revisit four influential LLM probing studies, which claim that models represent geography, the state of an Othello board, truth, and the demographics of their users. Against a floor computed from the input text alone, some of these claims hold, while others are largely explained by the text itself.
\end{abstract}

\section{Introduction}
Understanding what information is represented inside a complex system is a central problem across biological and artificial intelligence. In neuroscience, decoding analyses use neural population activity to ask whether variables such as sensory features, cognitive states, or behavioral quantities are represented in the brain; analogous methods are now widely used to study the hidden representations of learned artificial networks \citep{quian2009extracting,naselaris2011encoding, belinkov2022probing}. In modern AI, where high-dimensional internal states are otherwise difficult to interpret directly, \emph{probes} provide a particularly simple and scalable readout: a classifier or regressor is trained to predict a quantity of interest from a model's hidden states, and its predictive performance is used to assess how strongly that quantity is represented \citep{alain2016understanding, hewitt2019designing}. Such measurements underlie claims about what models encode, how representations emerge during learning and how information is distributed across layers \citep{tenney2019bert, saphra2019understanding}. They therefore carry substantial scientific weight, as conclusions about the internal organization of both biological and artificial systems can depend on what a probe score is taken to mean.

\begin{figure*}[!htbp]
    \centering
    \includegraphics[width=0.65\textwidth]{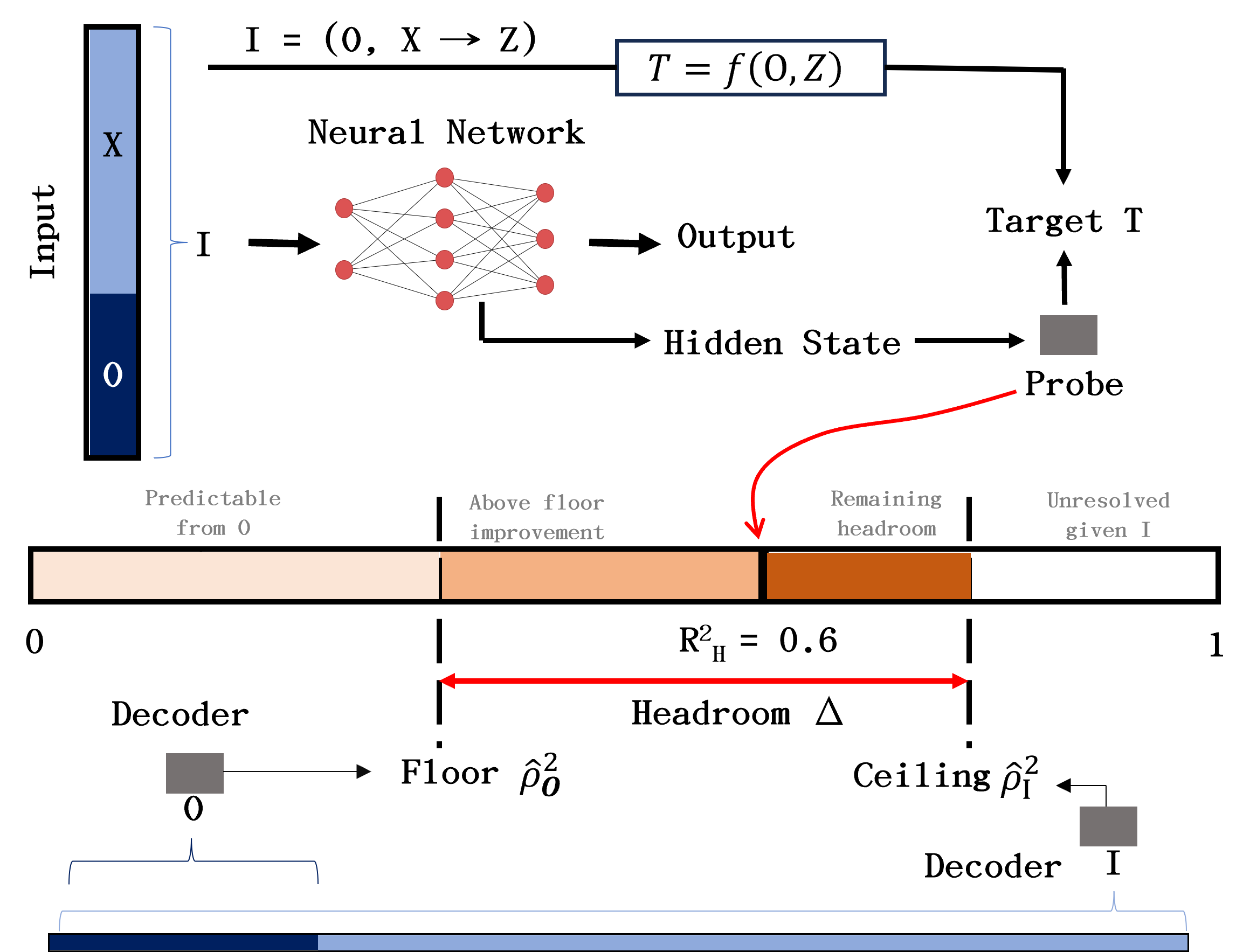}
    \caption{\textbf{Schematic of the floor--ceiling normalized probe $R^2$.} A probe's raw decodability ($R^2_H$) for a target $T$ is normalized between an estimated floor ($\hat\rho_O^2$), achievable from restricted information $O$, and an estimated ceiling ($\hat\rho_I^2$), achievable using full information $I$. The gap between them defines the  headroom ($\Delta$).}
    \label{fig:calibration_schematic}
\end{figure*}

Yet this widespread use exposes a basic interpretive problem: a raw probe score has no intrinsic scale of evidential strength. A high \(R^2\) may arise because the target is predictable from readily available input information rather than because the representation captures the computation of interest, and the same numerical score can carry different implications across distributions as this baseline predictability changes \citep{ravichander2021probing, elazar2021amnesic, hewitt2019designing}. Interpreting probe scores therefore requires knowing not only what is decodable but also how much decodability is achievable without the computation of interest and how much is achievable from all available information. We study this distinction in a controlled evidence-aggregation task. We train transformer models in a random-effects meta-analysis setting where the sufficient statistics and the oracle Bayes estimator conditional on the true heterogeneity parameter are known in closed form. Under a distribution shift, the raw decodability of the sufficient statistic falls and the prediction error rises, and the raw scores alone cannot say whether the representation degraded or the data became less informative.

To separate computation from statistical artifact, \textbf{we introduce a framework that normalizes probe scores against the bounds implied by the information sets that the scientific question specifies.} We partition the available information into restricted information ($O$) and additional evidence ($X$) used to infer a latent variable ($Z$). We evaluate probe decodability between two population reference points (under the true data distribution): a \emph{floor} ($\rho_O^2$), which measures the predictability achievable from the baseline alone, and a \emph{ceiling} ($\rho_I^2$), which measures the maximum predictability achievable given full information ($I$), which represents all available evidence. We define the gap between these reference points as \emph{headroom} ($\Delta$): the maximum variance a probe could explain by successfully inferring the latent variable $Z$ from $X$ (Figure~\ref{fig:calibration_schematic}). 

The theoretical result demonstrates that this headroom can vanish through two mechanisms: \emph{weak target sensitivity} (where the target ceases to depend on the latent variable) and \emph{weak identification} (where the evidence becomes too noisy to identify the latent variable). When headroom vanishes, raw probe scores can cease to provide evidence about the computation of interest. We test the estimation procedure in the meta-analysis task and a synthetic single-cell model, where the generative model is known and the reference points can be computed exactly by numerical integration. We then apply it to real single-cell data and to published LLM probes, where the reference points can only be estimated from samples. Analyses of a fine-tuned chemistry model and recorded neural activity appear in Appendix~\ref{app:contrast_sensitivity}.

\textbf{Contributions}: (1) we introduce a  floor, ceiling and  headroom normalization for interpretability probes and identify two distinct mechanisms by which this headroom vanishes: weak target sensitivity and weak identification of the latent variable; (2) we show that under distribution shift raw decodability can reflect reduced headroom rather than a degraded representation; (3) we test this estimation procedure in different domains, showing how the choice of floor and ceiling distinguishes different scientific questions; (4) we show that the same raw probe score can imply different amounts of recovered  headroom across settings.

\section{Related Work}
\label{sec:related}

\paragraph{Probing and its baselines.}
A probe trains a simple readout on hidden states and treats its predictive performance as evidence that a variable is represented \citep{alain2016understanding,belinkov2022probing}. Raw probe accuracy underdetermines this conclusion in three ways. Expressive probes can learn the task themselves \citep{hewitt2019designing}. Random encoders already support high accuracy \citep{zhang2018language,wieting2019no}. Decodable information need not be used by the model \citep{ravichander2021probing,elazar2021amnesic,kumar2022probing}. Proposed remedies either control probe capacity \citep{hewitt2019designing,voita2020information} or estimate information content \citep{pimentel2020information}. \citet{pimentel2020information} note that, by the data-processing inequality, no representation carries more information about a target than its input does.

Closest to our floor is conditional probing \citep{hewitt2021conditional}. Building on $\mathcal{V}$-information \citep{xu2020theory}, it measures what a representation adds beyond a baseline representation. We differ in three respects:
(i) the floor is defined on a declared restricted \emph{input} set $O$ chosen by the scientific question;
(ii) we pair it with a ceiling from the full input $I$, turning the data-processing bound into an estimable reference point against which scores are normalized;
(iii) we show that both reference points move with the data regime, so a fixed baseline can misattribute a regime change to the representation; we show this in Section~\ref{sec:meta} and Appendix~\ref{app:conditional}, where the conditional gain halves under a distribution shift while the headroom-normalized score does not.

\paragraph{Interpreting scientific foundation models.}
Probing and related readouts are increasingly being used to make scientific claims about learned models, for example, that protein language models encode structure \citep{rives2021biological} and that single-cell foundation models encode regulatory or cell-state information \citep{cui2024scgpt,theodoris2023transfer}. Several audits find that such models often fail to beat simple baselines \citep{kedzierska2025zero,boiarsky2024deeper,ahlmanneltze2025deep}. A recent audit of scGPT and Geneformer finds that the apparent biological structure in attention adds nothing beyond trivial gene-level statistics \citep{kendiukhov2026systematic}. These studies choose baselines case by case and report raw differences. We formalize the question they share: how much a representation recovers beyond a declared simple baseline, relative to what is predictable from the available inputs, and apply it to scGPT, a fine-tuned TabPFN \citep{hollmann2025accurate,grinsztajn2026tabpfn}, and neural population activity. The latter two appear in Appendix~\ref{app:contrast_sensitivity}.

\paragraph{Distribution shift and causal relevance.}
Interpretations that are faithful in distribution can fail under systematic shift \citep{friedman2023interpretability}. Causal methods such as nullspace projection and interchange interventions test whether decodable information is used \citep{ravfogel2020null,geiger2024finding}. Our framework is complementary. It does not test causal use; it specifies, before any intervention, how much evidential weight a correlational score can carry in a given regime. It also shows that a drop in raw decodability under distribution shift can reflect reduced headroom rather than a degraded representation.

\paragraph{Controlled testbeds with known sufficient statistics.}
Meta-trained sequence models approximate Bayes-optimal inference \citep{mikulik2020meta,muller2022transformers}, and transformers learn statistical estimators in context \citep{garg2022can,akyurek2022learning,bai2023transformers}. Probes have been used to compare their internal quantities with classical estimators \citep{teh2025solving}. We build on this line with random-effects meta-analysis \citep{higgins2002quantifying}.

\section{Controlled meta-analysis setup}
\label{sec:setup}

\subsection{Task and model}
\paragraph{Controlled random-effects meta-analysis.}
Each example contains $K$ studies estimating an underlying effect $\mu$.
Study-specific effects vary around $\mu$ with between-study heterogeneity
$\tau^2$, while each reported estimate $x_i$ is additionally corrupted by
study-specific sampling variance $\sigma_i^2$. Marginally,
$
x_i\mid \mu,\tau^2,\sigma_i^2
\sim
\mathcal N(\mu,\sigma_i^2+\tau^2)$. If $\tau^2$ were known, each study would receive precision weight $w_i=\frac{1}{\sigma_i^2+\tau^2}$, and all information relevant to estimating $\mu$ would be summarized by the sufficient statistics $S_1=\sum_i w_i$ and $S_2=\sum_i w_i x_i$. Here, $S_1$ is the total precision of the studies and $S_2$ is their
precision-weighted effect. We therefore use $\log S_1$ as the primary target for representation probing, and separately probe $\tau^2$ to test whether the model represents the hidden heterogeneity needed to construct the weights.

We define \textbf{distribution shift} as an out-of-distribution (OOD) change from a heterogeneity-dominated regime to a sampling-variance-dominated regime. We quantify the regime by the sampling-variance share $r_i=\frac{\sigma_i^2}{\sigma_i^2+\tau^2}$, whose median increases from 0.31 in distribution (ID) to 0.71 OOD. Thus, the ID regime is predominantly heterogeneity-driven, whereas the OOD regime is predominantly sampling-variance-driven. The distribution shift changes which source of uncertainty determines the precision weights. For full details, see Appendix~\ref{app:bayes}.

\paragraph{Model and training.} We train causal pre-norm transformers with attention and MLP blocks at five sizes ranging from $3.3$k to $794$k parameters (xs, s, m, l, xl). The transformer model receives $(x_i,\log\sigma_i^2)$ as input, and the true heterogeneity $\tau^2$ is hidden. It predicts the overall effect $\mu$ under a causal mask (Appendix~\ref{app:training}).

\paragraph{Probing and architectural controls.} 
All probe scores and estimated reference points are computed using a bank of linear and nonlinear regressors (ridge, MLP, and gradient boosting) for every layer (Appendix~\ref{sec:probe-bank}). We report the standard $R^2$ for every held-out target. We probe $\log S_1$ rather than $S_1$ because the logarithm reduces target skew from $3.9$ to $-0.6$.

\subsection{Raw probe behavior under the distribution shift}
\begin{figure}[!htbp]
    \centering
    \includegraphics[width=\linewidth]{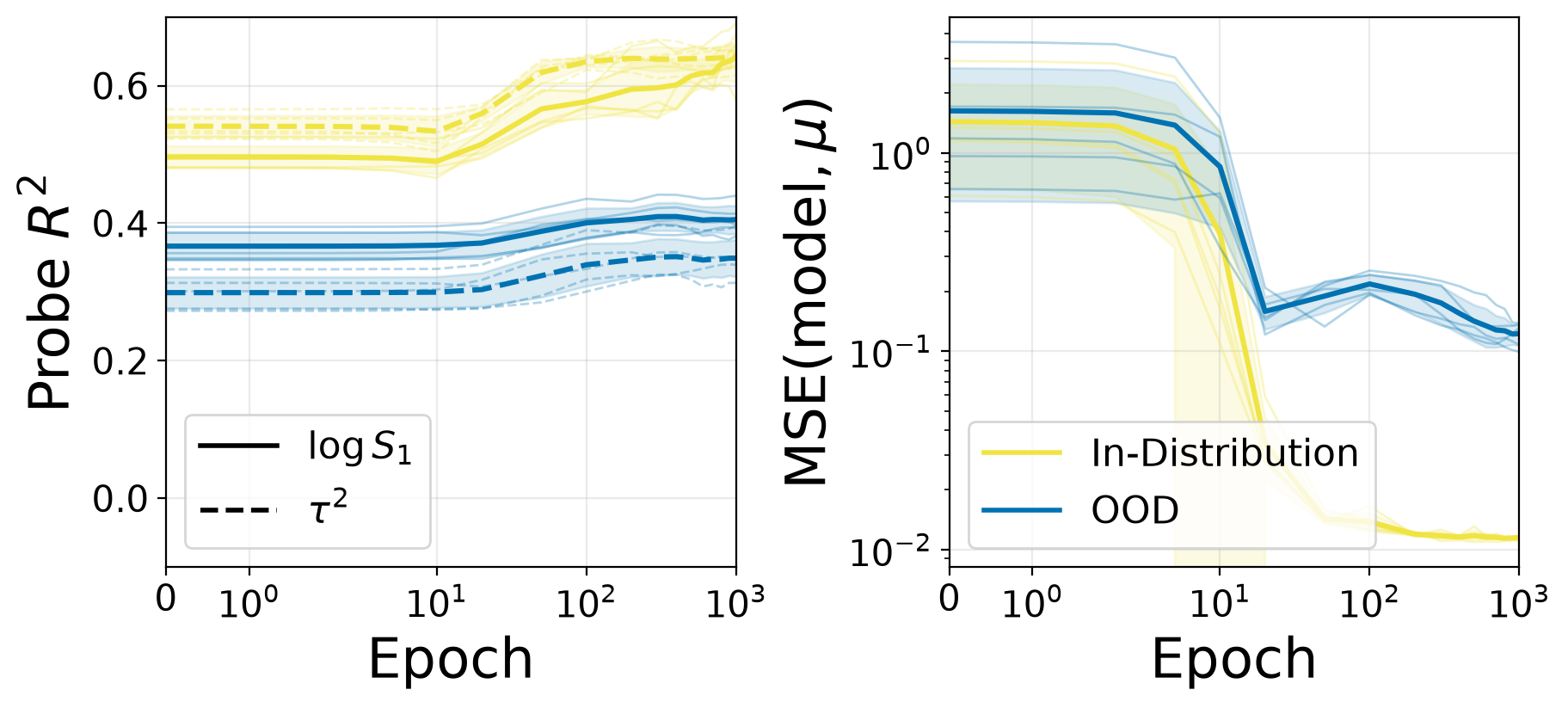}
    \caption{\textbf{In-distribution and OOD learning trajectories.} Probe decodability for $\log S_1$ and $\tau^2$ and final-position model MSE across training (xl model).}
    \label{fig:dissociation}
\end{figure}

Table~\ref{tab:dissociation} and Figure~\ref{fig:dissociation} show that the
distribution shift degrades all three measured quantities. For models s
through xl, probe $R^2$ for $\log S_1$ falls (from $0.64$ to $0.40$ for xl),
probe $R^2$ for $\tau^2$ falls (from $0.64$ to $0.34$ for xl), and prediction
MSE at the final position rises $12$--$15\times$. In distribution, model error is within $3\%$ of
the oracle conditional-Bayes risk; out of distribution it is
$2.1$--$2.6\times$ that risk. Both targets nonetheless remain partly decodable
OOD ($R^2\approx0.29$--$0.40$ for $\log S_1$ and $0.26$--$0.34$ for $\tau^2$).
The xs model is excluded from these comparisons because it does not reliably
learn either target in distribution ($R^2 = 0.28 \pm 0.20$ for $\log S_1$).

The raw scores admit two explanations: the representation degraded, or the
OOD data carry less information about the targets. A further concern applies
to $\log S_1$. When sampling variance dominates,
$S_1=\sum_i(\sigma_i^2+\tau^2)^{-1}\approx\sum_i\sigma_i^{-2}$ is predictable
from the sampling variances alone, so the OOD score of $0.40$ might require no
inference about $\tau^2$. If so, the best prediction of $\log S_1$ from the
sampling variances alone should rise sharply under the shift. We formalize
this quantity as the floor $\rho^2_O$ in Section~\ref{sec:theory};
Section~\ref{sec:collapse} shows that it rises only from $0.01$ to $0.09$.
Raw probe scores cannot separate these explanations;
Section~\ref{sec:theory} introduces a normalization that does.

\section{Normalizing probe decodability}

\subsection{Floor, ceiling and headroom}
\label{sec:theory}

Let $T\in L^2$ be a non-constant target quantity whose representation we wish to audit. Let $O$ denote restricted information and $I$ the full information available to the system, with $\sigma(O)\subseteq\sigma(I)$. Equivalently, we may write $I=(O,X)$, where $X$ denotes the additional evidence beyond $O$. The representation under study is denoted by $H$; for deterministic models, $H=\Phi(I)$.

We define two population reference points,
\[
\rho_O^2:=1-\frac{\mathbb E[\operatorname{Var}(T\mid O)]}{\operatorname{Var}(T)}, \qquad \rho_I^2:=1-\frac{\mathbb E[\operatorname{Var}(T\mid I)]}{\operatorname{Var}(T)},
\]
and their difference $\Delta:=\rho_I^2-\rho_O^2\geq0$ (Propositions~\ref{prop:ceiling} and~\ref{prop:floor}).

Here $\rho_O^2$ is the maximum population $R^2$ attainable from the restricted information $O$, whereas $\rho_I^2$ is the maximum attainable from the full information $I$. We call $\rho_O^2$ the \emph{floor}: it is not a lower bound on an arbitrary probe score, but the level of predictability already available without using the additional evidence $X$. Consequently, a probe score at or below $\rho_O^2$ does not by itself establish that the representation contains predictive information about $T$ beyond $O$ (Proposition~\ref{prop:floor}; Remark~\ref{rem:evidential_headroom}).

The headroom $\Delta$ quantifies how much additional predictability $I$ supplies beyond $O$. It equals the normalized variance of the Bayes update that $X$ supplies about $T$ beyond $O$ (Proposition~\ref{prop:identity}), and it vanishes through weak target sensitivity or weak identification (Corollary~\ref{cor:channels}; Section~\ref{sec:collapse}).

\begin{figure}
    \centering
    \includegraphics[width=\linewidth]{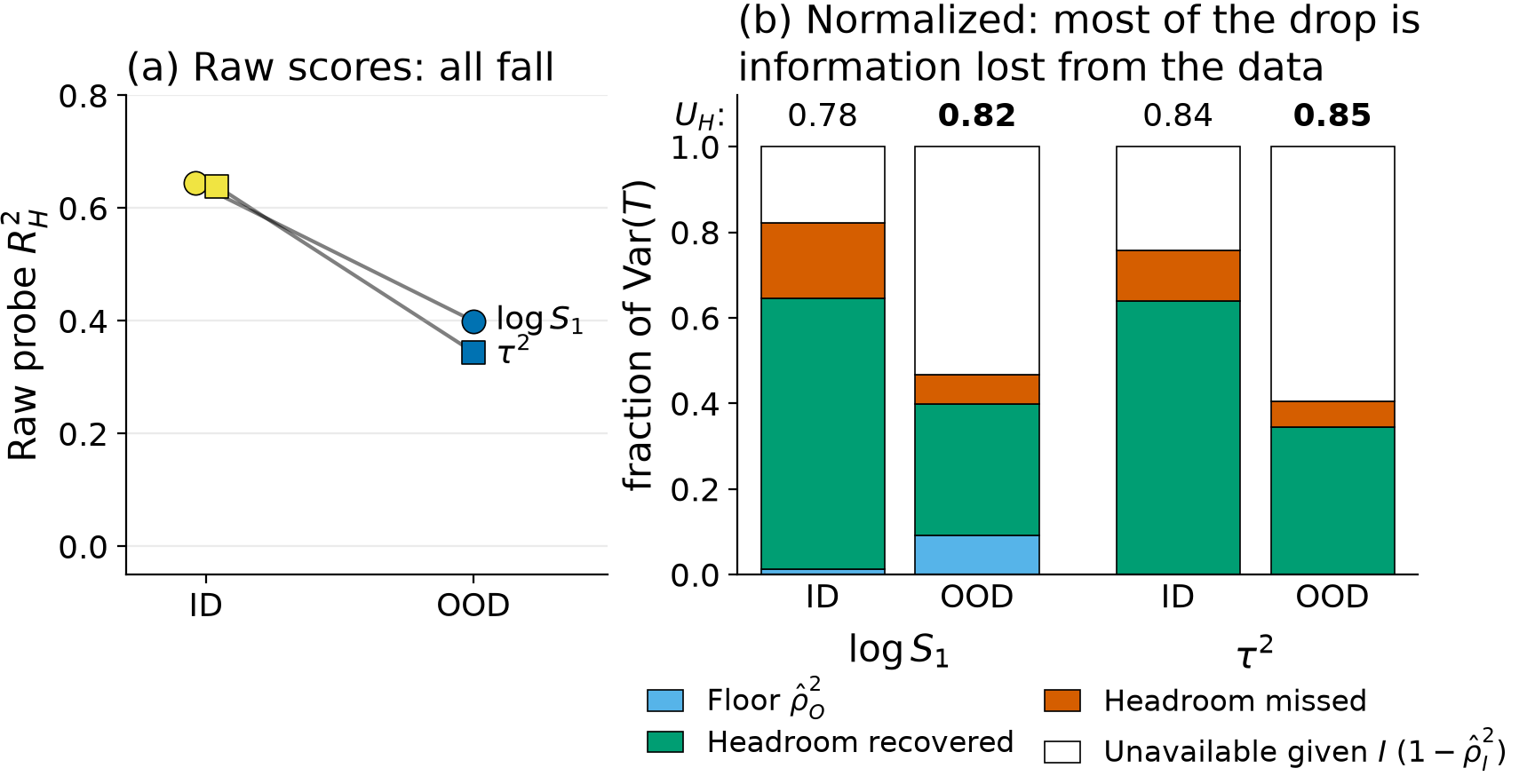}
    \caption{\textbf{Raw and normalized probe performance under distribution shift.} Raw probe scores decrease OOD, while normalization by the floor and ceiling shows that most of the decline reflects reduced available information.}
    \label{fig:main_meta}
\end{figure}

For a representation $H$, let $R_H^2$ denote the held-out probe performance for predicting $T$ from $H$. When $H$ is measurable with respect to $I$, no predictor based on $H$ can exceed the population 
ceiling $\rho_I^2$ (Proposition~\ref{prop:ceiling}). When $\Delta>0$, we therefore normalize representation decodability as $U_H:=\frac{R_H^2-\rho_O^2}{\rho_I^2-\rho_O^2}$. Here $U_H$ measures probe performance relative to the  headroom available beyond $O$. Each reference point has three versions. The \emph{population} reference points are defined above. Second is \emph{estimating} the reference points $\hat\rho^2_O,\hat\rho^2_I$, by fitting a fixed decoder bank to $O$ and to $I$ on held-out data (the \emph{estimation procedure}; Appendix~\ref{sec:probe-bank}). When the generative model is known (meta-analysis, synthetic single-cell), we additionally compute them \emph{exactly} by numerical integration. In both of them, estimates closely track exact values (Appendix~\ref{sec:exact-endpoint-certificate}, Figure~\ref{fig:exact-vs-empirical}).  For meta-analysis and single-cell test cases, we estimate the reference points with feature representations \(\phi_O(O)\) and \(\phi_I(I)\). The feature maps are fixed deterministic functions of their corresponding information sets (Appendix~\ref{app:endpoint-features}). Estimated ceilings are conservative, so an underestimated ceiling inflates $U_H$.

\begin{figure}[htbp]
    \centering
    \includegraphics[width=\linewidth]{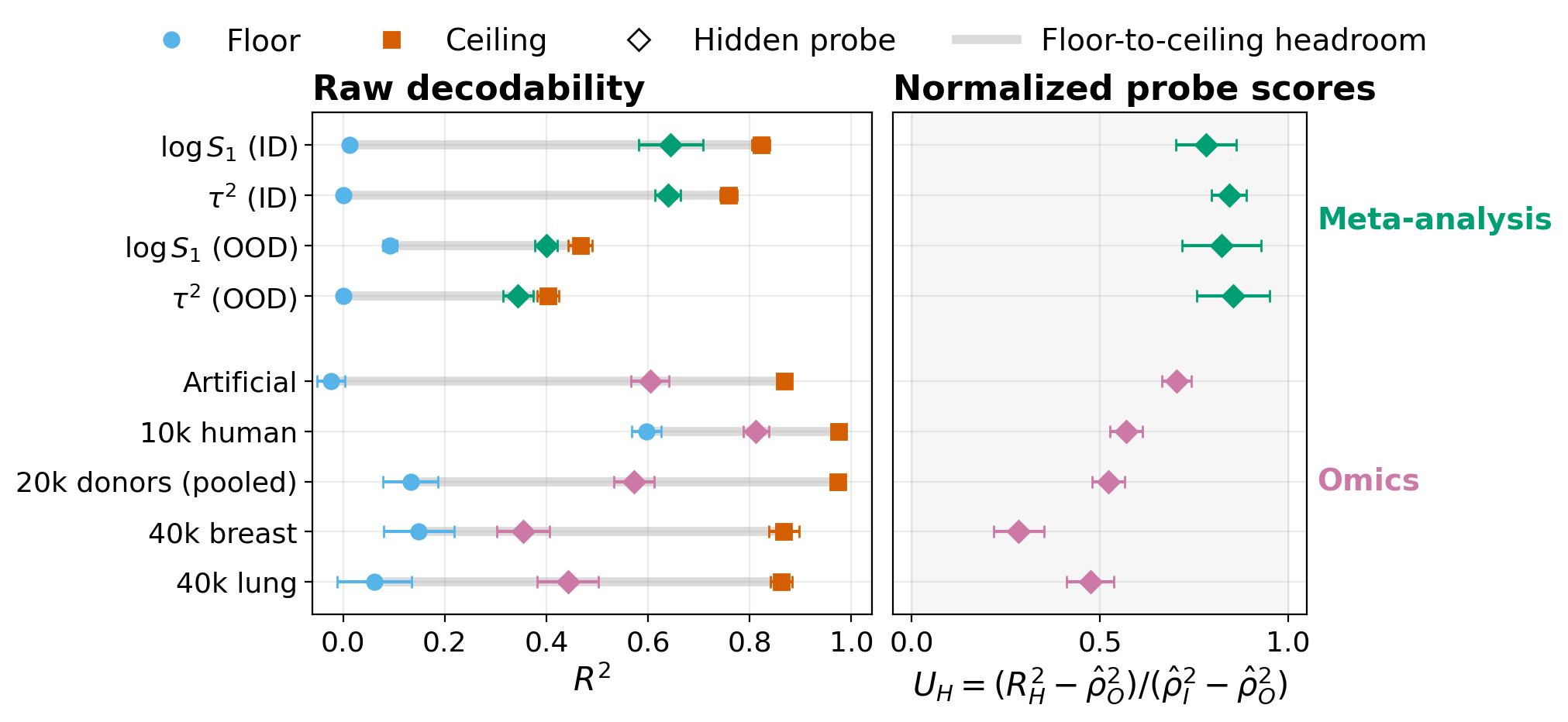}
    \caption{
    \textbf{Raw and headroom-normalized decodability across domains.}
    Left: $R^2$ of the floor, the ceiling and the representation probe. Right: the normalized decodability $U_H=(R_H^2-\hat\rho_O^2)/(\hat\rho_I^2-\hat\rho_O^2)$, the fraction of the headroom recovered by the probe. Meta-analysis reference points are exact (Table~\ref{tab:exact-endpoints}). ``Artificial'' is the synthetic single-cell model and ``Omics'' the single-cell data of Table~\ref{tab:cross-domain-calibration-legacy}.
    }
    \label{fig:calibration}
\end{figure}

A natural alternative is conditional probing \citep{hewitt2021conditional} with baseline $O$ instead of the model's own non-contextual embedding layer: the gain $R^2_{O,H}-R^2_O$ from adding $H$ to a probe on $O$. Measuring this gain from the floor instead (in the population, $R^2_O=\rho^2_O$) and normalizing it by the headroom gives $U_{H|O} := (R^2_{O,H}-\rho^2_O)/(\rho^2_I-\rho^2_O)$. In the population $U_H \le U_{H|O} \le 1$, with equality on the left when $H$ retains everything in $O$ relevant to $T$ (Appendix~\ref{app:conditional}).

When an estimated floor is well below chance for classification or well below zero for regression, the headroom normalization is not applicable, and we report the raw probe performance instead. Floors only slightly below chance or zero, as expected from estimation noise when the population floor is near zero (Figure~\ref{fig:exact-vs-empirical}), are kept unclipped (Appendix~\ref{sec:probe-bank}).

\paragraph{Evaluation settings.} We specify the contrast $(O,I,T)$ in settings with progressively weaker access to ground truth: the controlled meta-analysis and a synthetic single-cell model, where the reference points are exact (Appendix~\ref{sec:exact-endpoint-certificate}), then real single-cell representations and published LLM probes. The contrast is chosen from the scientific question, independently of the probe scores, and normalized results are interpreted relative to it. Where no unique contrast exists, we evaluate nested baselines $O_1\subset O_2\subset\dots\subset O_k$. Appendix~\ref{app:contrast_sensitivity} applies this to a fine-tuned chemistry model and recorded neural activity.

\subsection{Normalized probes across domains}
\subsubsection{Meta-analysis}
\label{sec:meta}
Using the controlled random-effects model defined above, we take the
sufficient statistic $\log S_1$ as the probe target,
$T=\log \sum_{i=1}^{K}\frac{1}{\sigma_i^2+\tau^2}$. The scientific question is
whether the transformer's hidden representation contains information about
the precision-weighting computation beyond what the reported sampling
variances alone predict. We therefore define the restricted information as
$O=\sigma_{1:K}^2$ and the full information as $I=(\sigma_{1:K}^2,x_{1:K})$,
where the observed effects $x_i$ provide the additional evidence from which
the latent heterogeneity $\tau^2$ can be inferred. We also probe $\tau^2$
directly. Because $\tau^2$ is independent of $O$, its floor is zero, and its
ceiling measures how much heterogeneity the reported effects reveal. The
distribution shift changes the relative contribution of sampling variance and
heterogeneity, so it tests whether raw decodability tracks the information
available in each regime.

Figure~\ref{fig:calibration} and Table~\ref{tab:cross-domain-calibration-legacy}
place each raw probe score between a floor and a ceiling set by a
domain-specific contrast. In the \textbf{meta-analysis}, the contrast asks
whether the hidden state carries heterogeneity inferred from the reported
effects beyond what the sampling variances predict. In distribution, the
probe recovers $78\%$ of the headroom for $\log S_1$ and $84\%$ for $\tau^2$.
Under the shift, raw decodability falls for both targets
($0.64\to0.40$ for $\log S_1$, $0.64\to0.34$ for $\tau^2$;
Table~\ref{tab:dissociation}), and so do the exact ceilings
($0.82\to0.47$ and $0.76\to0.40$; Table~\ref{tab:exact-endpoints}). The
fraction of headroom recovered does not decrease ($U_H = 0.78\to0.82$ for
$\log S_1$, $0.84\to0.85$ for $\tau^2$; Figure~\ref{fig:main_meta}).
The lower raw scores therefore reflect less recoverable information in the
data, not a less informative representation (Section~\ref{sec:collapse}).

Conditional probing alone would suggest the opposite. With a linear decoder throughout, the conditional gain roughly halves under the shift ($0.58 \to 0.31$ for $\log S_1$; $0.64 \to 0.34$ for $\tau^2$), which alone suggests a degraded representation. The headroom (estimated with linear decoder) also halves ($0.62 \to 0.33; 0.66 \to 0.36$), and once the gain is divided by it, $U_{H|O}$ stays flat ($0.95 \to 0.95$; $0.97 \to 0.95$), corresponding to $U_H$ (Appendix~\ref{app:conditional}).

\subsubsection{Genomics and single-cell data}
\label{sec:genomics}
Single-cell RNA sequencing has the same structure as meta-analysis. Each example is a gene $g$ measured in $K$ cells. The normalized count of a cell varies for two reasons: technical noise, which decreases as the depth of the cell's sequencing $d_{gi}$ increases, and the biological variation from cell-to-cell, summarized by the overdispersion of the gene $b_g$. Under a Gamma--Poisson model the normalized count has variance $\lambda_g(d_{gi}^{-1}+b_g)$, where $\lambda_g$ is the gene's mean expression. Technical noise $d_{gi}^{-1}$ therefore plays the role of the sampling variance $\sigma_i^2$, and overdispersion $b_g$ the role of heterogeneity $\tau^2$. The target is the analogue of $\log S_1$, the gene's effective precision $T_g=\log\sum_{i=1}^{K}(d_{gi}^{-1}+b_g)^{-1}$, which depends on the hidden overdispersion. The restricted information is the sequencing depths alone, $O_g=d_{g,1:K}$, and the full information adds the observed counts, $I_g=(y_{g,1:K},d_{g,1:K})$, from which overdispersion can be inferred. We ask whether frozen scGPT \citep{cui2024scgpt} encodes $T_g$ beyond what depth alone predicts.

We first ask this on synthetic genes ($K=40$; Appendix~\ref{app:sc_gen_model}) and then on four real scRNA-seq datasets \citep{10xgenomics2021pbmc, 10xgenomics2024pbmcdonors, 10xgenomics2022tumors}. In real data, the true $b_g$ is unknown, so we split the cells of each dataset into half at random. Overdispersion is estimated from the first half and defines the target, while the depths, counts and scGPT inputs come from the second half, so the ceiling cannot simply recompute its own target. $K$ is then the number of cells in the second half.

For \textbf{single-cell data}, raw scores are misleading because the floor differs across datasets. In the 10k human dataset, depth alone already reaches $R^2=0.60$, so scGPT's raw score of $0.81$ recovers $57\%$ of the headroom. In the 20k donors dataset the raw score is much lower ($0.57$), but the floor is only $0.13$, so scGPT recovers a similar $52\%$. Across the four datasets, raw scores range from $0.35$ to $0.81$, while the fraction of headroom recovered ranges from $0.28$ to $0.57$; on synthetic genes it is $0.70$. scGPT therefore encodes biological overdispersion beyond sequencing depth, but only partially. scGPT probes use a 64-component PCA fitted on training rows only; all floors, ceilings and other probes use the full feature vectors (Appendix~\ref{app:endpoint-features}).

\subsubsection{Published LLM probes}
\label{sec:llm_probes}
We compute floors for four published LLM probes, using descriptions of the input. $T$ is the published target, $I$ is the input the model
received, and $O$ is a surface description of it: $n$-grams of a
name, statement, or conversation, or move summaries that never simulate Othello flips.
The floor is what the decoder bank learns from $O$ on the probe's own training split,
so it needs only the released inputs, labels, and splits. For Othello, the ceiling is
exactly 1, since the board follows from the moves; elsewhere we set it to 1, which can
only lower $U_H$, while a richer floor readout could lower it further. We therefore
read $U_H$ here as an approximate value and compare the raw scores with the floor directly where
the headroom is small. For categorical targets, we use accuracy in place of $R^2$.
Figure~\ref{fig:llm_highlights} summarizes the results. See full results in Appendix~\ref{app:llm}.

\begin{figure*}[t]
\centering
\includegraphics[width=0.8\textwidth]{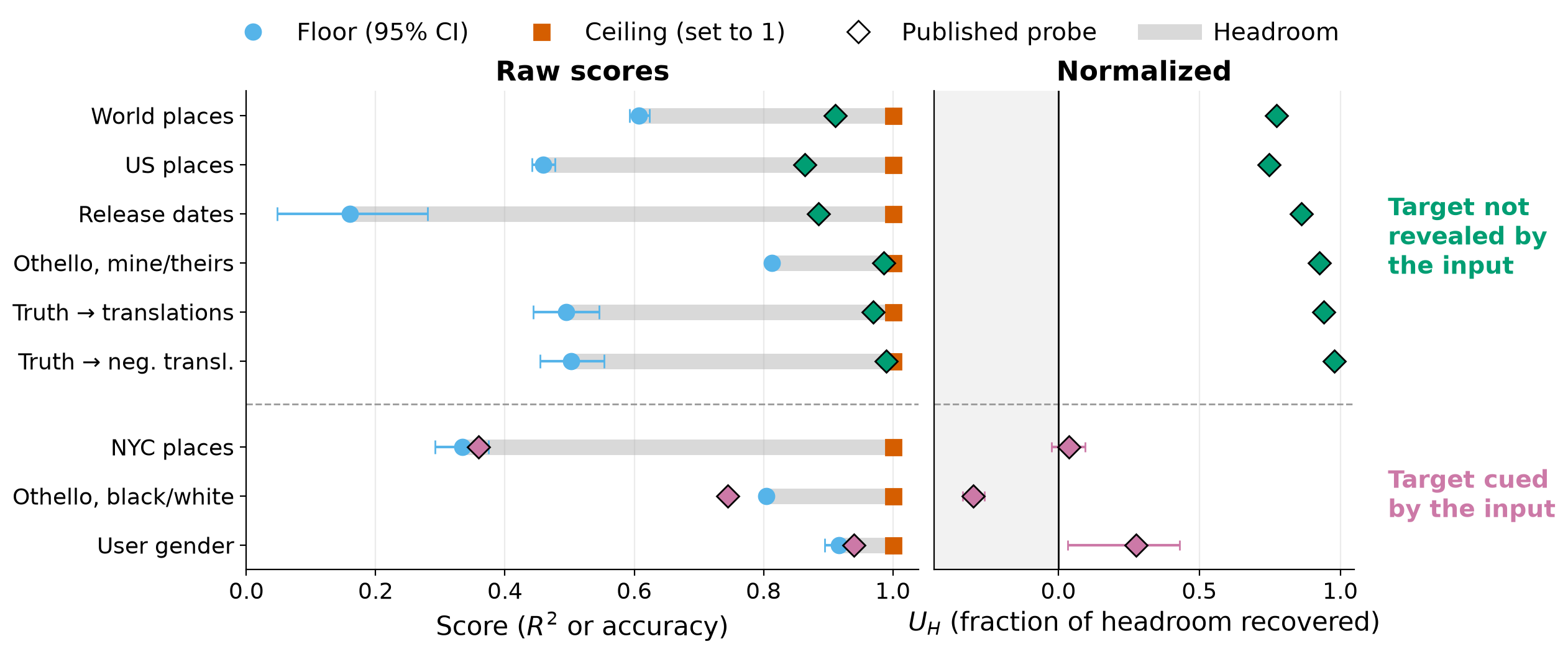}
\caption{\textbf{Probe normalization puts published LLM probe scores in perspective.}
Left: floor computed from a description of the input ($n$-grams;
disc placement without flips for Othello), with 95\% CI, the probe score, and
the ceiling set to 1 ($R^2$ for places and dates, accuracy otherwise). Right: normalized probe score, $U_H$, with error bars from the floor's CI.} 
\label{fig:llm_highlights}
\end{figure*}

\textbf{Space.} \citet{gurnee2024language} report that Llama-2 represents the coordinates of named places. A floor computed from a description of the place name, its character 2--4-gram counts, reaches $R^2 = 0.61$ for world places and 0.46 for US places, yet the published probes recover 63–77\% of the headroom above it. For New York City this floor is 0.33 (95\% CI 0.29–0.37), and the published scores (0.22 for 7B, 0.36 for 70B) lie within or below it, so they cannot separate a city-level map from borough names and numbers in the strings.

\textbf{Othello.} \citet{li2022emergent} report that Othello-GPT represents the board.
A floor that knows which player placed each disc, but never simulates flips, reaches
0.80 accuracy (chance 0.46). The linear black/white/empty probe (0.74) falls below it
($U_H=-0.30$), so its accuracy is accounted for by disc placement. A nonlinear probe
recovers 45\% of the headroom with our decoders. In the mine/theirs/empty
encoding \citep{nanda2023emergent}, a linear probe reaches 0.99 and recovers 92\% of the
headroom above the floor from the same features (0.81 for this target), indicating that the effect of flips is represented linearly in the
relative encoding.

\textbf{Truth.} \citet{marks2023geometry} report that LLMs represent whether a statement is true along a linear direction, and that a probe trained on one dataset of statements still works on another. We compute the floor from a description of each statement, its character and word $n$-grams. Within a single dataset, this floor is near chance for most datasets but reaches 0.97--0.99 on \texttt{larger\_than} and \texttt{smaller\_than} datasets, whose truth labels follow from the wording of the numbers. Probes trained and tested on those two datasets therefore show little beyond the text. Transfer between datasets is more informative, because wording learned on one dataset does not carry over to another. A floor trained on the two number datasets and tested on Spanish--English translations is at chance (0.49), while the published probes reach 0.97 ($U_H=0.94$). Trained on \texttt{cities} and \texttt{neg\_cities} and tested on negated translations, the floor is again at chance (0.50), while the probes reach 0.96--0.99 ($U_H$ up to 0.98). 

\textbf{User attributes.} \citet{chen2024designing} report probes that read a user's
age, gender, education and socioeconomic status from Llama-2-Chat activations. The
floor, an $n$-gram classifier on the conversation, reaches 0.92--0.98, and every published reading
probe (0.94--0.98) lies within its 95\% interval; every control probe falls below its point estimate.
The floor is as high from the user's turns alone, and falls to 0.76--0.90 when trained
on GPT-3.5 conversations and tested on Llama-2-Chat ones. However, the role-played conversations were written to reveal the attribute, so these accuracies can neither establish nor rule out that the model internally represents the user's attributes.

\textbf{Summary.} Probe normalization puts these published scores in perspective (Figure~\ref{fig:llm_highlights}). Probe scores reach well above the floor when the target is not revealed by the surface form: coordinates at world scale, the effect of flips on the board, and truth transferred across datasets. When the target is written or cued in the input, such as borough names and numbers for New York City places, disc placement for the absolute board encoding, or attributes the role-played user writes into the conversation, the floor alone accounts for much of the score.

\subsection{Mechanisms for vanishing headroom}
\label{sec:collapse}
The headroom $\Delta$ is the predictability that the full information $I$
adds beyond the restricted baseline $O$. It can vanish in two distinct ways
(Corollary~\ref{cor:channels}). Let $T = f(O,Z)$ for a latent variable $Z$,
and let $I=(O,X)$, where $X$ is the additional evidence about $Z$. \textbf{ (1) Weak target sensitivity:} $T$ barely depends on $Z$. If
    $f$ is Lipschitz in $Z$ with constant $L(O)$, the headroom is bounded by
    $\mathbb E[L(O)^2\,\mathrm{Var}(Z\mid O)]/\mathrm{Var}(T)$, so knowing $Z$ adds little even when $X$
    reveals it perfectly. The floor rises toward the ceiling. \textbf{ (2) Weak identification:} $X$ reveals little about $Z$ beyond
    $O$. If $f$ varies by at most $M$ over $Z$, the headroom is bounded by
    $M^2\,\mathrm{MI}(X;Z\mid O)/(2\,\mathrm{Var}(T))$, where $\mathrm{MI}$ is the conditional mutual information in nats,
    so $T$ may depend strongly on $Z$, but the data cannot pin $Z$ down. The
    ceiling falls toward the floor.

\paragraph{Which mechanism drives the meta-analysis shift?}
The distribution shift (Section~\ref{sec:setup}) pushes in both directions.
When sampling variance dominates, $\log S_1=\log\sum_i(\sigma_i^2+\tau^2)^{-1}$
depends less on $\tau^2$ (weaker target sensitivity), and the per-study Fisher
information about $\tau^2$, which scales as $(\sigma_i^2+\tau^2)^{-2}$, drops,
so the reported effects reveal less about $\tau^2$ (weaker identification;
Remark~\ref{rem:fisher}). The exact reference points show that weak
identification dominates (Table~\ref{tab:exact-endpoints}). The headroom for
$\log S_1$ shrinks from $0.81$ to $0.37$. The floor rises only from $0.01$ to
$0.09$, while the ceiling falls from $0.82$ to $0.47$, so most of the loss
comes from the ceiling. For $\tau^2$ itself the target is the latent
variable, so target sensitivity cannot change; its floor stays at zero and its
ceiling falls from $0.76$ to $0.40$, a pure identification loss (Figures~\ref{fig:main_meta} and~\ref{fig:calibration}).
\begin{figure}[!htbp]
    \centering
    \includegraphics[width=\linewidth]{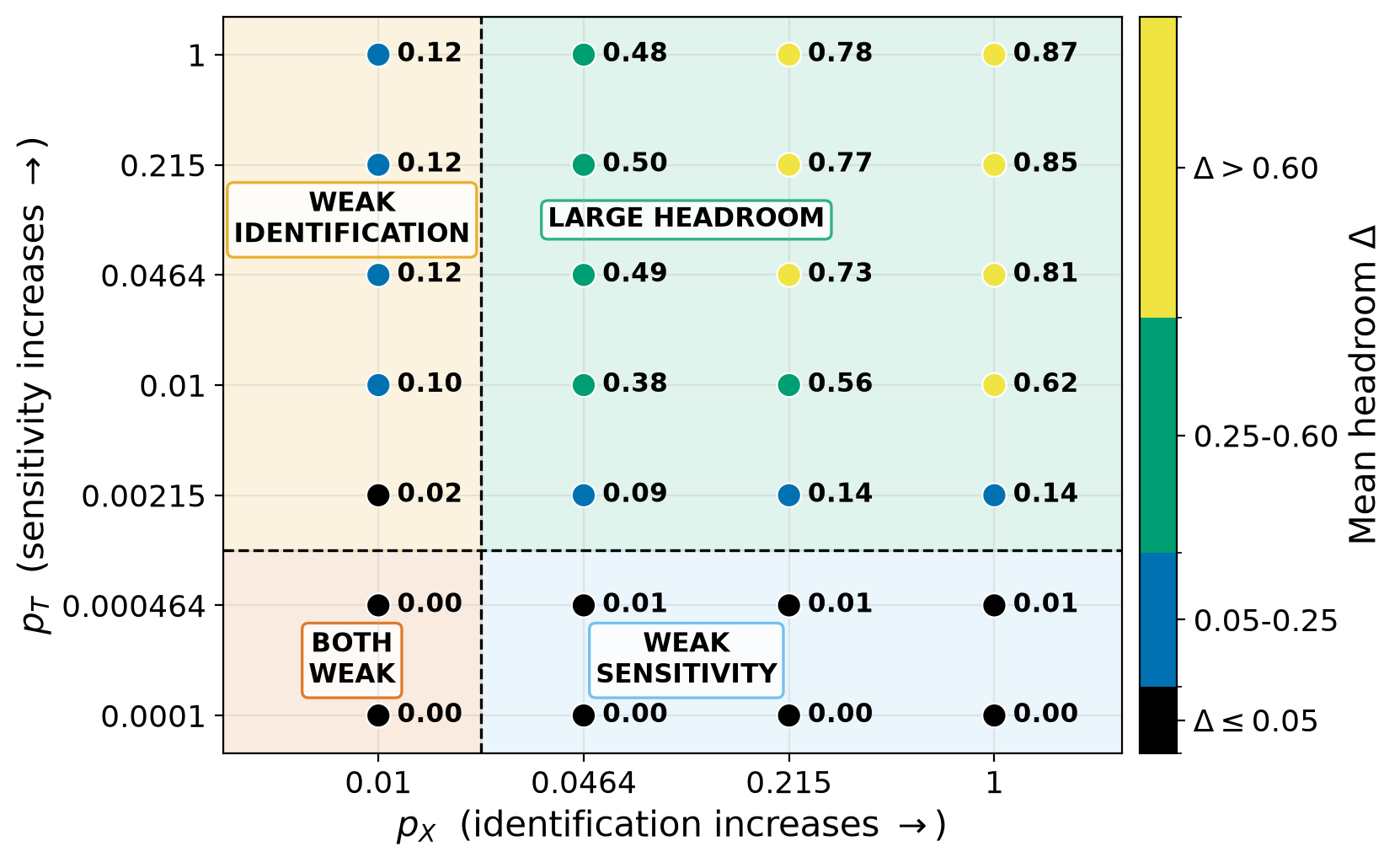}
    \caption{\textbf{Headroom as evidence quality and target sensitivity
    vary, in the synthetic single-cell model (with binomial thinning).} Evidence quality $p_X$ controls how well the data
    identify the latent variable; target sensitivity $p_T$ controls how
    strongly the target depends on it. Low $p_X$ gives weak identification,
    low $p_T$ gives weak target sensitivity, and low values of both give
    both.}
    \label{fig:collapse}
\end{figure}
\paragraph{Separating the mechanisms in a controlled grid.}
In the meta-analysis both mechanisms move together. The synthetic single-cell
model (Section~\ref{sec:genomics}; Appendix~\ref{app:sc_gen_model}) lets us vary them independently. The latent variable is the gene's biological overdispersion, $Z=b_g$; the restricted information is sequencing depth,
$O_g=d_{g,1:K}$; the additional evidence is the observed counts, $X_g=y_{g,1:K}$; and the target is $T_g$.
Reducing sequencing depth by binomial thinning, retaining a fraction $p$ of
counts, lowers the effective depth from $d_{gi}$ to $p\,d_{gi}$. In real data this
affects both the evidence and the target at once. In simulation we apply it
separately (Appendix~\ref{app:sepparating}): thinning the counts used as evidence ($p_X$) degrades inference of
$b_g$, giving weak identification, whereas thinning the
depths entering the target ($p_T$) lets technical variance dominate
overdispersion, giving weak target sensitivity; the restricted information is never thinned. Figure~\ref{fig:collapse}
shows the resulting headroom over the $(p_X, p_T)$ grid, with estimated
reference points. Headroom vanishes when either parameter is small and stays
large only when both are large, so a loss of headroom can be traced to its
cause.

\section{Conclusion}
\label{sec:conclusion}
A probe score of $0.6$ has no fixed meaning. What it says about a representation depends on two quantities: how much of that predictability a declared baseline already provides, and how much the available data support in total. We made both explicit. \textbf{The floor $\rho^2_O$ is what a declared baseline $O$ already predicts, the ceiling $\rho^2_I$ is what the full input $I$ supports, and the headroom $\Delta = \rho^2_I - \rho^2_O$ between them is the range in which a probe score can show information beyond $O$.} Headroom vanishes in two ways: weak target sensitivity, when the target stops depending on the latent variable, and weak identification, when the evidence stops revealing it.

In the controlled meta-analysis, where the reference points are exact, normalization reverses the reading of a distribution shift. Raw decodability of $\log S_1$ and $\tau^2$ falls and prediction error rises, which alone suggests a degraded representation. The reference points show instead that the data lost information: the ceiling falls from $0.82$ to $0.47$ for $\log S_1$ and from $0.76$ to $0.40$ for $\tau^2$, while the floor for $\log S_1$ rises only from $0.01$ to $0.09$, so weak identification dominates. The representation recovers a similar share of the smaller headroom ($U_H = 0.78 \to 0.82$ for $\log S_1$; $0.84 \to 0.85$ for $\tau^2$).

The estimation recovers the exact reference points in two fully specified generative models and extends to settings where only samples are available. In frozen scGPT, similar raw scores correspond to different fractions of recovered headroom across datasets ($0.28$--$0.57$), so biological overdispersion is encoded only partially. For published LLM probes, a floor computed from the input text alone separates claims the text cannot explain (world-scale coordinates, the effect of flips in the relative Othello encoding, truth transferred across datasets) from claims it largely explains (New York City locations, the absolute Othello board, user attributes in role-played conversations).

The framework complements causal interventions: it bounds how much a correlational score can show before any intervention is run. \textbf{We recommend that interpretability studies report the floor, the ceiling and the fraction of headroom recovered alongside raw scores, state the contrast $(O,I,T)$ before looking at probe scores, and, where no unique baseline exists, report a nested sequence $O_1 \subset \dots \subset O_k$ so that readers can see how the choice of baseline affects the conclusions.}

\paragraph{Limitations.}
Normalized scores are defined relative to a stated contrast $(O,I,T)$. Different contrasts answer different scientific questions, so we report nested baselines (Appendix~\ref{app:contrast_sensitivity}) rather than a single choice. On observational data the reference points are estimated, and an underestimated ceiling or floor inflates $U_H$. For published LLM probes the ceiling cannot be estimated and is set to $1$, so $U_H$ there is approximate. Our controlled tasks are kept simple so that the reference points can be computed exactly, and the transformers trained on them are small.

\subsection*{AI use statement}

In this work, we used generative AI tools to aid and polish writing, for retrieval and discovery of related work, for research execution, and to draft sections of the paper. As part of research execution, we used coding agents to assist with implementing and debugging analysis code and with producing scientific figures. We did not use generative AI tools for generating synthetic datasets or for proving mathematical claims. We have reviewed all AI-assisted work. AI-assisted code and analyses were inspected and their outputs were checked against the corresponding experiments and results; AI-assisted text and figures were reviewed for consistency with the underlying methods, analyses, and findings. We take responsibility for the final content of this work, including text, claims, code, figures, and other artifacts produced with the aid of generative AI.



\bibliography{aistats}
\bibliographystyle{apalike}

\clearpage
\appendix
\thispagestyle{empty}

\onecolumn
\aistatstitle{Supplementary Materials}

\section{Controlled random-effects meta-analysis}
\label{app:bayes}

\paragraph{Generative model.}
Each example is a meta-analysis containing $K$ studies that attempt to
estimate the same underlying effect. We denote this effect by
\[
\mu\sim\mathcal N(0,1).
\]
The transformer models are trained to predict $\mu$. The individual studies need not have exactly the same true effect. Their
study-specific effects are
\[
\theta_i\sim\mathcal N(\mu,\tau^2),
\]
where
\[
\tau^2\sim\mathrm U(0,0.5)
\]
is the between-study heterogeneity. Large $\tau^2$ means that the true
effects $\theta_i$ vary substantially from study to study even before sampling
noise is introduced; small $\tau^2$ means that the studies are close to sharing a common true effect. Each study also has finite-sample measurement uncertainty. We draw
\[
n_i\sim\mathrm U\{5,\ldots,99\},
\qquad
s_i^2\sim\mathrm U(0.1,10),
\]
and define its sampling variance as
\[
\sigma_i^2=\frac{s_i^2}{n_i}.
\]
Thus studies with larger samples or smaller within-study variance are measured
more precisely. Finally, the reported study estimate is
\[
x_i\sim\mathcal N(\theta_i,\sigma_i^2).
\]
The difference between $x_i$ and the global effect $\mu$ therefore arises from
two distinct sources:
\[
\underbrace{\theta_i-\mu}_{\text{between-study heterogeneity}}
\qquad\text{and}\qquad
\underbrace{x_i-\theta_i}_{\text{sampling noise}}.
\]

Marginalizing over the unobserved study-specific effect $\theta_i$ gives
\[
x_i\mid\mu,\tau^2,\sigma_i^2
\sim
\mathcal N\!\left(
\mu,\,
\sigma_i^2+\tau^2
\right).
\]
The total uncertainty of study $i$ is therefore the sum of its sampling variance
and the common between-study heterogeneity. The transformer receives $(x_i,\log\sigma_i^2)$ for each study as an input. It does \emph{not} observe $\theta_i$, $\tau^2$, or $\mu$.
We hypothesize that it infers whatever information about $\tau^2$ is useful for
combining the reported studies.

\paragraph{Precision weighting.}
A study with lower total variance $\sigma_i^2+\tau^2$
should contribute more strongly to an estimate of $\mu$. Its corresponding
precision is the inverse of that variance,
\[
w_i
=
\frac{1}{\sigma_i^2+\tau^2}.
\]
Thus a large $w_i$ denotes a comparatively informative study, while a small
$w_i$ denotes a noisy or unreliable one. If $\tau^2$ were known, the $K$ studies could be summarized by two quantities:
\[
S_1=\sum_{i=1}^K w_i,
\qquad
S_2=\sum_{i=1}^K w_i x_i.
\]

The first, $S_1$, is the \emph{total precision}: it measures how much effective information the
collection of studies provides about $\mu$. The second, $S_2$,
is the precision-weighted sum of the observed effects. For fixed $\tau^2$, the likelihood for $\mu$ can be written as
\[
p(x_{1:K}\mid\mu,\tau^2,\sigma^2_{1:K})
\propto
\exp\!\left[
-\frac12
\left(
S_1\mu^2-2S_2\mu
\right)
\right].
\]
Consequently, once $\tau^2$ is known, the individual study observations contain no additional information about $\mu$ beyond $(S_1,S_2)$. Therefore, $(S_1,S_2)$ are sufficient statistics for estimating $\mu$ conditional on
$\tau^2$. This is why $\log S_1$ is useful as a representation-level audit target:
constructing it requires the model to combine the observed sampling variances
with information about the otherwise hidden heterogeneity $\tau^2$.

\paragraph{Inverse-variance estimate.}
The standard inverse-variance-weighted estimator is
\[
\hat\mu_{\mathrm{IVW}}
=
\frac{S_2}{S_1}.
\]
This denotes a weighted average of the reported study effects, where more precise studies receive larger weights. For example, if two studies have identical reported effects but one has much
smaller total variance, the latter contributes more strongly to
$\hat\mu_{\mathrm{IVW}}$. The heterogeneity $\tau^2$ plays an important role: increasing heterogeneity makes the study weights more similar because the common $\tau^2$ term begins to dominate their individual sampling variances.

\paragraph{Bayesian estimate.}
The generative model places the prior $\mu\sim\mathcal N(0,1)$,
whose precision is $1$. Combining this prior with the conditional likelihood gives
\[
\mu\mid x_{1:K},\sigma^2_{1:K},\tau^2
\sim
\mathcal N\!\left(
\frac{S_2}{S_1+1},
\frac{1}{S_1+1}
\right).
\]
The corresponding posterior mean is
\[
\hat\mu_{\mathrm B}(\tau^2)
=
\frac{S_2}{S_1+1}.
\]

As the prior on $\mu$ is centered at zero, the Bayesian estimate is the inverse-variance estimate multiplied by $S_1/(S_1+1)<1$. When the accumulated study precision $S_1$ is small, the prior has appreciable
influence and the estimate is pulled toward zero. As $S_1$ becomes large, $\frac{S_1}{S_1+1}\rightarrow1$, and the Bayesian and classical estimators become nearly identical.

Their corresponding conditional mean-squared errors are
\[
\operatorname{MSE}_{\mathrm B}
=
\frac{1}{S_1+1},
\qquad
\operatorname{MSE}_{\mathrm{IVW}}
=
\frac{1}{S_1}.
\]
The difference is therefore largest when little total precision has accumulated,
for example at short sequence prefixes, and becomes negligible when many precise
studies are available.

The expression $\frac{S_2}{S_1+1}$ requires the true $\tau^2$, because both $S_1$ and $S_2$ use $w_i=(\sigma_i^2+\tau^2)^{-1}$.
As the transformer is never given $\tau^2$, $S_2/(S_1+1)$ is \emph{not} the Bayes-optimal predictor directly available from
the model's observed inputs. The actual Bayes-optimal predictor must instead average over uncertainty about
the unknown heterogeneity: $\hat\mu_{\mathrm{Bayes}}
=
\mathbb E[
\mu
\mid
x_{1:K},\sigma^2_{1:K}
]$. Equivalently, $\hat\mu_{\mathrm{Bayes}} = \int
\frac{S_2(t)}{S_1(t)+1}
\,
p(t\mid x_{1:K},\sigma^2_{1:K})
\,dt$, where $t$ denotes a candidate value of the heterogeneity, $S_1(t) = \sum_{i=1}^K
\frac{1}{\sigma_i^2+t}$, and $S_2(t)
=
\sum_{i=1}^K
\frac{x_i}{\sigma_i^2+t}$.

The conditional quantity $\frac{S_2}{S_1+1}$ is nevertheless useful because the simulator knows the true $\tau^2$. We
therefore use it only as an \emph{oracle conditional-Bayes reference}: it tells us how well prediction could perform if the hidden heterogeneity were known. It is not supplied to the transformer and is not the model's available Bayes-optimal solution. See Appendix~\ref{app:bayes_posterior} for the full derivation.

\subsection{Distribution shift}
The total variance of study $i$ is $\sigma_i^2+\tau^2$. We summarize which source of uncertainty dominates using the sampling-variance share 
\[
r_i
=
\frac{\sigma_i^2}{\sigma_i^2+\tau^2}.
\]
This quantity lies between zero and one. When $r_i\approx0$,
most uncertainty comes from between-study heterogeneity. When $r_i\approx1$, most uncertainty comes from ordinary study-level sampling variance.

In the in-distribution regime, the median value is approximately $r_{\mathrm{ID}}=0.31$, so heterogeneity is usually the dominant contribution to total variance. In the OOD regime, which draws $s_i^2\sim\mathrm U(10,20)$, $n_i\sim\mathrm U\{2,\ldots,19\}$ and $\tau^2\sim\mathrm U(0,1.5)$ with all other draws as above, the median increases to $r_{\mathrm{OOD}}=0.71$, so sampling variance becomes the dominant contribution.

The distribution shift therefore changes the relative source of uncertainty governing the correct inverse-variance weights. A model that has learned a regime-specific shortcut,
accurate only when $\tau^2$ dominates, can consequently fail when
$\sigma_i^2$ becomes dominant, whereas a model implementing the appropriate
weighting computation should adapt its weights to the changed balance.
\subsection{Heterogeneity baseline}

The simulator knows $\tau^2$; the transformer does not. A classical baseline for estimating heterogeneity from the observed study estimates is the
DerSimonian--Laird estimator,
\[
\hat\tau^2_{\mathrm{DL}}
=
\max\left\{
0,\,
\frac{Q-(K-1)}{C}
\right\},
\]
where $Q$ measures excess dispersion of the study estimates around their
fixed-effect weighted mean and $C$ is a normalization determined by the
fixed-effect study weights.

Since $\hat\tau^2_{\mathrm{DL}}$ is computable from the model inputs, a probe that predicts $\tau^2$ may simply reproduce information already available through this
standard estimator. The full-information features used for the estimated reference points therefore include $\hat\tau^2_{\mathrm{DL}}$ (Appendix~\ref{app:endpoint-features}).

\subsection{Training and checkpoints}
\label{app:training}
Models are causal pre-norm transformers at five sizes, from 3.3k to 794k parameters (Table~\ref{tab:model-size-hyperparameters}), trained with mean-squared error at every prefix. Checkpoints are retained at epochs $\{0,1,2,5,10,20,50,100,\ldots\}$ for training-trajectory analyses. Training stops when validation MSE, averaged over all prefixes, is within $2\%$ of the oracle conditional-Bayes risk averaged the same way (Appendix~\ref{app:analytic_mse}), or after $300$ epochs without validation improvement; the oracle risk uses the true $\tau^2$ and is not attainable from the model's inputs. Of the 25 random-effects runs, 24 stopped on patience. Model predictions are closer to the oracle conditional posterior mean than to the oracle IVW estimator. Behavioral comparisons with the IVW and conditional-Bayes estimators are performed on held-out meta-analyses.

\begin{table*}[t]
\centering
\begin{tabular}{lrrrrrr}
\toprule
Model size & $d$ & $h$ & $L$ & Parameters & Peak learning rate & Epoch budget \\
\midrule
xs & 16  & 2 & 1 & 3,345   & $4\times10^{-4}$ & 3,300 \\
s  & 32  & 4 & 2 & 25,537  & $4\times10^{-4}$ & 3,300 \\
m  & 64  & 4 & 2 & 100,225 & $5\times10^{-5}$ & 3,000 \\
l  & 64  & 4 & 4 & 200,193 & $5\times10^{-5}$ & 3,500 \\
xl & 128 & 8 & 4 & 793,601 & $5\times10^{-5}$ & 3,500 \\
\bottomrule
\end{tabular}
\caption{Transformer model configurations. $d$ is the residual-stream width, $h$ the number of attention heads (head width $d/h$), and $L$ the number of transformer blocks; the block MLP has hidden width $4d$. Parameter counts include the input embedding, all blocks, and the scalar readout.}
\label{tab:model-size-hyperparameters}
\end{table*}

\section{Decoder bank}
\label{sec:probe-bank}

All reported regression-probe scores are computed using one canonical bank (Table~\ref{tab:canonical-probe-bank}) of candidate decoders. For the controlled meta-analysis, all probes and reference points are evaluated at the final position, using the final-token representation. Decoder type and hyperparameters are selected using the selection split based on $R^2$, after which the selected decoder is evaluated once on the  evaluation split. Every probe uses disjoint training, selection, and evaluation rows. Categorical decoders are chosen by McFadden $R^2$, but reported and normalized into $U_H$, by accuracy. We report standard $R^2$ on this held-out split and retain negative values rather than clipping them to zero.

\[
\mathcal{M}
=
\{
\text{GBM},
\text{shallow GBM},
\text{MLP},
\text{RidgeCV}
\}.
\]

\begin{table}[ht]
\centering
\small
\begin{tabular}{p{0.19\linewidth}p{0.72\linewidth}}
\toprule
Decoder & Fixed specification \\
\midrule
GBM &
\texttt{HistGradientBoostingRegressor} with 400 maximum iterations,
learning rate $0.06$, unrestricted depth, early stopping, and a
$0.15$ internal validation fraction. \\[2mm]

Shallow GBM &
\texttt{HistGradientBoostingRegressor} with 600 maximum iterations,
learning rate $0.04$, maximum depth $3$, early stopping, and a
$0.15$ internal validation fraction. \\[2mm]

MLP &
Standardization followed by an MLP with hidden widths
$(256,128)$, $L_2$ penalty $10^{-4}$, 600 maximum iterations,
early stopping, and patience 25. \\[2mm]

Linear &
Standardization followed by RidgeCV with $
\alpha\in\operatorname{logspace}(-6,4,21).$
\\
\bottomrule
\end{tabular}
\caption{The decoder bank. These definitions are fixed across
domains, representations, targets, and random seeds.}
\label{tab:canonical-probe-bank}
\end{table}

\section{Bayes posterior mean conditional on heterogeneity}
\label{app:bayes_posterior} 
Consider a prefix containing the first $k$ studies. Conditional on the overall effect $\mu$, the heterogeneity $\tau^2$, and the sampling variances $\sigma_{1:k}^2$, the reported effects are independent and satisfy $x_i\mid\mu,\tau^2,\sigma_i^2\sim\mathcal N(\mu,\sigma_i^2+\tau^2)$. Define the precision weights and their cumulative sufficient statistics by 
\[ w_i=\frac{1}{\sigma_i^2+\tau^2},\qquad S_1^{(k)}=\sum_{i=1}^{k}w_i,\qquad S_2^{(k)}=\sum_{i=1}^{k}w_i x_i. \]

Ignoring factors that do not depend on $\mu$, the likelihood is 
\[ p(x_{1:k}\mid\mu,\tau^2,\sigma_{1:k}^2)\propto\exp\!\left[-\frac12\sum_{i=1}^{k}w_i(x_i-\mu)^2\right]\propto\exp\!\left[-\frac12\left(S_1^{(k)}\mu^2-2S_2^{(k)}\mu\right)\right]. \] 

Combining this likelihood with the prior $\mu\sim\mathcal N(0,1)$ gives 

\[ \begin{aligned} p(\mu\mid x_{1:k},\tau^2,\sigma_{1:k}^2)
&\propto p(x_{1:k}\mid\mu,\tau^2,\sigma_{1:k}^2)\,p(\mu)\\ &\propto\exp\!\left[-\frac12\left(S_1^{(k)}\mu^2-2S_2^{(k)}\mu\right)\right]\exp\!\left(-\frac12\mu^2\right)\\ &=\exp\!\left[-\frac12\left(\left(S_1^{(k)}+1\right)\mu^2-2S_2^{(k)}\mu\right)\right]\\ 
&\propto\exp\!\left[-\frac12\left(S_1^{(k)}+1\right)\left(\mu-\frac{S_2^{(k)}}{S_1^{(k)}+1}\right)^2\right]. 
\end{aligned} \] 
Therefore, 
\[ \mu\mid x_{1:k},\tau^2,\sigma_{1:k}^2\sim\mathcal N\!\left(\frac{S_2^{(k)}}{S_1^{(k)}+1},\frac{1}{S_1^{(k)}+1}\right). \]

Under squared-error loss, the posterior mean uniquely minimizes the conditional expected loss. Hence, when $\tau^2$ is known, the Bayes estimator is \[ \hat\mu_{\mathrm B}^{(k)}(\tau^2)=\arg\min_{a}\mathbb E\!\left[(a-\mu)^2\mid x_{1:k},\tau^2,\sigma_{1:k}^2\right]=\mathbb E[\mu\mid x_{1:k},\tau^2,\sigma_{1:k}^2]=\frac{S_2^{(k)}}{S_1^{(k)}+1}. \] In the actual prediction task, $\tau^2$ is not observed. The fully Bayes-optimal estimator therefore averages this conditional posterior mean over the posterior distribution of $\tau^2$: \[ \hat\mu_{\mathrm{Bayes}}^{(k)}=\mathbb E[\mu\mid x_{1:k},\sigma_{1:k}^2]=\int_{0}^{0.5}\frac{S_2^{(k)}(t)}{S_1^{(k)}(t)+1}\,p(t\mid x_{1:k},\sigma_{1:k}^2)\,dt, \] where $S_1^{(k)}(t)=\sum_{i=1}^{k}(\sigma_i^2+t)^{-1}$ and $S_2^{(k)}(t)=\sum_{i=1}^{k}x_i(\sigma_i^2+t)^{-1}$. Thus, $S_2^{(k)}/(S_1^{(k)}+1)$ is an oracle Bayes reference that assumes access to the true heterogeneity, whereas the model must approximate the marginal posterior mean by inferring $\tau^2$ from the observed studies. We do not compute $\hat\mu_{\mathrm{Bayes}}$; all behavioral references in this paper are oracle quantities.

\section{Expected mean-squared error of the IVW and Bayes estimators}
\label{app:analytic_mse}  
We derive the expected mean-squared error of the oracle IVW and oracle conditional-Bayes estimators (both using the true $\tau^2$) derived in Appendix~\ref{app:bayes_posterior}. At a prefix containing the first $k$ studies, define $w_i=1/(\sigma_i^2+\tau^2)$, $S_1^{(k)}=\sum_{i=1}^{k}w_i$, and $S_2^{(k)}=\sum_{i=1}^{k}w_i x_i$. Conditional on $\mu$, $\tau^2$, and the reported sampling variances $\sigma_{1:k}^2$, each observation satisfies $x_i\sim\mathcal N(\mu,1/w_i)$, so $S_2^{(k)}\sim\mathcal N(S_1^{(k)}\mu,S_1^{(k)})$. Hence, the weighted sum can be written as 
\[ S_2^{(k)}=S_1^{(k)}\mu+\varepsilon_k,
\qquad \varepsilon_k\mid S_1^{(k)}\sim\mathcal N\!\left(0,S_1^{(k)}\right). \]

The inverse-variance-weighted estimator is 
\[ \hat\mu_{\mathrm{IVW}}^{(k)}:=\frac{S_2^{(k)}}{S_1^{(k)}}=\mu+\frac{\varepsilon_k}{S_1^{(k)}}. \] 

Its estimation error is therefore $\varepsilon_k/S_1^{(k)}$, and its conditional mean-squared error is

\[
\begin{aligned}
\mathbb E\!\left[\left(\hat\mu_{\mathrm{IVW}}^{(k)}-\mu\right)^2 \,\middle|\, S_1^{(k)}\right]
&=
\mathbb E\!\left[\left.\left(\frac{\varepsilon_k}{S_1^{(k)}}\right)^2 \,\right|\, S_1^{(k)}\right] \\
&=
\frac{1}{\left(S_1^{(k)}\right)^2}
\mathbb E\!\left[\varepsilon_k^2 \mid S_1^{(k)}\right] \\
&=
\frac{\operatorname{Var}\!\left(\varepsilon_k \mid S_1^{(k)}\right)}
{\left(S_1^{(k)}\right)^2} \\
&=
\frac{S_1^{(k)}}{\left(S_1^{(k)}\right)^2}
=
\frac{1}{S_1^{(k)}}.
\end{aligned}
\]

For the prior $\mu\sim\mathcal N(0,1)$, the Bayes posterior mean (from Appendix~\ref{app:bayes_posterior}) is 
\[ \hat\mu_{\mathrm B}^{(k)}=\frac{S_2^{(k)}}{S_1^{(k)}+1}. 
\] 
Substituting $S_2^{(k)}=S_1^{(k)}\mu+\varepsilon_k$ gives 
\[ \hat\mu_{\mathrm B}^{(k)}-\mu=\frac{\varepsilon_k-\mu}{S_1^{(k)}+1}. \] 
The variables $\varepsilon_k$ and $\mu$ are independent, with variances $S_1^{(k)}$ and $1$, respectively. Averaging over both the observation noise and the prior on $\mu$ therefore gives 
\[ \mathbb E\!\left[\left(\hat\mu_{\mathrm B}^{(k)}-\mu\right)^2\middle|S_1^{(k)}\right]=
\frac{\operatorname{Var}(\varepsilon_k)+\operatorname{Var}(\mu)}{(S_1^{(k)}+1)^2}=
\frac{S_1^{(k)}+1}{(S_1^{(k)}+1)^2}=
\frac{1}{S_1^{(k)}+1}. \] 

Thus, at every prefix, 
\[ \mathrm{MSE}_{\mathrm B}^{(k)}=
\frac{1}{S_1^{(k)}+1}<\frac{1}{S_1^{(k)}}=
\mathrm{MSE}_{\mathrm{IVW}}^{(k)}. \] 

The absolute improvement is 
\[ \mathrm{MSE}_{\mathrm{IVW}}^{(k)}-\mathrm{MSE}_{\mathrm B}^{(k)}=
\frac{1}{S_1^{(k)}(S_1^{(k)}+1)}, \] 

while the ratio of the Bayes risk to the IVW risk is 
\[ \frac{\mathrm{MSE}_{\mathrm B}^{(k)}}{\mathrm{MSE}_{\mathrm{IVW}}^{(k)}}=
\frac{S_1^{(k)}}{S_1^{(k)}+1}. 
\] 
The gain is therefore largest when only a few studies have been observed and $S_1^{(k)}$ is small, and it vanishes as the accumulated precision grows. To match the model's all-position training objective, we average these conditional risks over every prefix and every generated meta-analysis: \[ \overline{\mathrm{MSE}}_{\mathrm{IVW}}=\mathbb E_{\,\tau^2,\sigma_{1:K}^2,k}\!\left[\frac{1}{S_1^{(k)}}\right],\qquad \overline{\mathrm{MSE}}_{\mathrm B}=\mathbb E_{\,\tau^2,\sigma_{1:K}^2,k}\!\left[\frac{1}{S_1^{(k)}+1}\right]. \] Evaluating these expectations on the generated evaluation data gives oracle risks $\overline{\mathrm{MSE}}_{\mathrm{IVW}}=0.048$ and $\overline{\mathrm{MSE}}_{\mathrm B}=0.041$. The conditional-Bayes risk lower-bounds the risk attainable without $\tau^2$; the IVW risk does not, because the IVW estimator ignores the prior on $\mu$.

\section{Floor--ceiling derivations and reference-point computation}
\label{app:headroom}

This appendix derives the floor--ceiling normalization used in Section~\ref{sec:theory}. We first specialize the argument to the probe target $\log S_1$, then state the general headroom identity and the two mechanisms by which headroom vanishes, and finally describe the exact computation of the reference points in Appendix~\ref{sec:exact-endpoint-certificate}.

\paragraph{Task-specific definitions and bounds.}
\label{app:task_bounds}

One meta-analysis contains the reported effects and sampling variances
\[
X:=x_{1:K}, \qquad O:=\sigma^2_{1:K}, \qquad I:=(O,X).
\]
Thus, $O$ contains the directly reported variances, $X$ contains the additional evidence carried by the reported effects, and $I$ contains all observed information. The latent variables $(\mu,\tau^2)$ are drawn independently of $O$, with
\[
\tau^2\sim\mathrm U(0,\tau_{\max}), \qquad \mu\sim\mathcal N(0,1),
\]
with $\tau_{\max}=0.5$ in distribution and $\tau_{\max}=1.5$ out of distribution (Appendix~\ref{app:bayes}).
At prefix $k$, the probe target is
\[
T_k:=\log S_1^{(k)}=\log\sum_{i\leq k}\frac{1}{\sigma_i^2+\tau^2}.
\]
The target is therefore a deterministic function of $(O,\tau^2)$. Conditional on $O$, the reported effects $X$ are relevant to $T_k$ only through what they reveal about the latent heterogeneity $\tau^2$. Throughout, we write
\[
\operatorname{MMSE}(T\mid I):=\mathbb E[\operatorname{Var}(T\mid I)].
\]

\begin{proposition}[Probe ceiling]
\label{prop:ceiling}
Let $H=\Phi(I)$ be any representation constructed from the observed inputs, whether produced by a trained or untrained architecture, and let $h$ be any probe. The population $R^2$ of $h(H)$ for a target $T$ satisfies
\[
R^2\leq\rho_I^2:= 1-\frac{\mathbb E[\operatorname{Var}(T\mid I)]}{\operatorname{Var}(T)} = 1-\frac{\operatorname{MMSE}(T\mid I)}{\operatorname{Var}(T)}.
\]
Naturally, $\rho_I^2\leq 1$.
\end{proposition}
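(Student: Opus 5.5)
The plan is to reduce the claim to the standard fact that the conditional expectation $\mathbb E[T\mid I]$ minimizes mean-squared error among all $\sigma(I)$-measurable square-integrable predictors, and then to use measurability of $h(H)$ with respect to $I$. Throughout, the population $R^2$ of a predictor $\hat T$ means $1-\mathbb E[(T-\hat T)^2]/\operatorname{Var}(T)$, which is well defined because $T\in L^2$ is non-constant, so $\operatorname{Var}(T)>0$.

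\textbf{First step: measurability.} Since $H=\Phi(I)$ with $\Phi$ measurable and $h$ is a measurable probe, the composition $\hat T:=h(\Phi(I))$ is $\sigma(I)$-measurable. This holds whether $\Phi$ is trained or untrained, as long as it is deterministic given $I$. If $\mathbb E[\hat T^2]=\infty$, then the mean-squared error is infinite (because $T\in L^2$), so $R^2=-\infty$ and the bound holds trivially. We may therefore assume $\hat T\in L^2$.

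\textbf{Second step: the orthogonal decomposition.} Write $T-\hat T=(T-\mathbb E[T\mid I])+(\mathbb E[T\mid I]-\hat T)$. The second summand is $\sigma(I)$-measurable and in $L^2$. By the tower property, the cross term is
\[
\mathbb E\bigl[(T-\mathbb E[T\mid I])(\mathbb E[T\mid I]-\hat T)\bigr]=\mathbb E\bigl[(\mathbb E[T\mid I]-\hat T)\,\mathbb E[T-\mathbb E[T\mid I]\mid I]\bigr]=0.
\]
Hence
\[
\mathbb E[(T-\hat T)^2]=\mathbb E[\operatorname{Var}(T\mid I)]+\mathbb E[(\mathbb E[T\mid I]-\hat T)^2]\ge\operatorname{MMSE}(T\mid I).
\]
Dividing by $\operatorname{Var}(T)$ and subtracting from $1$ gives $R^2\le\rho_I^2$.

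\textbf{Third step: $\rho_I^2\le1$.} This follows immediately because $\mathbb E[\operatorname{Var}(T\mid I)]\ge0$.

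The only delicate point is the first step. One has to make sure that ``any probe'' is interpreted as a measurable map applied to a deterministic $H$, and one has to handle non-$L^2$ predictors. If $H$ carried extra randomness independent of $(T,I)$, the same argument would go through after conditioning on that randomness. I do not expect this to be needed here, given $H=\Phi(I)$.
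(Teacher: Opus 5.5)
Your proof is correct and follows the same route as the paper: $h(\Phi(I))$ is $\sigma(I)$-measurable, so its MSE is at least $\operatorname{MMSE}(T\mid I)$, and dividing by $\operatorname{Var}(T)$ gives $R^2\le\rho_I^2$. The only difference is that you prove the MSE-optimality of $\mathbb E[T\mid I]$ through the orthogonal decomposition and handle non-$L^2$ predictors explicitly, whereas the paper simply cites that optimality.
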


\begin{proof}
Because $H=\Phi(I)$, the prediction $h(H)$ cannot use information beyond $I$. The minimum mean-squared error attainable from $I$ is achieved by $\mathbb E[T\mid I]$. Therefore,
\[\operatorname{MSE}(h(H),T)\geq\operatorname{MMSE}(T\mid I),\]
and substitution into the definition of population $R^2$ gives the result.
\end{proof}

\begin{proposition}[Floor and headroom]
\label{prop:floor}
Define
\[
\rho_O^2:=1-\frac{\mathbb E[\operatorname{Var}(T\mid O)]}{\operatorname{Var}(T)} = 1-\frac{\operatorname{MMSE}(T\mid O)}{\operatorname{Var}(T)}.
\]
This is the population $R^2$ attained by the optimal predictor using only the reported sampling variances $O$ and the prior on $\tau^2$. In particular, it performs no inference about heterogeneity from the reported effects $X$. Every $O$-measurable predictor satisfies
\[
R^2\leq\rho_O^2.
\]
For any predictor based on the complete observations,
\[
R^2\leq\rho_O^2+\Delta, \qquad \Delta:=\rho_I^2-\rho_O^2\geq0.
\]

Since $O\subseteq I$, conditioning on $I$ cannot increase the Bayes MMSE, so $\rho_O^2\leq\rho_I^2\leq 1$ and therefore $\Delta\geq 0$.
\end{proposition}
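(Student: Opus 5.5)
The plan is to reduce every claim in the statement to the $L^2$ optimality of conditional expectation, arguing exactly as in Proposition~\ref{prop:ceiling}. Throughout, population $R^2$ means $R^2(g)=1-\mathbb E[(T-g)^2]/\operatorname{Var}(T)$ for a square-integrable predictor $g$; this is well defined because $T\in L^2$ is non-constant.

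\textbf{Step 1 (the floor is attained).} For any $O$-measurable $g(O)\in L^2$, I will expand around $\mathbb E[T\mid O]$:
\[
\mathbb E[(T-g)^2]=\mathbb E[\operatorname{Var}(T\mid O)]+\mathbb E\bigl[(\mathbb E[T\mid O]-g)^2\bigr].
\]
The cross term vanishes by the tower property, since $\mathbb E[T\mid O]-g$ is $O$-measurable. This identity gives $R^2\le\rho_O^2$, with equality at $g=\mathbb E[T\mid O]$. That equality case justifies calling $\rho_O^2$ the $R^2$ of the optimal $O$-predictor.

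The remark that this predictor "uses only $O$ and the prior on $\tau^2$" follows from the meta-analysis setup. Since $T=\log S_1$ is a function of $(O,\tau^2)$ and $\tau^2$ is independent of $O$, we get
\[
\mathbb E[T\mid O]=\int \log\sum_i(\sigma_i^2+t)^{-1}\,p(t)\,dt.
\]
This integral involves no $X$, so no inference from the reported effects takes place.

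\textbf{Step 2 (predictors using the complete observations).} For predictors built from all of $I$, I will invoke Proposition~\ref{prop:ceiling} directly to get $R^2\le\rho_I^2$. This can be rewritten as $\rho_O^2+(\rho_I^2-\rho_O^2)=\rho_O^2+\Delta$.

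\textbf{Step 3 (monotonicity, $\Delta\ge0$).} Since $\sigma(O)\subseteq\sigma(I)$, the predictor $\mathbb E[T\mid O]$ is itself $I$-measurable. Applying the Step~1 decomposition with $I$ in place of $O$, or equivalently the law of total variance conditional on $O$, gives
\[
\mathbb E[\operatorname{Var}(T\mid O)]=\mathbb E[\operatorname{Var}(T\mid I)]+\mathbb E\bigl[(\mathbb E[T\mid I]-\mathbb E[T\mid O])^2\bigr]\ge \mathbb E[\operatorname{Var}(T\mid I)].
\]
Dividing by $\operatorname{Var}(T)$ yields $\rho_O^2\le\rho_I^2$. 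Together with $\rho_I^2\le1$ from Proposition~\ref{prop:ceiling}, this gives the full chain. The identity above also exhibits $\Delta$ as the normalized variance of the Bayes update, which foreshadows Proposition~\ref{prop:identity}.

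\textbf{Expected obstacle.} None of these steps is deep. The only care needed is measure-theoretic bookkeeping: the orthogonality argument requires the predictors to lie in $L^2$, and the phrase "$O\subseteq I$" has to be read as inclusion of $\sigma$-algebras.
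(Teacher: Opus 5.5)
Your proposal is correct and follows the paper's proof. The first inequality comes from the $L^2$-optimality of $\mathbb E[T\mid O]$, the second from Proposition~\ref{prop:ceiling} rewritten via $\rho_I^2=\rho_O^2+\Delta$, and $\Delta\ge 0$ from the fact that conditioning on the larger $\sigma$-algebra cannot increase the MMSE. You spell out the orthogonal decomposition and the total-variance step that the paper states in one line, and your decomposition of $\Delta\operatorname{Var}(T)$ is exactly Proposition~\ref{prop:identity}, since $\mathbb E[\mathbb E[T\mid I]-\mathbb E[T\mid O]\mid O]=0$.
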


\begin{proof}
The conditional expectation $\mathbb E[T\mid O]$ minimizes MSE among all $O$-measurable predictors, giving the first inequality. The second follows from
\[
\rho_O^2+\Delta=\rho_I^2
\]
and Proposition~\ref{prop:ceiling}.
\end{proof}

\begin{remark}[Evidential meaning of the floor and headroom]
\label{rem:evidential_headroom}
The quantity $\rho_O^2$ is called a floor because it is a floor for evidential interpretation, not because every probe must attain it. A probe must exceed this threshold before its score can provide evidence of information derived from $X$ rather than $O$ alone.

The quantity $\Delta$ is the maximum amount by which a predictor can outperform the optimal predictor that ignores the information about $\tau^2$ contained in $X$. Consequently, only
\[
\max\{0,R^2-\rho_O^2\},
\]
which is at most $\Delta$, can provide evidence that the representation encodes heterogeneity inferred from the reported effects. Since $T$ is measurable with respect to $(O,\tau^2)$, the only relevance of $X$ to $T$ given $O$ is through what $X$ reveals about $\tau^2$.
\end{remark}

\begin{proposition}[Regime dependence of the headroom]
\label{prop:regime}
When sampling variance dominates, $T_k$ is almost entirely predictable from the observed variances alone, whereas when heterogeneity dominates, predicting $T_k$ increasingly requires information about the latent $\tau^2$. Let
\[
w_i:=\frac{1}{\sigma_i^2+\tau^2}, \qquad
S_1^{(k)}:=\sum_{i\leq k}w_i, \qquad
T_k:=\log S_1^{(k)}.
\]

\emph{(i) Sampling-variance-dominated regime ($\sigma_i^2\gg\tau^2$).}
For every $\tau^2\geq0$,
\[
\left|\frac{\partial T_k}{\partial\tau^2}\right|
\leq (\min_i\sigma_i^2)^{-1}.
\]
It follows that
\[
\mathbb E[\operatorname{Var}(T_k\mid O)]
\leq \frac{\tau_{\max}^2}{12}
\mathbb E\!\left[(\min_i\sigma_i^2)^{-2}\right].
\]
Therefore, along any sequence of regimes for which
\[
\frac{\tau_{\max}^2\mathbb E[(\min_i\sigma_i^2)^{-2}]}
{\operatorname{Var}(T_k)}
\to0,
\]
we have
\[
\rho_O^2\to1, \qquad
\Delta\to0, \qquad
\rho_I^2\to1.
\]

\emph{(ii) Heterogeneity-dominated regime ($\tau^2\gg\sigma_i^2$).}
At the final position $K$,
\[
\frac{K}{\tau^2+\max_i\sigma_i^2}
\leq S_1^{(K)}
\leq \frac{K}{\tau^2}.
\]
Hence,
\[
\left|T_K-(\log K-\log\tau^2)\right|
\leq \log\!\left(1+\frac{\max_i\sigma_i^2}{\tau^2}\right).
\]
Consider a sequence of regimes in which
\[
\frac{\max_i\sigma_i^2}{\tau^2}\to0
\]
in probability, and suppose that the squared upper bounds
\[
\left\{
\log^2\!\left(1+\frac{\max_i\sigma_i^2}{\tau^2}\right)
\right\}
\]
are uniformly integrable. Then
\[
T_K-(\log K-\log\tau^2)\to0
\]
in $L^2$. If $\operatorname{Var}(T_K)$ remains bounded away from zero, it follows that
\[
\rho_O^2\to0.
\]
Thus, in this limit, nearly all predictable variance in $T_K$ requires information about the latent heterogeneity.
\end{proposition}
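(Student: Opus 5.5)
The plan is to handle the two regimes separately. Part (i) rests on a Lipschitz bound for $T_k$ as a function of $\tau^2$ with $O$ held fixed. Part (ii) rests on a sandwich bound for $S_1^{(K)}$, followed by a perturbation argument on conditional standard deviations. Throughout I use two facts from the setup: $\tau^2$ is independent of $O$, and $T_k$ is a deterministic function of $(O,\tau^2)$.

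\textbf{Part (i).} I differentiate directly:
\[
\frac{\partial T_k}{\partial\tau^2}=-\frac{\sum_{i\le k}w_i^2}{\sum_{i\le k}w_i}.
\]
Since $w_i^2\le(\max_j w_j)\,w_i$, the absolute value is at most $\max_i w_i\le(\min_i\sigma_i^2)^{-1}$. Conditional on $O$, the map $t\mapsto T_k(O,t)$ is therefore Lipschitz with constant $L(O)=(\min_i\sigma_i^2)^{-1}$. For an independent copy $Z'$ of $Z=\tau^2$ (also independent of $O$),
\[
\operatorname{Var}(T_k\mid O)=\tfrac12\mathbb E\big[(T_k(O,Z)-T_k(O,Z'))^2\mid O\big]\le L(O)^2\operatorname{Var}(\tau^2)=L(O)^2\,\tau_{\max}^2/12 .
\]
Taking expectations gives the stated bound on $\mathbb E[\operatorname{Var}(T_k\mid O)]$. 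Under the stated ratio condition, $\mathbb E[\operatorname{Var}(T_k\mid O)]/\operatorname{Var}(T_k)\to0$, so $\rho_O^2\to1$. Proposition~\ref{prop:floor} gives $\rho_O^2\le\rho_I^2\le1$, which squeezes $\rho_I^2\to1$ and $\Delta\to0$.

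\textbf{Part (ii).} Each term satisfies $\tau^2\le\sigma_i^2+\tau^2\le\tau^2+\max_i\sigma_i^2$. Summing the reciprocals over $K$ terms gives the sandwich on $S_1^{(K)}$. Taking logs, $D:=T_K-(\log K-\log\tau^2)$ lies in $[-\log(1+\max_i\sigma_i^2/\tau^2),0]$, which is the stated bound. The bound tends to $0$ in probability and its square is uniformly integrable, so Vitali's theorem gives $D\to0$ in $L^2$.

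For the floor, write $T_K=\log K-\log\tau^2+D$, and set
\[
s:=\operatorname{sd}(\log\tau^2),\qquad d:=\|D\|_{L^2}.
\]
Because $\tau^2$ is independent of $O$, $\operatorname{Var}(\log\tau^2\mid O)=s^2$. I then apply Minkowski's inequality in $L^2$ to the centered pieces $T_K-\mathbb E[T_K\mid O]$, which yields
\[
\sqrt{\mathbb E[\operatorname{Var}(T_K\mid O)]}\ge s-d,
\]
and similarly $\operatorname{sd}(T_K)\le s+d$. Hence
\[
1-\rho_O^2\ge\frac{(s-d)^2}{(s+d)^2}
\]
whenever $d<s$. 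Since $\operatorname{Var}(T_K)$ is bounded away from zero, $s\ge\operatorname{sd}(T_K)-d$ is also eventually bounded away from zero. Combined with $d\to0$, the ratio tends to $1$, so $\rho_O^2\to0$; this holds even if $s$ is unbounded.

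\textbf{Main obstacle.} I expect the delicate step to be the final one in part (ii): turning $L^2$ smallness of $D$ into a statement about the normalized conditional variance. One has to be careful that the centering by $\mathbb E[\cdot\mid O]$ commutes with the decomposition. One also has to control both numerator and denominator, which is exactly where the assumption that $\operatorname{Var}(T_K)$ stays bounded away from zero enters. In part (i), the only subtlety is that the Lipschitz-variance inequality needs the independence of $\tau^2$ from $O$, so that conditioning on $O$ leaves the uniform law of $\tau^2$ intact.
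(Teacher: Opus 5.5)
Your proof is correct. Part (i) and the sandwich-plus-$L^2$ step of part (ii) follow the paper's proof essentially line for line. Your independent-copy identity $\operatorname{Var}(T_k\mid O)=\tfrac12\mathbb E[(T_k(O,Z)-T_k(O,Z'))^2\mid O]$ simply supplies a proof of the Lipschitz-variance inequality that the paper quotes without proof.

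The genuine difference is the last step of part (ii). The paper uses the explained-variance form of the floor, $\rho_O^2=\operatorname{Var}(\mathbb E[T_K\mid O])/\operatorname{Var}(T_K)$. Since $\tau^2$ is independent of $O$, $\mathbb E[\log K-\log\tau^2\mid O]$ is a constant. Hence $\operatorname{Var}(\mathbb E[T_K\mid O])=\operatorname{Var}(\mathbb E[D\mid O])\le\mathbb E[D^2]$ by conditional Jensen, which gives $\rho_O^2\le d^2/\operatorname{Var}(T_K)$ in one line. Only the lower bound on the denominator is used.

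You instead bound the unexplained fraction $1-\rho_O^2$ from below via the reverse triangle inequality. That forces you to introduce $s=\operatorname{sd}(\log\tau^2)$, control numerator and denominator separately, and argue that $s$ stays bounded away from zero. The result is $\rho_O^2\le 4sd/(s+d)^2\approx 4d/s$, which is first order in $d/s$. The paper's bound $d^2/\operatorname{Var}(T_K)\lesssim (d/s)^2$ is second order. Yours loses the quadratic rate because the triangle inequality ignores the orthogonality that the law of total variance exploits.

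What your route buys is interpretive: it shows explicitly that the residual variance given $O$ is asymptotically $\operatorname{Var}(\log\tau^2)$. In other words, what $O$ cannot explain is exactly the latent heterogeneity. The ``main obstacle'' you anticipated, controlling numerator and denominator simultaneously, disappears once you work with the explained variance as the paper does.
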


\begin{proof}
For part~(i), by definition,
\[
T_k=\log\sum_{i\leq k}w_i, \qquad
w_i=\frac{1}{\sigma_i^2+\tau^2}.
\]
Therefore,
\[
\left|\frac{\partial T_k}{\partial\tau^2}\right|
=
\left|
\frac{\partial\log\sum_{i\leq k}w_i}{\partial\tau^2}
\right|.
\]
Since
\[
\frac{\partial w_i}{\partial\tau^2}=-w_i^2
\qquad\text{and}\qquad
\frac{\partial\log x}{\partial x}=\frac{1}{x},
\]
we have
\[
\begin{aligned}
\left|\frac{\partial T_k}{\partial\tau^2}\right|
&=\frac{\sum_{i\leq k}w_i^2}{\sum_{i\leq k}w_i}\\
&=\sum_{i\leq k}\frac{w_i}{\sum_{j\leq k}w_j}w_i\\
&\leq\max_iw_i\\
&\leq(\min_i\sigma_i^2)^{-1}.
\end{aligned}
\]
The second expression is a weighted average of the $w_i$, and therefore cannot exceed their maximum. Moreover, since $\tau^2\geq0$,
\[
w_i=\frac{1}{\sigma_i^2+\tau^2}\leq\frac{1}{\sigma_i^2}.
\]

Once $O$ is fixed, the map $\tau^2\mapsto T_k$ is therefore Lipschitz with constant
\[
L(O)=(\min_i\sigma_i^2)^{-1}.
\]
For a function $f$ and random variable $V$, if $|f'(V)|\leq L$, then
\[
\operatorname{Var}(f(V))\leq L^2\operatorname{Var}(V).
\]
Therefore,
\[
\operatorname{Var}(T_k\mid O)
\leq (\min_i\sigma_i^2)^{-2}\operatorname{Var}(\tau^2\mid O).
\]
Since $\tau^2$ is independent of $O$ and
\[
\tau^2\sim\mathrm U(0,\tau_{\max}),
\]
we have
\[
\operatorname{Var}(\tau^2\mid O)
=\operatorname{Var}(\tau^2)
=\frac{\tau_{\max}^2}{12}.
\]
Hence,
\[
\operatorname{Var}(T_k\mid O)
\leq \frac{\tau_{\max}^2}{12}(\min_i\sigma_i^2)^{-2}.
\]
Taking expectations gives
\[
\mathbb E[\operatorname{Var}(T_k\mid O)]
\leq
\frac{\tau_{\max}^2}{12}
\mathbb E\!\left[(\min_i\sigma_i^2)^{-2}\right].
\]

By definition,
\[
\rho_O^2
=
1-
\frac{\mathbb E[\operatorname{Var}(T_k\mid O)]}
{\operatorname{Var}(T_k)}.
\]
Therefore,
\[
0\leq1-\rho_O^2
\leq
\frac{1}{12}
\frac{\tau_{\max}^2\mathbb E[(\min_i\sigma_i^2)^{-2}]}
{\operatorname{Var}(T_k)}.
\]
Under the stated condition,
\[
\frac{\tau_{\max}^2\mathbb E[(\min_i\sigma_i^2)^{-2}]}
{\operatorname{Var}(T_k)}
\to0,
\]
so
\[
\rho_O^2\to1.
\]
Since
\[
0\leq\Delta\leq1-\rho_O^2
\]
and
\[
\rho_O^2\leq\rho_I^2\leq1,
\]
it follows that
\[
\Delta\to0, \qquad \rho_I^2\to1.
\]

For part~(ii), at the final position,
\[
S_1^{(K)}
=
\sum_{i=1}^{K}\frac{1}{\sigma_i^2+\tau^2}.
\]
For every $i$,
\[
\tau^2
\leq \sigma_i^2+\tau^2
\leq \tau^2+\max_j\sigma_j^2.
\]
Taking reciprocals,
\[
\frac{1}{\tau^2+\max_j\sigma_j^2}
\leq
\frac{1}{\sigma_i^2+\tau^2}
\leq
\frac{1}{\tau^2}.
\]
Summing over $i$ gives
\[
\frac{K}{\tau^2+\max_i\sigma_i^2}
\leq
S_1^{(K)}
\leq
\frac{K}{\tau^2}.
\]
Since $\log$ is increasing and $T_K=\log S_1^{(K)}$,
\[
\log K-\log(\tau^2+\max_i\sigma_i^2)
\leq
T_K
\leq
\log K-\log\tau^2.
\]
Now,
\[
\log(\tau^2+\max_i\sigma_i^2)
=
\log\tau^2
+
\log\!\left(1+\frac{\max_i\sigma_i^2}{\tau^2}\right),
\]
and therefore
\[
\log K-\log\tau^2
-\log\!\left(1+\frac{\max_i\sigma_i^2}{\tau^2}\right)
\leq
T_K
\leq
\log K-\log\tau^2.
\]
Hence,
\[
\left|T_K-(\log K-\log\tau^2)\right|
\leq
\log\!\left(1+\frac{\max_i\sigma_i^2}{\tau^2}\right).
\]

If
\[
\frac{\max_i\sigma_i^2}{\tau^2}\to0
\]
in probability, then, by continuity of $\log(1+x)$,
\[
\log\!\left(1+\frac{\max_i\sigma_i^2}{\tau^2}\right)\to0
\]
in probability. The preceding bound therefore gives
\[
T_K-(\log K-\log\tau^2)\to0
\]
in probability.

Convergence in probability alone does not guarantee convergence of the mean squared error. The additional uniform-integrability assumption on
\[
\left\{
\log^2\!\left(1+\frac{\max_i\sigma_i^2}{\tau^2}\right)
\right\}
\]
rules out rare extreme values that would contribute a non-vanishing amount to the expected squared error. Since
\[
\left(T_K-(\log K-\log\tau^2)\right)^2
\leq
\log^2\!\left(1+\frac{\max_i\sigma_i^2}{\tau^2}\right),
\]
the two conditions together imply
\[
\mathbb E\!\left[
\left(T_K-(\log K-\log\tau^2)\right)^2
\right]
\to0.
\]
Equivalently,
\[
T_K-(\log K-\log\tau^2)\to0
\]
in $L^2$.

Let
\[
g(\tau^2):=\log K-\log\tau^2.
\]
Then
\[
T_K-g(\tau^2)\to0
\]
in $L^2$. Since $\tau^2$ is independent of $O$,
\[
\mathbb E[g(\tau^2)\mid O]
=
\mathbb E[g(\tau^2)].
\]
Therefore,
\[
\operatorname{Var}(\mathbb E[T_K\mid O])
=
\operatorname{Var}\!\left(
\mathbb E[T_K-g(\tau^2)\mid O]
\right).
\]
By conditional Jensen's inequality,
\[
\operatorname{Var}(\mathbb E[T_K\mid O])
\leq
\mathbb E\!\left[
(T_K-g(\tau^2))^2
\right]
\to0.
\]
Finally, by the law of total variance and Proposition~\ref{prop:floor},
\[
\rho_O^2
=
1-
\frac{\mathbb E[\operatorname{Var}(T_K\mid O)]}
{\operatorname{Var}(T_K)}
=
\frac{\operatorname{Var}(\mathbb E[T_K\mid O])}
{\operatorname{Var}(T_K)}.
\]
If $\operatorname{Var}(T_K)$ remains bounded away from zero, it follows that
\[
\rho_O^2\to0.
\]
Thus, in this limit, nearly all predictable variance in $T_K$ requires information about the latent heterogeneity.
\end{proof}

\begin{corollary}[Distribution shifts can produce a probe--performance dissociation]
\label{cor:dissociation}
Under a distribution shift from a regime with substantial heterogeneity-dependent headroom to a sampling-variance-dominated regime,
\[
\rho_O^2\to1, \qquad \Delta\to0.
\]
Consequently, probe $R^2$ for $T$ can rise because $O$ alone becomes more predictive, while the maximum amount by which any predictor can benefit from information about $\tau^2$ contained in $X$ shrinks to zero. Whether a particular network's probe score rises, and whether its prediction error increases, must be determined experimentally.
\end{corollary}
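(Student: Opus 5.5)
This corollary is essentially a rereading of Proposition~\ref{prop:regime}(i) together with Proposition~\ref{prop:floor}. The plan is to formalize ``a distribution shift to a sampling-variance-dominated regime'' as a sequence of data-generating distributions, indexed by $n$. Along this sequence the normalized sensitivity ratio
\[
\frac{\tau_{\max}^2\,\mathbb E[(\min_i\sigma_i^2)^{-2}]}{\operatorname{Var}(T_k)}
\]
tends to zero. This holds, for example, when $\sigma_i^2\gg\tau^2$ uniformly while $\operatorname{Var}(T_k)$ stays bounded below, since the $\sigma_i^2$ themselves still vary. The starting regime with substantial headroom is any member of the sequence with $\Delta>0$. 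Its existence is illustrated by Proposition~\ref{prop:regime}(ii), where $\rho_O^2\to0$ while $\rho_I^2$ can remain positive. The corollary does not require that limit; it only needs the endpoint of the shift.

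\textbf{First step.} Invoke Proposition~\ref{prop:regime}(i) directly along the sequence. The Lipschitz bound $|\partial T_k/\partial\tau^2|\le(\min_i\sigma_i^2)^{-1}$ and the independence of $\tau^2$ from $O$ give
\[
1-\rho_O^2\le \frac{\tau_{\max}^2\,\mathbb E[(\min_i\sigma_i^2)^{-2}]}{12\operatorname{Var}(T_k)}\to 0.
\]
Hence $\rho_O^2\to1$. Then $0\le\Delta=\rho_I^2-\rho_O^2\le 1-\rho_O^2$, using $\rho_I^2\le1$ from Proposition~\ref{prop:ceiling} and $\Delta\ge0$ from Proposition~\ref{prop:floor}. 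This yields $\Delta\to0$.

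\textbf{Second step: the interpretive consequences.} For the first claim, note that a probe on any $H=\Phi(I)$ that retains (a good approximation to) $\mathbb E[T\mid O]$ has population $R^2\ge\rho_O^2-\varepsilon$. Its score is therefore pushed up toward $1$ by the rising floor. This shows that probe $R^2$ \emph{can} rise, and it suffices to exhibit such a representation, e.g.\ $H=O$ itself. For the second claim, Proposition~\ref{prop:floor} and Remark~\ref{rem:evidential_headroom} bound the excess of any $I$-measurable predictor over the optimal $O$-only predictor by $\Delta$. By the same remark, that excess is the only part attributable to $\tau^2$-information in $X$, so it shrinks to zero. The final sentence, that an actual network's score and error must be measured, is a disclaimer rather than a mathematical claim. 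I would justify it by noting that the propositions constrain only population optima, not what a trained $\Phi$ computes: a network can fail to reach the floor, or its behavioral MSE for $\mu$ can rise because the target $\mu$ becomes harder. The latter follows from $\mathrm{MSE}_{\mathrm B}=1/(S_1+1)$ increasing as $S_1$ falls.

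\textbf{Main obstacle.} The delicate point is making ``shift to a sampling-variance-dominated regime'' precise enough that the ratio hypothesis of Proposition~\ref{prop:regime}(i) holds. Shrinking $\tau_{\max}$ relative to $\min_i\sigma_i^2$ must not also collapse $\operatorname{Var}(T_k)$, which would make the ratio degenerate. I would handle this by stating the corollary conditionally on that hypothesis, as the proposition does, and by remarking that $\operatorname{Var}(T_k)\ge\operatorname{Var}(\mathbb E[T_k\mid O])$ stays bounded below whenever the spread of $\log\sum_i\sigma_i^{-2}$ is bounded below.
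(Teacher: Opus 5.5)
Your proposal is correct and follows the paper's route. The paper gives no separate proof: the corollary is Proposition~\ref{prop:regime}(i) read along a sequence of regimes satisfying its ratio hypothesis. The bound $0\le\Delta\le 1-\rho_O^2$ (Propositions~\ref{prop:ceiling} and~\ref{prop:floor}) then gives $\Delta\to0$, and the remaining sentences are interpretive consequences of Proposition~\ref{prop:floor} and Remark~\ref{rem:evidential_headroom}.

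One small aside: Proposition~\ref{prop:regime}(ii) only gives $\rho_O^2\to0$ and says nothing about $\rho_I^2$ staying positive. As you note, though, the corollary does not need that.
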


\begin{proposition}[Ceilings for $\tau^2$]
\label{prop:tau}
Any probe for $\tau^2$ satisfies
\[
R^2\leq\rho_I^2(\tau^2)
:=
1-\frac{\operatorname{MMSE}(\tau^2\mid I)}
{\operatorname{Var}(\tau^2)}.
\]
\end{proposition}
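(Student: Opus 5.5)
This is Proposition~\ref{prop:ceiling} applied with the target $T=\tau^2$, so the plan is to verify its hypotheses and substitute. First, $\tau^2\sim\mathrm U(0,\tau_{\max})$ is bounded, so $\tau^2\in L^2$. It is non-constant, with $\operatorname{Var}(\tau^2)=\tau_{\max}^2/12>0$, so the ratio in the definition of $\rho_I^2(\tau^2)$ is well defined. Second, any probe for $\tau^2$ acts on a representation $H=\Phi(I)$ of the observed inputs $I=(O,X)$. The probe output $h(H)$ is therefore $I$-measurable, and this is exactly the setting of Proposition~\ref{prop:ceiling}.

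The core step is the $L^2$ projection argument. Among all $I$-measurable predictors, $\mathbb E[\tau^2\mid I]$ minimizes mean-squared error. To see this, decompose
\[
\mathbb E[(\tau^2-g(I))^2]=\mathbb E[\operatorname{Var}(\tau^2\mid I)]+\mathbb E[(\mathbb E[\tau^2\mid I]-g(I))^2].
\]
The cross term vanishes by the tower property. Hence $\operatorname{MSE}(h(H),\tau^2)\ge\operatorname{MMSE}(\tau^2\mid I)$. Dividing by $\operatorname{Var}(\tau^2)$ and subtracting from one then gives $R^2\le\rho_I^2(\tau^2)$, with population $R^2$ defined as $1-\operatorname{MSE}/\operatorname{Var}(T)$.

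As a remark I would also record the floor. Since $\tau^2$ is independent of $O$, we have $\mathbb E[\tau^2\mid O]=\mathbb E[\tau^2]$, so $\rho_O^2(\tau^2)=0$ and the headroom for $\tau^2$ equals its ceiling. This matches the claim in Section~\ref{sec:meta}.

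The only potential obstacle is measure-theoretic. A probe might be randomized, for example through seeds or training randomness. I would handle this by conditioning on the probe's internal randomness, which is independent of $(I,\tau^2)$, applying the bound for each fixed realization, and averaging. Alternatively, one can enlarge the $\sigma$-algebra and note that the independent noise does not change $\mathbb E[\tau^2\mid I,\text{noise}]$.

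The bound also holds for finite-sample held-out $R^2$ only in expectation and asymptotically, so the statement should be read at the population level, as in Proposition~\ref{prop:ceiling}.
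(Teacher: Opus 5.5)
Your proposal is correct and follows the paper's proof, which is simply Proposition~\ref{prop:ceiling} applied with target $T=\tau^2$. Your extra steps make explicit what the paper leaves implicit: checking that $\tau^2$ is bounded and non-constant, re-deriving the $L^2$ projection bound, handling randomized probes, and noting that $\rho_O^2(\tau^2)=0$.
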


\begin{proof}
This is Proposition~\ref{prop:ceiling} applied with target $\tau^2$.\end{proof}

\begin{remark}[Why identification of $\tau^2$ deteriorates under the shift]
\label{rem:fisher}
If $\mu$ were known, one study would contribute Fisher information
\[
\frac{1}{2(\sigma_i^2+\tau^2)^2}
\]
about $\tau^2$. Thus, scaling the total variance $\sigma_i^2+\tau^2$ by a factor $c$ would reduce this per-study information by a factor of $c^2$. Since typical total variance is substantially larger under the OOD regime, this scaling provides an intuitive explanation for the reduced identifiability of $\tau^2$.\end{remark}

\section{The general headroom identity}
\label{app:general_headroom}

The task-specific argument is an instance of a more general principle. Let $Z$ be a latent variable, let $T=f(O,Z)$ be a square-integrable target, let $X$ denote additional evidence that may reveal $Z$, and let $I:=(O,X)$. Define
\[
\rho_O^2:=1-\frac{\mathbb E[\operatorname{Var}(T\mid O)]}{\operatorname{Var}(T)}, \qquad
\rho_I^2:=1-\frac{\mathbb E[\operatorname{Var}(T\mid I)]}{\operatorname{Var}(T)}, \qquad
\Delta:=\rho_I^2-\rho_O^2,
\]

and define the Bayes update (change in the Bayesian prediction of $T$ after seeing the additional evidence $X$)
\[
B(I,O):=\mathbb E[T\mid I]-\mathbb E[T\mid O].
\]

\begin{proposition}[Headroom identity]
\label{prop:identity}
The headroom satisfies
\[
\Delta\,\operatorname{Var}(T)
=
\mathbb E\!\left[
\operatorname{Var}\!\left(B(I,O)\mid O\right)
\right].
\]
Thus, the headroom is the conditional variance of the Bayes update about $T$ supplied by $X$ beyond the information already available in $O$.
\end{proposition}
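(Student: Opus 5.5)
The plan is to express both reference points through the law of total variance and then use the tower property along the nested $\sigma$-algebras $\sigma(O)\subseteq\sigma(I)$. Since $T\in L^2$, the definitions of $\rho_O^2$ and $\rho_I^2$ rewrite $\Delta\operatorname{Var}(T)$ as a difference of two expected conditional variances:
\[
\Delta\,\operatorname{Var}(T)=\mathbb E[\operatorname{Var}(T\mid O)]-\mathbb E[\operatorname{Var}(T\mid I)].
\]
So it suffices to show that this difference equals $\mathbb E[\operatorname{Var}(B(I,O)\mid O)]$.

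Write $m_O:=\mathbb E[T\mid O]$ and $m_I:=\mathbb E[T\mid I]$, so that $B=m_I-m_O$. The first step is the conditional law of total variance, applied given $O$ with the finer conditioning on $I$:
\[
\operatorname{Var}(T\mid O)=\mathbb E[\operatorname{Var}(T\mid I)\mid O]+\operatorname{Var}(m_I\mid O).
\]
This holds because $\sigma(O)\subseteq\sigma(I)$ gives $\mathbb E[m_I\mid O]=m_O$ by the tower property. Taking expectations and using the tower property once more yields
\[
\mathbb E[\operatorname{Var}(T\mid O)]-\mathbb E[\operatorname{Var}(T\mid I)]=\mathbb E[\operatorname{Var}(m_I\mid O)].
\]

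The last step is to note that $m_O$ is $O$-measurable. Subtracting it therefore leaves the conditional variance given $O$ unchanged:
\[
\operatorname{Var}(m_I\mid O)=\operatorname{Var}(m_I-m_O\mid O)=\operatorname{Var}(B\mid O).
\]
This gives the identity. In fact $\mathbb E[B\mid O]=0$, so the right-hand side also equals $\mathbb E[B^2]$, and it is immediate that $\Delta\ge 0$.

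There is no serious obstacle. The only care needed is in justifying the conditional law of total variance, and the orthogonality it rests on, namely that the cross term $\mathbb E[(T-m_I)(m_I-m_O)\mid O]$ vanishes. I would verify this directly: $m_I-m_O$ is $I$-measurable and lies in $L^2$, because conditional expectation contracts in $L^2$. Hence $\mathbb E[(T-m_I)(m_I-m_O)\mid I]=(m_I-m_O)\,\mathbb E[T-m_I\mid I]=0$, and conditioning on $O$ preserves the zero.

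The structural assumption $T=f(O,Z)$ is not needed for the identity itself. Only square-integrability of $T$ and the nesting of $\sigma(O)$ in $\sigma(I)$ are used.
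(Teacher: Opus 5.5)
Your proposal is correct and follows essentially the same route as the paper: rewrite $\Delta\operatorname{Var}(T)$ as $\mathbb E[\operatorname{Var}(T\mid O)]-\mathbb E[\operatorname{Var}(T\mid I)]$, apply the conditional law of total variance given $O$, take expectations, and subtract the $O$-measurable $\mathbb E[T\mid O]$. Your extra checks are correct additions to the paper's argument. These are the explicit cross-term orthogonality, the observation $\mathbb E[B\mid O]=0$ (so the right-hand side equals $\mathbb E[B^2]$), and the remark that only $T\in L^2$ and $\sigma(O)\subseteq\sigma(I)$ are used.
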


\begin{proof}
By definition,
\[
\Delta\,\operatorname{Var}(T)
=
\mathbb E[\operatorname{Var}(T\mid O)]
-
\mathbb E[\operatorname{Var}(T\mid I)].
\]
Conditional total variance gives
\[
\operatorname{Var}(T\mid O)
=
\mathbb E[\operatorname{Var}(T\mid I)\mid O]
+
\operatorname{Var}(\mathbb E[T\mid I]\mid O).
\]
Taking expectations yields
\[
\mathbb E[\operatorname{Var}(T\mid O)]
=
\mathbb E[\operatorname{Var}(T\mid I)] + \mathbb E\!\left[
\operatorname{Var}\!\left(\mathbb E[T\mid I]\mid O\right)
\right].
\]

Hence,
\[
\Delta\,\operatorname{Var}(T)
=
\mathbb E\!\left[
\operatorname{Var}\!\left(\mathbb E[T\mid I]\mid O\right)
\right].
\]

Since $\mathbb E[T\mid O]$ is constant conditional on $O$, subtracting it does not change the conditional variance, giving the result.
\end{proof}

\begin{corollary}[Two mechanisms for vanishing headroom]
\label{cor:channels}
For each fixed $O$, suppose that $z\mapsto f(O,z)$ is Lipschitz continuous with Lipschitz constant $L(O)$,
\[
|f(O,z)-f(O,z')|\leq L(O)|z-z'|,
\]
and has oscillation bounded by $M$,
\[
\operatorname{osc}_z f(O,\cdot)
:=
\sup_zf(O,z)-\inf_zf(O,z)
\leq M.
\]
Then the headroom can vanish through either of two mechanisms:
\[
\Delta
\leq
\frac{\mathbb E[L(O)^2\operatorname{Var}(Z\mid O)]}
{\operatorname{Var}(T)} \qquad {\text{weak target sensitivity}}
\] 
when the target $T=f(O,Z)$ becomes insensitive to the latent $Z$, and
\[
\Delta
\leq
\frac{M^2\,\operatorname{MI}(X;Z\mid O)}
{2\operatorname{Var}(T)} \quad {\text{weak identification}},
\]
when the additional evidence $X$ contains little information about $Z$ beyond $O$, where $\operatorname{MI}$ denotes conditional mutual information in nats. The first inequality describes \emph{weak target sensitivity}, in which the target becomes insensitive to the latent variable. The second describes \emph{weak identification}, in which the additional evidence becomes uninformative about the latent variable given $O$.
\end{corollary}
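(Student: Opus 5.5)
Both bounds start from the headroom identity (Proposition~\ref{prop:identity}), which gives
\[
\Delta\,\operatorname{Var}(T)=\mathbb E\!\left[\operatorname{Var}\!\left(\mathbb E[T\mid I]\mid O\right)\right].
\]
Each mechanism then comes from bounding the conditional variance of the Bayes update $\mathbb E[T\mid I]=\mathbb E[f(O,Z)\mid O,X]$ given $O$, in two different ways.

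\emph{Weak target sensitivity.} First I would use the law of total variance conditional on $O$:
\[
\operatorname{Var}(\mathbb E[T\mid I]\mid O)\le \operatorname{Var}(T\mid O)=\operatorname{Var}(f(O,Z)\mid O).
\]
This is equivalent to $\Delta\le 1-\rho_O^2$ in Proposition~\ref{prop:floor}.

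Next, for fixed $O$, let $Z'$ be an independent copy of $Z$ drawn from the conditional law of $Z$ given $O$. Then
\[
\operatorname{Var}(f(O,Z)\mid O)=\tfrac12\mathbb E[(f(O,Z)-f(O,Z'))^2\mid O]\le \tfrac12 L(O)^2\,\mathbb E[(Z-Z')^2\mid O]=L(O)^2\operatorname{Var}(Z\mid O).
\]
This route uses the Lipschitz assumption directly, with no differentiability needed, unlike the derivative argument in Proposition~\ref{prop:regime}. Taking expectations over $O$ and dividing by $\operatorname{Var}(T)$ gives the first bound.

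\emph{Weak identification.} Fix $O=o$, and let $\pi_o$ be the law of $Z$ given $O=o$ and $\pi_{o,x}$ the law of $Z$ given $(O,X)=(o,x)$. Then
\[
\mathbb E[T\mid O=o,X=x]-\mathbb E[T\mid O=o]=\int f(o,z)\,(d\pi_{o,x}-d\pi_o)(z).
\]
The function $z\mapsto f(o,z)$ has oscillation at most $M$. Subtracting its midrange, which does not change the integral against a difference of probability measures, gives the total-variation bound
\[
|B|\le M\,\mathrm{TV}(\pi_{o,x},\pi_o).
\]
Pinsker's inequality then gives
\[
B^2\le M^2\cdot\tfrac12\,\mathrm{KL}(\pi_{o,x}\Vert\pi_o).
\]
Since the constant $\mathbb E[T\mid O]$ has been subtracted, the conditional variance is bounded by the conditional second moment:
\[
\operatorname{Var}(B\mid O=o)\le \mathbb E[B^2\mid O=o]\le \tfrac{M^2}{2}\,\mathbb E_{X\mid o}\!\left[\mathrm{KL}(\pi_{o,X}\Vert\pi_o)\right]=\tfrac{M^2}{2}\operatorname{MI}(X;Z\mid O=o).
\]
The last equality is the standard representation of mutual information as expected posterior-to-prior KL divergence. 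Averaging over $O$ and dividing by $\operatorname{Var}(T)$ gives the second bound.

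The main obstacle is the identification step. The total-variation bound needs the correct oscillation constant: with the midrange centering the constant is $M/2$ times $2\,\mathrm{TV}$, that is, $M\cdot\mathrm{TV}$. One must also keep the Pinsker normalization straight, $\mathrm{TV}\le\sqrt{\mathrm{KL}/2}$ with KL in nats, so that the factor $1/2$ comes out as stated.

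A measure-theoretic caveat is also needed: regular conditional distributions must exist. This holds if $Z$ takes values in a standard Borel space, which I will assume. The Lipschitz step should be routine, and it is the only place the assumption $\operatorname{Var}(Z\mid O)<\infty$ is used; if that variance is infinite, the first bound is trivially true.
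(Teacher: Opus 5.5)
Your proposal is correct and matches the paper's proof. Both start from the headroom identity. The first bound then uses the conditional law of total variance plus the Lipschitz variance bound. The second uses $|B|\le M\,\operatorname{TV}(\pi_{Z\mid I},\pi_{Z\mid O})$, the bound of $\operatorname{Var}(B\mid O)$ by $\mathbb E[B^2\mid O]$, Pinsker's inequality, and the expected-KL form of $\operatorname{MI}(X;Z\mid O)$. Your independent-copy argument for $\operatorname{Var}(f(O,Z)\mid O)\le L(O)^2\operatorname{Var}(Z\mid O)$ and your midrange centering for the total-variation constant only fill in steps the paper asserts without detail.
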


\begin{proof}
(i) \textit{Weak target sensitivity:} By the conditional law of total variance (proof of Proposition~\ref{prop:identity}),
\[
\operatorname{Var}(\mathbb E[T\mid I]\mid O)
\leq
\operatorname{Var}(T\mid O),
\]
since the variance of the conditional mean cannot exceed the variance of $T$. Since $T=f(O,Z)$, holding $O$ fixed gives $\operatorname{Var}(T\mid O)=\operatorname{Var}(f(O,Z)\mid O)$. By the Lipschitz assumption,
\[
\operatorname{Var}(f(O,Z)\mid O)
\leq
L(O)^2\operatorname{Var}(Z\mid O).
\]
Therefore, 
\[
\operatorname{Var}(\mathbb E[T\mid I]\mid O)
\leq
L(O)^2\operatorname{Var}(Z\mid O).
\]
Taking expectations and applying Proposition~\ref{prop:identity} gives the first inequality.
\[
\Delta
\leq
\frac{\mathbb E[L(O)^2\operatorname{Var}(Z\mid O)]}
{\operatorname{Var}(T)}.
\] 
Intuitively, if $T$ barely changes when $Z$ changes, learning $Z$ cannot help very much.

(ii) \textit{Weak identification:} Let $\pi_{Z\mid I}$ and $\pi_{Z\mid O}$ denote the posterior beliefs of $Z$ given $I$ and $O$. The oscillation bound gives
\[
|B(I,O)|
\leq
M\,\operatorname{TV}(\pi_{Z\mid I},\pi_{Z\mid O}).
\]
Therefore,
\[
\mathbb E[\operatorname{Var}(B(I,O)\mid O)]
\leq
\mathbb E[B(I,O)^2]
\leq
M^2\mathbb E[
\operatorname{TV}^2(\pi_{Z\mid I},\pi_{Z\mid O})
].
\]
Pinsker's inequality gives
\[
\operatorname{TV}^2(\pi_{Z\mid I},\pi_{Z\mid O})
\leq
\frac{1}{2}
\operatorname{KL}(\pi_{Z\mid I}\Vert\pi_{Z\mid O}),
\]
whose expectation is $\operatorname{MI}(X;Z\mid O)$. Applying Proposition~\ref{prop:identity} completes the proof.
\end{proof}

\section{Synthetic single-cell generative modeling}
\label{app:sc_gen_model}
To test whether the estimation procedure generalizes beyond the controlled meta-analysis setting, we construct a second completely defined generative model based on single-cell RNA sequencing. Because the data-generating process is known, the reference points can be computed exactly and compared with their estimates (Appendix~\ref{sec:exact-endpoint-certificate}).

\paragraph{Generative model.}
Each example is one gene $g$ observed in $K=40$ cells, indexed by $i=1,\dots,K$. The draws are
\begin{table}
\centering
\begin{tabular}{@{}ll@{}}
\toprule
Draw & Meaning \\
\midrule
$\log\lambda_g\sim\mathcal N(\mu_\lambda,\sigma_\lambda^2)$ & mean expression \\
$b_g\sim\mathrm U(0,b_{\max})$ & overdispersion (latent $Z$) \\
$\log d_{gi}\sim\mathcal N(\mu_d,\sigma_d^2)$ & sequencing depth of cell $i$ \\
$\theta_{gi}\sim\mathrm{Gamma}\big(\mathrm{shape}=\lambda_g/b_g,\ \mathrm{scale}=b_g\big)$ & true expression rate \\
$y_{gi}\mid\theta_{gi},d_{gi}\sim\mathrm{Poisson}(d_{gi}\theta_{gi})$ & observed count \\
\bottomrule
\end{tabular}
\end{table}

with $\mu_\lambda=-4.5$, $\sigma_\lambda=1$, $b_{\max}=0.05$, $\mu_d=6.9$ and $\sigma_d=0.7$. The draws of $\lambda_g$, $b_g$ and $d_{g,1:K}$ are mutually independent, and the depths are independent across genes and cells. (In the simulation the lower end of the prior on $b_g$ is $10^{-6}$ rather than $0$, which keeps the Gamma shape finite.) Each seed contains $1500$ genes, and all results use five seeds. At these values the median depth is $e^{6.9}\approx 990$ and the median expected count $d_{gi}\lambda_g$ is $e^{2.4}\approx 11$.

The Gamma draw gives $\mathbb E[\theta_{gi}\mid\lambda_g,b_g]=\lambda_g$ and $\operatorname{Var}(\theta_{gi}\mid\lambda_g,b_g)=\lambda_g b_g$, so true expression rate $\theta_{gi}$ differs from mean expression $\lambda_g$ because of biological cell-to-cell variation ($b_g$), while observed count $y_{gi}$ adds technical counting noise from the finite depth $d_{gi}$. Averaging the Poisson likelihood over $\theta_{gi}$ gives a negative-binomial count,
\[
y_{gi}\mid\lambda_g,b_g,d_{gi}\sim\mathrm{NB}\!\left(\mathrm{mean}=d_{gi}\lambda_g,\ \mathrm{size}=\frac{\lambda_g}{b_g}\right),
\]
with $\operatorname{Var}(y_{gi}\mid\lambda_g,b_g,d_{gi})=d_{gi}\lambda_g+b_g d_{gi}^2\lambda_g$. The normalized count $r_{gi}=y_{gi}/d_{gi}$ therefore has
\[
\operatorname{Var}(r_{gi}\mid\lambda_g,b_g,d_{gi})=\lambda_g\left(\frac{1}{d_{gi}}+b_g\right),
\]
the sum of a technical component $1/d_{gi}$ and a biological component $b_g$. This is the variance structure of the meta-analysis task,
\[
\sigma_i^2 \leftrightarrow \frac{1}{d_{gi}},
\qquad
\tau^2 \leftrightarrow b_g.
\]

\paragraph{Target and information sets.}
For known biological overdispersion $b_g$, cell $i$ has precision $w_{gi}=(d_{gi}^{-1}+b_g)^{-1}$. The probe target is
\[
T_g=\log\sum_{i=1}^{K}\frac{1}{d_{gi}^{-1}+b_g},
\]
the analogue of $\log S_1$ with $d_{gi}^{-1}$ in place of $\sigma_i^2$ and $b_g$ in place of $\tau^2$. The restricted information is the depths alone, $O_g=d_{g,1:K}$; the additional evidence is the observed counts, $X_g=y_{g,1:K}$; and the full information is $I_g=(y_{g,1:K},d_{g,1:K})$. Because $b_g$ is independent of $d_{g,1:K}$, the floor $\rho_O^2$ comes only from how $T_g$ varies with the depths, averaged over the prior on $b_g$; inferring $b_g$ requires the counts. The floor $\rho^2_O$ and ceiling $\rho^2_I$ are the population reference points of Section~\ref{sec:theory} (Propositions~\ref{prop:ceiling} and~\ref{prop:floor}).

\paragraph{Binomial thinning.}
\label{app:sc_thinning}
Retaining a fraction $p\in(0,1]$ of the counts gives $y'_{gi}\sim\mathrm{Bin}(y_{gi},p)$ and $d'_{gi}=p\,d_{gi}$. Because binomial thinning of $\mathrm{Poisson}(d_{gi}\theta_{gi})$ is $\mathrm{Poisson}(p\,d_{gi}\theta_{gi})$, the thinned data follow the same model with depths $p\,d_{gi}$ (so $\log d'_{gi}\sim\mathcal N(\mu_d+\log p,\sigma_d^2)$).

\subsection{Separating the two mechanisms of vanishing headroom} 
\label{app:sepparating}
(Figure~\ref{fig:collapse}). Sequencing depth affects headroom in two ways
at once: shallower data give fewer counts from which to infer the dispersion $b_g$ (weaker identification), and
also make technical noise $1/d_{gi}$ larger relative to $b_g$ in the target, so that $T_g$ depends less on $b_g$
(weaker sensitivity). In real data a single depth $p$ moves both. In the synthetic model we vary them separately
with two depth fractions:
\begin{itemize}
  \item the \emph{evidence fraction} $p_X$ thins only the counts the ceiling sees: the count-based features of
        $\phi_I$ are computed from thinned counts $y'_{gi}\sim\mathrm{Bin}(y_{gi},p_X)$ with depths $p_X d_{gi}$;
  \item the \emph{target fraction} $p_T$ scales only the depths in the target,
        $T_g=\log\sum_i\big((p_T d_{gi})^{-1}+b_g\big)^{-1}$.
\end{itemize}
The floor features $\phi_O$ always use the full depths $d_{g,1:K}$; since $p_T$ is a known constant, they carry
the same information as $p_T d_{g,1:K}$. Both fractions run on a log grid, $p_X\in\{1,\dots,10^{-2}\}$ (4 values)
and $p_T\in\{1,\dots,10^{-4}\}$ (7 values), in steps of $10^{-2/3}$. For scale, at full depth the median technical
variance $1/d_{gi}\approx 0.001$ is small next to $b_g\le 0.05$, whereas at $p=0.01$ it is $\approx 0.1$ and
dominates $b_g$. Each grid value is the mean over five seeds of the estimated headroom
$\hat\rho^2_I-\hat\rho^2_O$, both fitted by the decoder bank on the grid's own 50/20/30 split of the genes.
At full depth this gives 0.87, against 0.89 in Table~\ref{tab:cross-domain-calibration-legacy}, which uses a
different split, and the exact 0.880 (Table~\ref{tab:exact-endpoints}).

\subsection{Real data.}
The four real 10x cohorts \citep{10xgenomics2021pbmc, 10xgenomics2024pbmcdonors, 10xgenomics2022tumors} are cast in the same form. We keep the 2000 most frequently detected genes among those detected in at least 50 cells. The sequencing depth of gene $g$ in cell $i$ is $d_{gi}=\ell_i\,f_g$, where $\ell_i$ is the library size of cell $i$ and $f_g$ the fraction of all counts that belong to gene $g$. The cells are split in half at random: biological overdispersion parameter $b_g$ is replaced by its method-of-moments estimate $\hat b_{\mathrm{MoM}}$ (Appendix~\ref{app:endpoint-features}) from the first half, and the target, $O_g$, $I_g$, the features and the scGPT input use the second half, so the ceiling cannot recover its own target estimate. $K$ is the number of cells in the second half. The features are those of the synthetic model, with $b_{\max}$ set to $1.1$ times the largest $\hat b_{\mathrm{MoM}}$ among training genes.

\section{Exact reference points and validation of the estimation procedure}
\label{sec:exact-endpoint-certificate}

\begin{figure}[!htbp]
\centering
\includegraphics[width=0.55\linewidth]{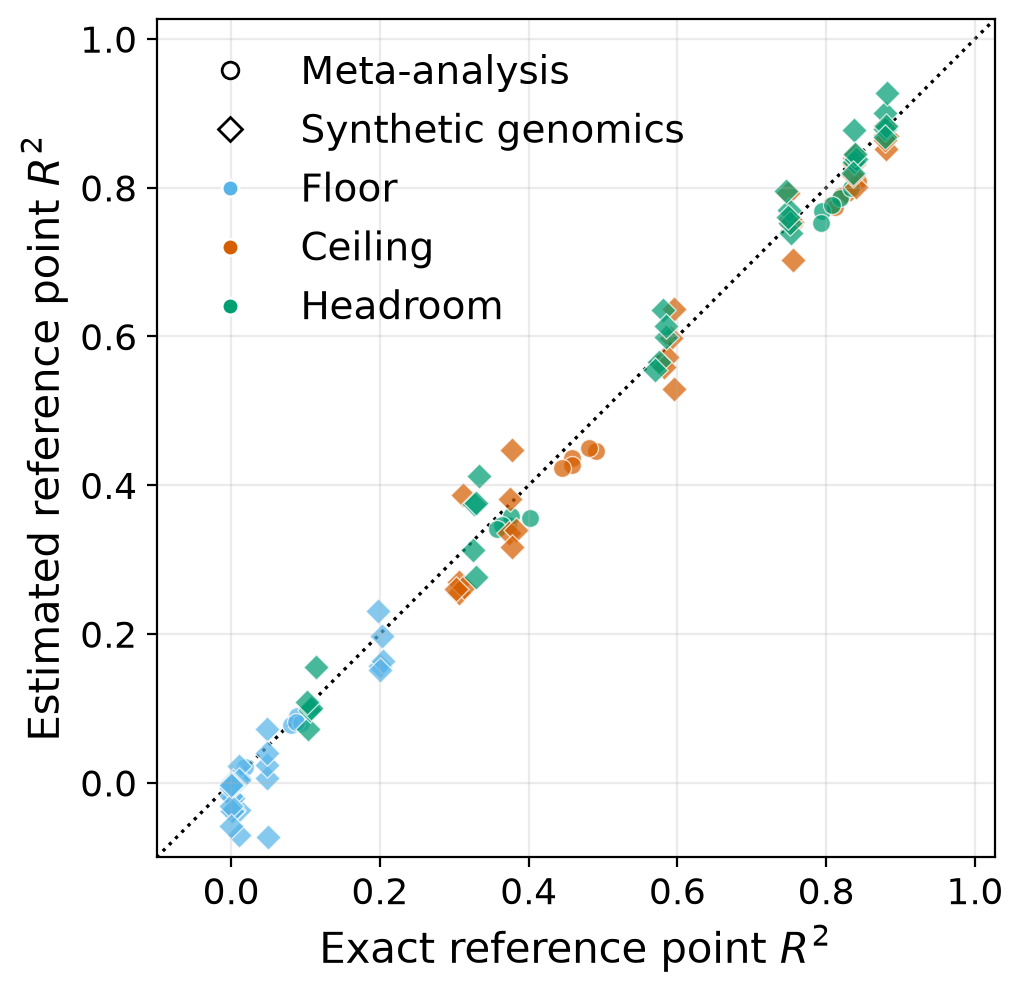}
\caption{\textbf{The estimation procedure recovers exact reference points.} Each marker compares an estimated reference point (decoder bank; vertical axis) with its exact value (horizontal axis), for the meta-analysis (circles) and the synthetic single-cell model (diamonds). Points lie close to the identity line (dotted) across regimes and reference-point types. Slightly negative estimated floors are held-out $R^2$ values where the exact floor is near zero, not negative population floors.}
\label{fig:exact-vs-empirical}
\end{figure}

We test the floor--ceiling construction in two independently specified
generative models: the random-effects meta-analysis model and a synthetic
single-cell RNA-sequencing model.  In both cases the latent distribution and
observation model are known, so the reference points can be computed exactly, independently of the estimation procedure.

\paragraph{Population reference points.}
Let $T$ be the probe target, let $O$ denote the restricted information,
and let $I=(O,X)$ denote the full information after observing the
additional evidence $X$.  Under squared-error loss, the optimal predictors
measurable with respect to these information sets are
\[
m_O(O)=\mathbb E[T\mid O],
\qquad
m_I(I)=\mathbb E[T\mid I].
\]
The corresponding population reference points are
\[
\rho_O^2
=
1-
\frac{\mathbb E[\operatorname{Var}(T\mid O)]}
     {\operatorname{Var}(T)}
=
R^2\!\left(T,m_O(O)\right),
\]
and
\[
\rho_I^2
=
1-
\frac{\mathbb E[\operatorname{Var}(T\mid I)]}
     {\operatorname{Var}(T)}
=
R^2\!\left(T,m_I(I)\right).
\]
Their difference,
\[
\Delta=\rho_I^2-\rho_O^2,
\]
is the headroom supplied by $X$ beyond $O$.

These definitions follow directly from the $L^2$ projection property of
conditional expectation.  Because $O\subseteq I$,
\[
\mathbb E[\operatorname{Var}(T\mid I)]
\leq
\mathbb E[\operatorname{Var}(T\mid O)],
\]
and hence
\[
0\leq \rho_O^2\leq \rho_I^2\leq 1,
\qquad
\Delta\geq 0.
\]
Moreover, for any representation $H=\Phi(I)$ and any probe $h$,
\[
\operatorname{MSE}\!\left(T,h(H)\right)
\geq
\operatorname{MMSE}(T\mid I),
\]
which implies
\[
R^2\!\left(T,h(H)\right)\leq \rho_I^2.
\]
Thus the population ceiling $\rho^2_I$ bounds every representation constructed from $I$, while $\rho_O^2$ is the amount of predictability
already attainable without using the additional evidence $X$.

\paragraph{Exact meta-analysis reference points.}
For the random-effects model,
\[
x_i\mid\mu,\tau^2,\sigma_i^2
\sim
\mathcal N\!\left(\mu,\sigma_i^2+\tau^2\right),
\qquad
\mu\sim\mathcal N(0,1),
\qquad
\tau^2\sim\mathrm U(0,\tau_{\max}),
\]
with $\tau_{\max}=0.5$ in distribution and $\tau_{\max}=1.5$ out of distribution (Appendix~\ref{app:bayes}), we define
\[
O=\sigma_{1:K}^2,
\qquad
I=(\sigma_{1:K}^2,x_{1:K}),
\]
and
\[
T
=
\log S_1
=
\log\sum_{i=1}^{K}\frac{1}{\sigma_i^2+\tau^2}.
\]
Because $\tau^2$ is independent of $O$, the exact restricted-information
predictor is
\[
m_O(O)
=
\int
\log\!\left(
\sum_{i=1}^{K}\frac{1}{\sigma_i^2+\tau^2}
\right)
p(\tau^2)\,d\tau^2.
\]

To obtain the exact full-information predictor, we marginalize the nuisance mean
$\mu$.  For a candidate $\tau^2$, define
\[
\nu_i=\sigma_i^2+\tau^2,
\qquad
A_1=\sum_i \nu_i^{-1},
\qquad
A_x=\sum_i x_i \nu_i^{-1},
\qquad
A_{xx}=\sum_i x_i^2\nu_i^{-1}.
\]
The marginal log likelihood is
\[
\log p(x_{1:K}\mid\tau^2,O)
=
-\frac{1}{2}
\left[
\sum_i\log \nu_i
+\log(1+A_1)
+A_{xx}
-\frac{A_x^2}{1+A_1}
\right]
+\mathrm{const}.
\]
Therefore,
\[
p(\tau^2\mid I)
\propto
p(x_{1:K}\mid\tau^2,O)p(\tau^2),
\]
and
\[
m_I(I)
=
\int
\log\!\left(
\sum_{i=1}^{K}\frac{1}{\sigma_i^2+\tau^2}
\right)
p(\tau^2\mid I)\,d\tau^2.
\]
The one-dimensional prior and posterior integrals are evaluated by numerical
quadrature over $\tau^2$.  The exact floor and ceiling are then obtained by
evaluating $m_O$ and $m_I$ on independent held-out meta-analyses in both
the in-distribution and out-of-distribution regimes.

\paragraph{Exact reference points of the synthetic single-cell model.}
The model, target $T_g$ and information sets $O_g=d_{g,1:K}$ and
$I_g=(y_{g,1:K},d_{g,1:K})$ are defined in Appendix~\ref{app:sc_gen_model}.
Write $T(d_g,b)=\log\sum_{i=1}^{K}(d_{gi}^{-1}+b)^{-1}$.
Because $b_g$ is independent of the depth vector $d_g$,
\[
p(b_g\mid O_g)=p(b_g),
\]
and the exact floor predictor is
\[
m_O(d_g)
=
\int T(d_g,b)\,p(b)\,db.
\]
Its conditional residual variance is
\[
v_O(d_g)
=
\int T(d_g,b)^2p(b)\,db-m_O(d_g)^2.
\]

With the negative-binomial likelihood of the counts,
$y_{gi}\mid\lambda_g,b_g,d_{gi}\sim\mathrm{NB}(\mathrm{mean}=d_{gi}\lambda_g,\ \mathrm{size}=\lambda_g/b_g)$
(Appendix~\ref{app:sc_gen_model}), independent across cells,
\[
p(b_g\mid y_g,d_g)
\propto
p(b_g)
\int
p(y_g\mid\lambda,b_g,d_g)p(\lambda)\,d\lambda.
\]
The exact full-information predictor is
\[
m_I(y_g,d_g)
=
\int
T(d_g,b)\,
p(b\mid y_g,d_g)\,db,
\]
with conditional residual variance
\[
v_I(y_g,d_g)
=
\operatorname{Var}(T_g\mid y_g,d_g).
\]
The exact reference points of the synthetic single-cell model are therefore
\[
\rho_O^2
=
1-\frac{\mathbb E[v_O(O)]}{\operatorname{Var}(T)},
\qquad
\rho_I^2
=
1-\frac{\mathbb E[v_I(I)]}{\operatorname{Var}(T)},
\qquad
\Delta=\rho_I^2-\rho_O^2.
\]
The integrals over $(\lambda,b)$ are evaluated on a fixed numerical grid:
150 cells in $b$ on $(0,b_{\max})$, with edges $b_{\max}u^2$ for $u$ equally
spaced on $[0,1]$ so that the grid is dense near $b=0$, and 50 points in
$\log\lambda$, equally spaced on $\mu_\lambda\pm3.5\,\sigma_\lambda$.
Under binomial thinning (Appendix~\ref{app:sc_thinning}) the same
computation is applied to the thinned $(y',d')$.
The numerator and denominator are computed using the same quadrature rule,
ensuring the total-variance identity
\[
\operatorname{Var}(T)
=
\mathbb E[\operatorname{Var}(T\mid O)]
+
\operatorname{Var}(\mathbb E[T\mid O])
\]
up to numerical precision.

\paragraph{Validation against exact values.}
We independently estimate the same reference points from observed samples without
giving the estimator access to the true latent variable.  Each seed is divided
into disjoint training, model-selection, and evaluation partitions.  Candidate
regressors from the decoder bank are fitted on the training partition, the regressor class is fixed
using the selection partition, and the selected class is refitted on the
combined training and selection data before evaluation.

This produces
\[
\hat\rho_O^2
=
R^2\!\left(T,\hat m_O(O)\right),
\qquad
\hat\rho_I^2
=
R^2\!\left(T,\hat m_I(I)\right),
\qquad
\hat\Delta
=
\hat\rho_I^2-\hat\rho_O^2.
\]
The validation compares
\[
(\rho_O^2,\hat\rho_O^2),\qquad
(\rho_I^2,\hat\rho_I^2),\qquad
(\Delta,\hat\Delta)
\]
for every seed, for both meta-analysis regimes, and for six thinning
fractions $p\in\{1,\,0.40,\,0.16,\,0.063,\,0.025,\,0.01\}$ of the synthetic
single-cell model (Appendix~\ref{app:sc_thinning}). Figure~\ref{fig:exact-vs-empirical} shows the comparison: floors, ceilings and headroom estimated by the estimation procedure lie close to their exact values in both models, and Table~\ref{tab:exact-endpoints} lists the exact reference points.

\begin{table}
\centering
\renewcommand{\arraystretch}{1.2}
\begin{tabular}{llllll}
\toprule
 &  &  & Floor & Ceiling & Headroom \\
Domain & Target & Condition &  &  &  \\
\midrule
\multirow[t]{6}{*}{Meta-analysis} & \multirow[t]{2}{*}{$\log S_1$} & ID & $0.012 \pm 0.004$ & $0.822 \pm 0.014$ & $0.809 \pm 0.017$ \\
 &  & OOD & $0.092 \pm 0.011$ & $0.466 \pm 0.019$ & $0.374 \pm 0.017$ \\
\cline{2-6}
 & \multirow[t]{2}{*}{$\tau^2$} & ID & $0.000 \pm 0.000$ & $0.758 \pm 0.013$ & $0.758 \pm 0.013$ \\
 &  & OOD & $0.000 \pm 0.000$ & $0.404 \pm 0.017$ & $0.404 \pm 0.017$ \\
\cline{1-6}
Synthetic genomics & $T_g$ & Full depth & $0.000 \pm 0.000$ & $0.880 \pm 0.001$ & $0.880 \pm 0.001$ \\
\bottomrule
\end{tabular}
\caption{Exact reference points for the meta-analysis and synthetic-genomics validation experiments (mean $\pm$ sample standard deviation across five seeds).}
\label{tab:exact-endpoints}
\end{table}

\section{Feature engineering for estimated reference points}
\label{app:endpoint-features}

The exact reference points (Appendix~\ref{sec:exact-endpoint-certificate}) condition on the raw information sets $O$ and $I$. The estimated
reference points are instead fitted by the decoder bank (Appendix~\ref{sec:probe-bank}) on fixed-length feature vectors
$\phi_O(O)$ and $\phi_I(I)$:
\[
  \hat\rho_O^2 = R^2\!\big(T,\hat m_O(\phi_O(O))\big), \qquad
  \hat\rho_I^2 = R^2\!\big(T,\hat m_I(\phi_I(I))\big),
\]
where $T$ is the probe target and $\hat m$ the fitted regressor on feature vectors. Each feature vector is a deterministic function of its own information set only. It never uses the latent
variable ($\tau^2$ or $b$) or the target, and $\phi_I$ always contains $\phi_O$. The features are fixed in
advance and are identical across seeds, conditions and sequencing depths.

\paragraph{Notation.} For a vector $a_{1:K}$, $\bar a = K^{-1}\sum_i a_i$;
$\operatorname{sd}(a) = \big(K^{-1}\sum_i (a_i-\bar a)^2\big)^{1/2}$;
$s^2(a) = (K-1)^{-1}\sum_i (a_i-\bar a)^2$;
$a_{(1)}\le\dots\le a_{(K)}$ are the order statistics; $q_\alpha(a)$ is the empirical $\alpha$-quantile with
linear interpolation; $\operatorname{med}(a)=q_{0.5}(a)$. We write $\varepsilon = 10^{-12}$ and
$\operatorname{slog}(a) = \log\max(a,\varepsilon)$.

\paragraph{Meta-analysis.}
Features are computed from all $K=32$ studies of a meta-analysis (the final position). Let
$\ell_i = \log\sigma_i^2$ and $w_i = 1/\sigma_i^2$. Restricted features $\phi_O$ (43 columns), from $O=\sigma^2_{1:K}$:
\begin{enumerate}
  \item sorted log sampling variances $\ell_{(1)},\dots,\ell_{(K)}$ (32);
  \item $\operatorname{mean}(\ell)$, $\operatorname{sd}(\ell)$, $\min_i \ell$, $\max_i \ell$ (4);
  \item quantiles $q_\alpha(\ell)$ for $\alpha\in\{0.10,0.25,0.50,0.75,0.90\}$ (5);
  \item the log total and log squared total fixed-effect precision, $\log\sum_i w_i$ and $\log\sum_i w_i^2$ (2).
\end{enumerate}

Full features $\phi_I$ (114 columns), from $I=(\sigma^2_{1:K},x_{1:K})$:
\begin{enumerate}
  \item $\phi_O$ (43);
  \item the effects $x_1,\dots,x_K$ in study order (32);
  \item $\ell_1,\dots,\ell_K$ in study order, so that each $x_i$ is paired with its own $\sigma_i^2$ (32);
  \item $\operatorname{mean} x$, $\operatorname{sd}(x)$, $\min_i x$, $\max_i x$ (4);
  \item the fixed-effect mean $\hat\mu_{\mathrm{FE}} = \sum_i w_i x_i / \sum_i w_i$, Cochran's
        $Q=\sum_i w_i (x_i-\hat\mu_{\mathrm{FE}})^2$, and the DerSimonian--Laird estimate
        \[
          \hat\tau^2_{\mathrm{DL}} = \max\!\Big\{0,\ \frac{Q-(K-1)}{\sum_i w_i - \sum_i w_i^2/\sum_i w_i}\Big\} \quad (3).
        \]
\end{enumerate}

No information is lost. $T=\log S_1$ is symmetric in the studies and $\tau^2$ is independent of $O$, so
$\mathbb{E}[T\mid O]$ depends on $\sigma^2_{1:K}$ only through the sorted vector in item~1, and
$\mathbb{E}[T\mid O]=\mathbb{E}[T\mid\phi_O(O)]$. Likewise, $I$ is recovered exactly from items~2--3 of $\phi_I$. The
remaining columns are deterministic summaries that ease fitting. The estimated and exact reference points
therefore target the same population quantities. 
\paragraph{Synthetic single-cell model.}
Features are computed separately for each gene using its $K=40$ cells (Appendix~\ref{app:sc_gen_model}). They are constructed from the observed counts $y$ and sequencing depths $d$ after binomial thinning (Appendix~\ref{app:sc_thinning}). Define the normalized count $r_i=\frac{y_i}{d_i}$, the moment estimate of mean expression $\hat\lambda=\operatorname{mean}(r)$,
and the sample variance $s^2=s^2(r).$

The restricted feature vector $\phi_O$ has 8 columns and uses only $O=d_{1:K}$. It contains:
\begin{enumerate}
  \item the number of cells $K$ (1);
  \item $\operatorname{mean}(\log d)$, $\operatorname{sd}(\log d)$, $\min_i \log d$, $\max_i \log d$, and
        $\operatorname{med}(\log d)$ (5);
  \item the log total and log squared total technical variance,
        $\operatorname{slog}\sum_i d_i^{-1}$ and
        $\operatorname{slog}\sum_i d_i^{-2}$ (2).
\end{enumerate}

The full feature vector $\phi_I$ has 27 columns and uses $I=(y_{1:K},d_{1:K})$. We first define three quantities that summarize biological variation beyond Poisson counting noise: (1) the expected Poisson variance, $v=\hat\lambda\,\frac1K\sum_i d_i^{-1}$, where $\hat\lambda$ is the estimated mean expression, $K$ is the number of cells, and $d_i$ is the sequencing depth of cell $i$; (2) the excess variance, $e=\max(0,s^2-v)$, where $s^2$ is the sample variance of the normalized counts $r_i=y_i/d_i$; and (3) the method-of-moments estimate of biological dispersion, $\hat b_{\mathrm{MoM}}=e/\max(\hat\lambda,\varepsilon)$, where $\varepsilon$ is a small constant used to avoid division by zero. 

These definitions follow from $\operatorname{Var}(r_i)=\lambda/d_i+\lambda b$, where $\lambda/d_i$ is the variance expected from Poisson counting noise and $\lambda b$ is the additional variance due to biological dispersion $b$. We also define the depth-weighted mean $\bar r_d=\sum_i y_i/\sum_i d_i$, and the weighted dispersion statistic $Q=\sum_i d_i(r_i-\bar r_d)^2$.

The 27 features in $\phi_I$ are:
\begin{enumerate}
  \item the 8 restricted features in $\phi_O$ (8);

  \item two sparsity features: the detection rate $K^{-1}\sum_i \mathbb{1}[y_i>0]$, which is the fraction of cells with a nonzero count, and $\log(1+\max_i y_i)$, which summarizes the largest observed count (2);

  \item three variance features: $\log(s^2+\varepsilon)$ for the total observed variance of the normalized counts, $\log(v+\varepsilon)$ for the variance expected from Poisson counting noise, and $\log(e+\varepsilon)$ for the excess variance beyond that noise (3);

  \item two representations of the moment estimate of biological dispersion $b$: $\log(\hat b_{\mathrm{MoM}}+\varepsilon)$ and $\hat b_{\mathrm{MoM}}$ (2);

  \item four overdispersion features: $Q$; $\log\!\big(1+\max\{Q-(K-1),0\}\big)$, which records the amount by which $Q$ exceeds the reference value $K-1$; the dispersion ratio $s^2/\max(v,\varepsilon)$, which compares observed variance with the variance expected from Poisson noise; and the normalized-count spread $q_{0.9}(r)-q_{0.1}(r)$, the difference between the 90th and 10th percentiles of $r$ (4);

  \item seven profile-likelihood features. We evaluate the negative-binomial log-likelihood at seven candidate dispersion values $b_j=j\,b_{\max}/7$, for $j=1,\dots,7$. We fix the mean-expression parameter at $\tilde\lambda=\max(\hat\lambda,\varepsilon)$, define the expected count for cell $i$ as $m_i=d_i\tilde\lambda$, and define the corresponding negative-binomial shape parameter as $\kappa_j=\tilde\lambda/b_j$. For each candidate $b_j$, the log-likelihood is $L_j=\sum_i[\ln\Gamma(y_i+\kappa_j)-\ln\Gamma(\kappa_j)-\ln\Gamma(y_i+1)+\kappa_j\log\frac{\kappa_j}{\kappa_j+m_i}+y_i\log\frac{m_i}{\kappa_j+m_i}]$. This is the likelihood from Appendix~\ref{app:sc_gen_model}, with $\lambda$ fixed at $\tilde\lambda$. We use the seven relative likelihood values $L_j-\max_k L_k$ as features, so that the best grid value has value zero and the others indicate how much less likely they are (7);

  \item one likelihood-based estimate of $b$: the grid value with the largest likelihood, $b_{j^\star}$, where $j^\star=\arg\max_j L_j$ (1).
\end{enumerate}

For the synthetic model, $b_{\max}=0.05$, which is the upper bound of the prior on biological overdispersion $b$. For the real cohorts, $b_{\max}$ is set to $1.1$ times the largest $\hat b_{\mathrm{MoM}}$ among the training genes (Appendix~\ref{app:sc_gen_model}).

This construction parallels the meta-analysis setting. The restricted features $\phi_O$ summarize the technical variances $1/d_i$, which play the same role as the sampling variances $\sigma_i^2$. The full features $\phi_I$ additionally include information about biological dispersion, analogous to adding $Q$ and $\hat\tau^2_{\mathrm{DL}}$ in the meta-analysis.

The analogy for $Q$ is only approximate. Here, $Q=\bar r_d\,\chi^2_{\mathrm{P}}$, where $\chi^2_{\mathrm{P}}=\sum_i (y_i-d_i\bar r_d)^2/(d_i\bar r_d)$ is Pearson's statistic. Therefore $Q$ is not standardized, and comparing $Q$ directly with $K-1$ in item~5 is only a heuristic rather than a calibrated DerSimonian--Laird statistic. Standardized information about excess variation is still available to the decoder through $s^2/v$ and $\hat b_{\mathrm{MoM}}$.

Both feature vectors discard some information. Unlike the meta-analysis setting, $\phi_O$ keeps only 8 summaries of the $K$ sequencing depths, while $\phi_I$ does not include the raw vectors $y_{1:K}$ and $d_{1:K}$. Therefore, the best predictors based on $\phi_O$ and $\phi_I$ can fall below the exact population quantities $\rho_O^2$ and $\rho_I^2$, which condition on the full raw data. Figure~\ref{fig:exact-vs-empirical} compares these estimated and exact values.

The full feature vector $\phi_I$ nevertheless contains two direct estimates of the latent biological dispersion $b$: the moment estimator $\hat b_{\mathrm{MoM}}$ and the likelihood estimator $b_{j^\star}$. These features give the estimated reference points at all six thinning fractions in Figure~\ref{fig:exact-vs-empirical}, the full-depth ($p=1$) floor and ceiling in Figure~\ref{fig:calibration} and Table~\ref{tab:cross-domain-calibration-legacy}, and the headroom grid in Figure~\ref{fig:collapse}.
\section{Sensitivity to the choice of \texorpdfstring{$(O,I,T)$}{(O, I, T)}}
\label{app:contrast_sensitivity}

The quantities used for normalization in Section~\ref{sec:theory} are conditional on the choice of $(O,I,T)$. In the controlled meta-analysis and synthetic single-cell experiments, the generative model identifies a natural distinction between restricted and full information. In the observational chemistry and neuroscience applications, however, there is no unique decomposition of the measurements into $O$ and $I$, because changing the choice of $(O,I,T)$ changes the scientific estimand itself.

We therefore evaluate several prespecified, scientifically interpretable contrasts in each observational domain to determine whether the qualitative interpretation of a probe depends strongly on a particular placement of the boundary between restricted and additional information. Whenever possible, we hold the target ($T$) and full information $I$ fixed and progressively enrich the restricted baseline,
\[
\sigma(O_1)\subseteq\sigma(O_2)\subseteq\sigma(O_3)\subseteq\sigma(O_4)\subseteq\sigma(I).
\]
Population monotonicity then implies
\[
\rho_{O_1}^2\leq\rho_{O_2}^2\leq\rho_{O_3}^2\leq\rho_{O_4}^2\leq\rho_I^2,
\qquad
\Delta(O_1,I)\geq\Delta(O_2,I)\geq\Delta(O_3,I)\geq\Delta(O_4,I),
\]
although estimated reference points need not satisfy these inequalities.

\subsection{Chemistry}
\label{app:chemistry_setup}

\paragraph{Dataset and experimental setting.}
We use part~1 of the Raman-ChEMBL database, which provides molecular composition,
physicochemical descriptors, three-dimensional molecular geometry, and
quantum-chemical orbital energies computed with density functional theory (PBE/6-31G). We draw a fixed random
sample of 50{,}000 molecules, keep the 48{,}931 that pass structural validation, and use 30{,}000 of these per
seed. The analysis unit is one molecule. Identical
SMILES are assigned to the same data split so that duplicate molecular identities
do not occur across fine-tuning, validation, probe-fitting, and final-evaluation
sets.

The experiment is useful for probe normalization because molecular orbital energies
are partly predictable from relatively coarse molecular properties such as
composition, charge, and global shape, while detailed atomic geometry provides
additional information. We therefore ask whether TabPFN hidden representations
contain information about orbital energies beyond what was already predictable
from progressively richer non-geometric or coarse molecular summaries. TabPFN hidden-state probes use a 64-component PCA fitted on training rows only. All other probes, floors and ceilings use full features.

\paragraph{Prediction target.}
We analyze the two frontier orbital energies separately,
\[
T^{\mathrm{HOMO}} = E_{\mathrm{HOMO}},
\qquad
T^{\mathrm{LUMO}} = E_{\mathrm{LUMO}},
\]
where $E_{\mathrm{HOMO}}$ is the energy of the highest occupied molecular orbital
and $E_{\mathrm{LUMO}}$ is the energy of the lowest unoccupied molecular orbital.
Both are measured in Hartree.

These two quantities also determine the HOMO--LUMO gap,
\[
E_g =
E_{\mathrm{LUMO}}-E_{\mathrm{HOMO}}.
\]
TabPFN is fine-tuned to predict $E_g$, but HOMO and LUMO are not individually
used as fine-tuning targets. They are subsequently used only as targets of the
representation audit.

\paragraph{Restricted molecular information.}
The restricted information $O$ is constructed from progressively richer
fixed-size molecular summaries. We consider four nested specifications for restricted information,
\[
O_1\subset O_2\subset O_3\subset O_4.
\]

The first contains elemental composition and total formal charge,
\[
O_1 =
\left(
\text{element counts},
Q
\right),
\]
where $Q$ denotes total formal charge. This 18-dimensional representation
describes which atomic species are present and their abundances, but does not
specify how the atoms are arranged in space.

The second specification additionally contains three rotational constants,
\[
O_2 =
\left(
O_1,
R_x,R_y,R_z
\right).
\]
These quantities summarize the distribution of molecular mass around the
principal axes and therefore provide coarse information about molecular size,
mass distribution, and global shape without specifying individual atomic
distances.

The third specification additionally includes the total dipole magnitude,
\[
O_3 =
\left(
O_2,
D_{\mathrm{total}}
\right).
\]
The total dipole provides a coarse summary of molecular charge separation and
therefore introduces global electrostatic information beyond composition and
shape.

Finally, the richest specification is
\[
O_4 =
\left(
O_3,
\alpha_{\mathrm{iso}},
R_{\mathrm{e}}^2,
C_v
\right),
\]
where $\alpha_{\mathrm{iso}}$ is isotropic polarizability,
$R_{\mathrm{e}}^2$ denotes electronic spatial extent, and $C_v$ is heat
capacity. These quantities add information about the response and spatial extent
of the electronic distribution and about the molecule's thermodynamic state. Thus the restricted baseline becomes progressively more informative while
remaining a compact tabular description of the molecule.

\paragraph{Full information $I$.}
The full information is held fixed across all four specifications:
\[
I=
\left(
O_4,
G_{\mathrm{MBTR}}
\right),
\]
where $G_{\mathrm{MBTR}}$ is the unchanged monolithic MBTR geometry block.

The MBTR representation encodes species-resolved pairwise inverse-distance
information. It describes which types of atoms occur at which relative
distances. It is invariant to translation, rotation, and atom ordering. Unlike
the tabular descriptors in the restricted information, it provides a detailed representation of
molecular geometry.

As $O_j$ becomes richer, the variables that $I$ adds become more specific. Above $O_1$, it includes the full physicochemical and geometric
information absent from composition alone, whereas above $O_4$ the remaining increment
is specifically the detailed MBTR geometry block.

The associated headroom,
\[
\hat\Delta_j=\hat\rho^2_I-\hat\rho^2_{O_j},
\]
measures how much additional predictability of the orbital energy remains after
conditioning on the corresponding molecular baseline.

\paragraph{Model representation.}
TabPFN is fine-tuned with LoRA using the fixed full information $I$ to predict
\[
E_g =
E_{\mathrm{LUMO}}-E_{\mathrm{HOMO}}.
\]
Neither $E_{\mathrm{HOMO}}$ nor $E_{\mathrm{LUMO}}$ is supplied as an individual
fine-tuning target. LoRA adapters (rank 8, $\alpha=16$, dropout 0.05) are added to the attention and MLP
projections of every TabPFN~v3 block and to its output projection, and are fitted on molecules disjoint from
those used for probing.

After fine-tuning, hidden representations $H^{(\ell)}$ are extracted from the
audited TabPFN layers. Probes from the decoder bank are then trained separately to predict
$T^{\mathrm{HOMO}}$ and $T^{\mathrm{LUMO}}$ from these representations, producing $R^2_{H^{(\ell)}}$.

Thus, the estimated reference points $\hat\rho^2_{O_j}$ and $\hat\rho^2_I$ correspond to the chosen $(O_j,I,T)$, and $R^2_{H^{(\ell)}}$ measures the orbital-energy information accessible from the model's internal
representation. TabPFN receives $O_4$ and a 99\%-variance PCA of the MBTR block; the ceiling probe uses the raw MBTR, so this PCA does not limit the ceiling. 

\paragraph{What is the scientific question?}
The raw question
\[
\text{``How well can a TabPFN hidden state predict HOMO or LUMO?''}
\]
is insufficient because substantial orbital-energy information may already be
predictable from molecular composition or other coarse physicochemical
properties. The question posed by normalization is instead
\[
\begin{aligned}
&\text{``How should orbital-energy decodability be interpreted relative to what was already} \\
&\text{predictable from a declared molecular summary and what additional predictability} \\
&\text{is available from the full molecular descriptor set $I$?''}
\end{aligned}
\]

For each orbital-energy target and each choice of $(O_j,I,T)$, we therefore compare $\hat\rho^2_{O_j}$, $\hat\rho^2_I$ and $R^2_{H^{(\ell)}}$. The first quantity measures what is predictable from the restricted molecular
description, the second measures what is predictable from the fixed
geometry-aware representation, and their difference defines the estimated
headroom $\hat\Delta_j$. The model probe is interpreted relative to these quantities
rather than on an absolute $R^2$ scale.

The sequence $O_1\subset O_2\subset O_3\subset O_4$ further provides a sensitivity
analysis for the choice of restricted information. $O_1$ asks what the complete
molecular representation adds beyond composition and charge; $O_2$ additionally
conditions on coarse molecular size and shape; $O_3$ further conditions on global
charge separation; and $O_4$ conditions on a rich physicochemical state before
asking what detailed geometry adds. These baseline specifications define distinct estimands. If
the interpretation of the TabPFN probe changes across them, that dependence
identifies which classes of molecular information account for the apparent
decodability.

\textbf{Result.} Most of each raw probe score is already predictable
from the physicochemical baseline $O_4$. Above it, the fine-tuned TabPFN
recovers 69\% (HOMO) and 82\% (LUMO) of the additional predictability
that detailed geometry provides (Table~\ref{tab:cross-domain-calibration-legacy}). Neither orbital
energy was a fine-tuning target, so substantial information about both
is present in the representation even though the model was trained only
on their difference. The interpretation is stable across the nested
baselines $O_1$--$O_4$.

\subsection{Neuroscience}
\label{app:neuroscience_setup}
\paragraph{Related work and motivation.} 
Neuroscience has long used decoding to ask what neural populations represent \citep{quian2009extracting,naselaris2011encoding,glaser2020machine}, and has documented its limits \citep{kriegeskorte2019interpreting,ritchie2019decoding,hebart2018deconstructing}. Two common practices resemble our reference points. First, decoding analyses control for confounds by removing the variance explained by variables that are not of interest \citep{snoek2019control}, and encoding studies split variance between feature sets \citep{deheer2017hierarchical}. Our floor plays this role, with the baseline stated explicitly and chosen by the scientific question. Second, encoding models are compared against a noise ceiling: the part of the neural response that repeats across identical trials, which no model can exceed \citep{hsu2004quantifying,schoppe2016measuring,lagecastellanos2019methods}. Unlike a noise ceiling, which is limited by trial-to-trial variability in the response, the ceiling we define is limited by how much the available information reveals about the target.

\paragraph{Dataset and experimental setting.}
We use the Area~2 dataset of \citet{chowdhury2022area2data}, consisting of neural population recordings from
proprioceptive Area~2 of somatosensory cortex during an arm-reaching task with
externally imposed mechanical perturbations. On perturbation trials, the animal
holds or moves a manipulandum while a brief planar bump is applied through the
interface. The dataset contains simultaneously recorded neural activity together
with hand position, hand velocity, interface force, and the commanded perturbation.
The analysis unit is one perturbation trial.

The experiment is useful for probe normalization because the subsequent mechanical
response is partly predictable from the state of the limb before the perturbation,
while knowledge of the externally imposed bump provides additional information.
We therefore ask whether post-perturbation neural population activity contains
information about the ensuing force response beyond what was already predictable
from the measured pre-bump state.

\paragraph{Prediction target.}
We analyze the two planar force components separately. For component
$c\in\{x,y\}$, the target is the baseline-corrected interface-force response
during the first $100\,\mathrm{ms}$ after bump onset,
\[
T_c
=
\overline{F_c(t)}_{\,0\leq t<100\,\mathrm{ms}}
-
\overline{F_c(t)}_{\,-100\leq t<0\,\mathrm{ms}} .
\]
Thus $T_c$ measures the change in force following the perturbation relative to
the force already present immediately before it. Baseline correction removes
pre-existing load, tonic force, and constant sensor offsets, so the target
reflects the perturbation-evoked mechanical response rather than the absolute
force reading.

\begin{table}
\centering
\begin{tabular}{lllllll}
\toprule
Baseline & Target &
\shortstack{Floor mean \\ $\pm$ SD} &
\shortstack{Ceiling mean \\ $\pm$ SD} &
\shortstack{Headroom mean \\ $\pm$ SD} &
\shortstack{Probe mean \\ $\pm$ SD} &
\shortstack{$U_H$ mean \\ $\pm$ SD} \\
\midrule

\rowcolor{neurocolor}
\multicolumn{7}{l}{\textbf{Neuroscience}} \\

$O_1$ & Force $x$ & $-$0.013 $\pm$ 0.019 & 0.987 $\pm$ 0.009 & 1.000 $\pm$ 0.025 & 0.745 $\pm$ 0.014 & 0.758 $\pm$ 0.019 \\
$O_1$ & Force $y$ & $-$0.002 $\pm$ 0.006 & 0.997 $\pm$ 0.000 & 0.999 $\pm$ 0.006 & 0.573 $\pm$ 0.019 & 0.575 $\pm$ 0.017 \\
$O_2$ & Force $x$ & 0.004 $\pm$ 0.009 & 0.987 $\pm$ 0.009 & 0.983 $\pm$ 0.016 & 0.745 $\pm$ 0.014 & 0.754 $\pm$ 0.018 \\
$O_2$ & Force $y$ & $-$0.013 $\pm$ 0.028 & 0.997 $\pm$ 0.000 & 1.010 $\pm$ 0.028 & 0.573 $\pm$ 0.019 & 0.580 $\pm$ 0.011 \\
$O_3$ & Force $x$ & $-$0.002 $\pm$ 0.019 & 0.987 $\pm$ 0.009 & 0.989 $\pm$ 0.025 & 0.745 $\pm$ 0.014 & 0.755 $\pm$ 0.019 \\
$O_3$ & Force $y$ & $-$0.003 $\pm$ 0.005 & 0.997 $\pm$ 0.000 & 1.000 $\pm$ 0.005 & 0.573 $\pm$ 0.019 & 0.575 $\pm$ 0.019 \\
$O_4$ & Force $x$ & 0.111 $\pm$ 0.023 & 0.987 $\pm$ 0.009 & 0.876 $\pm$ 0.028 & 0.745 $\pm$ 0.014 & 0.724 $\pm$ 0.018 \\
$O_4$ & Force $y$ & 0.088 $\pm$ 0.024 & 0.997 $\pm$ 0.000 & 0.909 $\pm$ 0.024 & 0.573 $\pm$ 0.019 & 0.533 $\pm$ 0.025 \\

\rowcolor{chemcolor}
\multicolumn{7}{l}{\textbf{Chemistry}} \\

$O_1$ & HOMO & 0.378 $\pm$ 0.004 & 0.628 $\pm$ 0.005 & 0.250 $\pm$ 0.007 & 0.555 $\pm$ 0.014 & 0.710 $\pm$ 0.065 \\
$O_1$ & LUMO & 0.458 $\pm$ 0.020 & 0.693 $\pm$ 0.007 & 0.236 $\pm$ 0.014 & 0.660 $\pm$ 0.012 & 0.855 $\pm$ 0.071 \\
$O_2$ & HOMO & 0.381 $\pm$ 0.003 & 0.628 $\pm$ 0.005 & 0.247 $\pm$ 0.003 & 0.555 $\pm$ 0.014 & 0.705 $\pm$ 0.070 \\
$O_2$ & LUMO & 0.460 $\pm$ 0.007 & 0.693 $\pm$ 0.007 & 0.233 $\pm$ 0.011 & 0.660 $\pm$ 0.012 & 0.857 $\pm$ 0.066 \\
$O_3$ & HOMO & 0.383 $\pm$ 0.003 & 0.628 $\pm$ 0.005 & 0.245 $\pm$ 0.005 & 0.555 $\pm$ 0.014 & 0.703 $\pm$ 0.069 \\
$O_3$ & LUMO & 0.457 $\pm$ 0.016 & 0.693 $\pm$ 0.007 & 0.236 $\pm$ 0.012 & 0.660 $\pm$ 0.012 & 0.856 $\pm$ 0.070 \\
$O_4$ & HOMO & 0.391 $\pm$ 0.004 & 0.628 $\pm$ 0.005 & 0.237 $\pm$ 0.005 & 0.555 $\pm$ 0.014 & 0.693 $\pm$ 0.073 \\
$O_4$ & LUMO & 0.504 $\pm$ 0.016 & 0.693 $\pm$ 0.007 & 0.190 $\pm$ 0.012 & 0.660 $\pm$ 0.012 & 0.820 $\pm$ 0.089 \\

\bottomrule
\end{tabular}
\caption{Estimated reference points and probe scores for the nested baselines $O_1\subset O_2\subset O_3\subset O_4$, reported as mean $\pm$ sample standard deviation (SD) across five seeds.}
\label{tab:nested-contrast-results}
\end{table}

\paragraph{Restricted pre-bump information.}
The restricted information $O$ is constructed entirely from measurements before
bump onset. The primary pre-bump window is $[-100,0)\,\mathrm{ms}$, which summarizes the limb state immediately before the perturbation while excluding post-bump information. Averaging within this interval reduces
measurement noise relative to conditioning on a single time point and provides a
local estimate of the state from which the perturbation begins.

We consider progressively richer restricted-information specifications:
\[
\begin{aligned}
O_1 &= 
\left(
\overline{x}_{\mathrm{pre}},
\overline{y}_{\mathrm{pre}}
\right),\\
O_2 &=
\left(
O_1,
\overline{v_x}_{\mathrm{pre}},
\overline{v_y}_{\mathrm{pre}}
\right),\\
O_3 &=
\left(
O_2,
\overline{F_x}_{\mathrm{pre}},
\overline{F_y}_{\mathrm{pre}}
\right).
\end{aligned}
\]
Here each bar denotes the mean over $[-100,0)\,\mathrm{ms}$. These contrasts
progressively condition on hand location, motion, and the force already being
exerted before the bump.

The richest baseline, $O_4$, additionally captures whether these quantities were
changing immediately before perturbation. For any pre-bump variable $q(t)$, we
define
\[
\Delta q_{\mathrm{pre}}
=
\overline{q(t)}_{\,-30\leq t<0\,\mathrm{ms}}
-
\overline{q(t)}_{\,-100\leq t<-70\,\mathrm{ms}} .
\]
We compute this late-minus-early change separately for planar position, velocity,
and force, yielding six additional features:
\[
\Delta x,\;
\Delta y,\;
\Delta v_x,\;
\Delta v_y,\;
\Delta F_x,\;
\Delta F_y .
\]
Thus,
\[
O_4=
\left(
O_3,
\Delta x,\Delta y,
\Delta v_x,\Delta v_y,
\Delta F_x,\Delta F_y
\right).
\]
These features distinguish a stable pre-bump state from one in which the limb or
interface force was already drifting immediately before perturbation, while
remaining entirely pre-perturbation.

\paragraph{Full information $I$.}
The full information is held fixed across all four specifications. It augments
the richest pre-bump state with the experimentally commanded planar perturbation,
\[
I=(O_4,b_x,b_y),
\]
where $(b_x,b_y)$ is the commanded bump vector. Because
$O_1\subset O_2\subset O_3\subset O_4$, every specification satisfies
$\sigma(O_j)\subseteq\sigma(I)$. The ceiling $\hat\rho^2_I$ is therefore fitted
once per target and seed and shared by all four contrasts; only the floor changes
between them.

The associated headroom,
\[
\hat\Delta_j=\hat\rho^2_{I}-\hat\rho^2_{O_j},
\]
measures how much additional predictability of the subsequent force response is
available from knowing the perturbation after conditioning on a specified
description of the pre-bump body state.

\paragraph{Neural representation.}
The representation $H$ is the post-perturbation Area~2 population activity
extracted from the neural response following bump onset. A decoder-bank probe is trained to
predict $T_x$ or $T_y$ from this population representation (spike counts), separately within each recording
session with predictions pooled across sessions, producing
$R_H^2$; like the ceiling, this probe score is shared by all four contrasts.

This setting differs from the artificial-network experiments because $H$ is a
recorded biological population state rather than a deterministic transformation
of the measured $I$. Area~2 activity may contain proprioceptive, biomechanical,
or internally generated information that is absent from our behavioral
measurements.

\paragraph{What is the scientific question?}
The raw question
\[
\text{``How well can post-bump Area~2 activity predict the force response?''}
\]
is insufficient because part of that response may already be predictable from the state of the hand before the perturbation. The question posed by normalization is instead
\[
\begin{aligned}
&\text{``How should neural decodability be interpreted relative to what was already predictable} \\
&\text{before the bump and what additional predictability is provided by knowing the bump?''}
\end{aligned}
\]

For each target and each choice of $(O_j,I,T)$, we therefore compare
\[
\hat\rho^2_{O_j},
\qquad
\hat\rho^2_{I},
\qquad
R^2_H.
\]
The first quantity is the floor from the pre-bump state alone, the second is the estimated full-information reference,
and their difference is the estimated headroom $\hat\Delta_j$. The neural probe
score is interpreted relative to these quantities rather than on an absolute
$R^2$ scale.

The sequence $O_1\subset O_2\subset O_3\subset O_4$ further provides a sensitivity
analysis for the choice of restricted information. $O_1$ asks whether bump
information matters beyond initial hand position alone; $O_2$ additionally
conditions on ongoing motion; $O_3$ conditions on pre-existing interface force;
and $O_4$ further accounts for recent pre-bump trends. These baseline specifications define distinct
estimands. If the interpretation of the neural probe changes across them, that
dependence identifies which aspects of the pre-perturbation state are responsible
for the apparent decodability.

\textbf{Result.} Under $O_4$, floors are near 0.1 and the full-information reference is near
1, so normalized and raw scores nearly coincide: Area~2 activity
captures about 72\% of the perturbation-driven force response along $x$
and 53\% along $y$ under the richest pre-bump baseline $O_4$
(Table~\ref{tab:cross-domain-calibration-legacy}). Because neural activity is not a function of
the measured inputs, the reference is not an upper bound on the probe
score. The interpretation is stable across $O_1$--$O_4$; only $O_4$,
which adds pre-bump trends, raises the floor appreciably.

\section{Normalizing published LLM probes}
\label{app:llm}

\subsection{Common setup}
\label{app:llm_setup}

\textbf{Floors without the model.} In every analysis, $O$ is a simple
description of the input that uses no language model: character
$n$-grams of a name, word and character $n$-grams of a statement or conversation, or
summaries of an Othello move sequence. The floor therefore needs no
access to the model and can be computed for any published result whose
inputs, labels, and splits are released.

\textbf{What the floor measures.} Here $O$ is computed from the input
string itself, so in principle it holds the same information as the
input. What differs is what can be learned from it: the floor is the
best readout that a decoder from the bank (Appendix~\ref{sec:probe-bank}) can learn from $O$
using only the probe's training data. The headroom
above it is what the model adds, through knowledge from its training or
through computation.

\textbf{Metric.} We use $R^2$ for continuous targets and accuracy for
categorical ones. The decoder is chosen on a validation split taken from
the training rows and evaluated once on the test rows. A floor can never truly be below chance; an estimated floor well below chance
means the readout's cues invert outside its training data. We do not
compute $U_H$ in that case and compare raw scores instead.
\begin{table}[!htbp]
\centering
\begin{tabular}{llll}
\toprule
Dataset & Floor (95\% CI) & Probe $R^2$ (7B / 70B) & $U_H$ (7B / 70B) \\
\midrule
World places & 0.61 (0.59--0.62) & 0.88 / 0.91 & 0.70 / 0.77 \\
US places & 0.46 (0.44--0.48) & 0.80 / 0.86 & 0.63 / 0.75 \\
NYC places & 0.33 (0.29--0.37) & 0.22 / 0.36 & $-$0.17 / 0.04 \\
Historical figures & 0.60 (0.59--0.62) & 0.79 / 0.83 & 0.46 / 0.58 \\
Entertainment & 0.16 (0.05--0.28) & 0.79 / 0.89 & 0.75 / 0.86 \\
Headlines & 0.45 (0.43--0.47) & 0.56 / 0.75 & 0.21 / 0.54 \\
\bottomrule
\end{tabular}
\caption{Spelling floors for the space and time datasets of \citet{gurnee2024language}. The floor is the held-out $R^2$ of the selected decoder on character $n$-grams of the entity name; probe scores are the published linear probes on Llama-2-7B / 70B; $U_H$ uses a ceiling of 1, which can only lower a positive $U_H$ (Appendix~\ref{app:llm_setup}).}
\label{tab:space}
\end{table}
\textbf{Ceiling.} We cannot estimate the ceiling here, because predicting
a city's location or a statement's truth from its text needs knowledge
that the probe's training data cannot supply. We set it to 1, the
highest possible value, which can only lower a positive $U_H$.
In Othello the ceiling is exactly 1, since the board follows from the
moves. When a probe scores at or near the floor, we compare the two
directly, which does not depend on the ceiling. The floors themselves
are lower estimates: a stronger readout could raise them and lower
every $U_H$. The two choices push $U_H$ in opposite directions, so we read it as approximate.

The space and time floors use character 2--4-gram counts (Appendix~\ref{app:llm_space}), and the truth and user-attribute floors word 1--2-gram and character 2--4-gram TF-IDF features; in both cases the linear decoder uses the sparse features directly, and the nonlinear decoders use a 1{,}024-component truncated SVD.
\subsection{Space and time}
\label{app:llm_space}

\textbf{Published claim.} \citet{gurnee2024language} train linear ridge
probes on Llama-2 activations at the last token of an entity name to
predict its real-world location (world, US, and New York City places) or
date (historical figures, entertainment, news headlines), and report
high out-of-sample $R^2$ as evidence of spatial and temporal
representations. For New York City, where scores are lowest, they note
that the largest model performs best and suggest that larger models may
form city-level spatial models.

\textbf{Choice of $(O,I,T)$.} $T$ is a coordinate or a date, $I$ is the
entity string exactly as presented to the model, and $O$ is its
character 2--4-gram counts. The floor measures how much of $T$ a readout
can learn from spelling alone, for example from state modifiers, borough
names, or numbers in the string. The original authors raise a related
concern, that a probe could assemble coordinates from simpler features
under supervision, and address it by holding out whole regions; the floor
quantifies one specific form of it.

\textbf{Matching the published evaluation.} We use the published probe
scores (their Table~2: linear probes at 60\% layer depth, Llama-2-7B and
Llama-2-70B) and re-ran one to confirm that our pipeline matches theirs:
for Llama-2-7B on New York City we obtain $R^2=0.218$ against the
published 0.219. We verified that (i) the floor uses exactly the strings
the model received, including rewritten forms such as ``Iowa's Dallas
County'' and ``$\langle$creator$\rangle$'s $\langle$title$\rangle$'',
matching token for token on all 186,669 rows; (ii) rows and splits follow
the released train--test assignment, which groups New York City
locations by complex and entertainment items by creator; and (iii)
targets are in their original units, with $R^2$ computed as in the
original evaluation. Ridge regression is fitted on the $n$-gram counts,
and gradient boosting and an MLP on a 1{,}024-component truncated SVD of
them. Confidence intervals come from 1,000 bootstrap resamples of the
test set, resampling complexes for New York City and creators for
entertainment.

\textbf{Result} (Table~\ref{tab:space}). For world and US places,
spelling accounts for half to two-thirds of the raw $R^2$, yet the probes
recover 63--77\% of the remaining headroom, so most of what remains
reflects information beyond the names. For New York City, spelling
reaches $R^2=0.33$; the Llama-2-70B score (0.36) lies within the floor's
confidence interval and the Llama-2-7B score (0.22) lies below it. At
this scale the reported scores cannot separate a city-level spatial
representation from lexical cues, and the better relative performance
of larger models is also consistent with better use of such cues. This
agrees with the original block-holdout analysis, in which probes
generalized to held-out boroughs only slightly better than chance. For
temporal targets, the probes recover most of the headroom for
entertainment ($U_H\ge 0.75$), more than half for historical figures with
the larger model (0.58; 0.46 for the smaller), and about half for
headlines with the larger model (0.54; 0.21 for the smaller).
\begin{table*}[!htbp]
\centering
\small
\setlength{\tabcolsep}{4pt}
\renewcommand{\arraystretch}{1.08}

\begin{tabularx}{\textwidth}{Xcccccccccc}
\toprule
\makecell{Floor\\features} &
Decoder &
\makecell{Floor acc.\\(95\% CI)} &
\makecell{Linear\\floor acc.} &
\makecell{Chance\\acc.} &
\makecell{Occupied\\acc.} &
\makecell{Probe acc.\\(95\% CI)} &
\makecell{Linear probe\\acc.} &
$U_H$ &
\makecell{$U_H$ linear\\probe} \\
\midrule

\rowcolor{neurocolor}
\multicolumn{10}{l}{\textbf{Black/White/Empty}} \\

$O_1$: position
& GBM
& 0.62 (0.61--0.62)
& 0.56
& 0.46
& 0.39
& 0.89 (0.89--0.90)
& 0.74
& 0.72
& 0.33 \\

$O_2$: recency
& GBM
& 0.66 (0.65--0.67)
& 0.60
& 0.46
& 0.47
& 0.89 (0.89--0.90)
& 0.74
& 0.68
& 0.24 \\

$O_3$: placement
& GBM
& 0.80 (0.80--0.81)
& 0.76
& 0.46
& 0.64
& 0.89 (0.89--0.90)
& 0.74
& 0.45
& $-0.30$ \\

\addlinespace
\rowcolor{chemcolor}
\multicolumn{10}{l}{\textbf{Mine/Theirs/Empty}} \\

$O_1$: position
& MLP
& 0.62 (0.61--0.63)
& 0.56
& 0.46
& 0.40
& 0.98 (0.98--0.99)
& 0.99
& 0.96
& 0.96 \\

$O_2$: recency
& GBM
& 0.66 (0.66--0.67)
& 0.57
& 0.46
& 0.47
& 0.98 (0.98--0.99)
& 0.99
& 0.95
& 0.96 \\

$O_3$: placement
& GBM
& 0.81 (0.81--0.82)
& 0.80
& 0.46
& 0.65
& 0.98 (0.98--0.99)
& 0.99
& 0.92
& 0.92 \\

\bottomrule
\end{tabularx}

\caption{Floors, probe accuracy and headroom-normalized decodability for the
Black/White/Empty and Mine/Theirs/Empty targets under nested floor
feature sets $O_1 \subset O_2 \subset O_3$. The ceiling is exactly 1,
since the board is a deterministic function of the move sequence.
Occupied acc.\ is the floor accuracy on occupied squares only.
$U_H$ is the fraction
of the headroom recovered by the probe.}

\label{tab:othello}
\end{table*}
The entertainment floor is low (0.16), possibly
because the released split holds out entire creators, so spelling cues
learned for one creator's titles cannot transfer to another's. The
entertainment result is robust to the floor's uncertainty: across its interval, $U_H$
lies between 0.71 and 0.78 for Llama-2-7B and between 0.85 and 0.88 for
Llama-2-70B. New York City locations are also grouped, yet their floor
stays high, plausibly because names carry borough names and numbers
shared across complexes. 

\subsection{Othello world model}
\label{app:llm_othello}
\textbf{Published claim.} \citet{li2022emergent} train probes on
Othello-GPT activations to predict the state of each board square, and
report 0.88--0.98 accuracy for nonlinear probes and 0.76--0.79 for linear probes on
a black/white/empty encoding for synthetic tasks, as evidence of an emergent world model.
\citet{nanda2023emergent} show that a mine/theirs/empty encoding, relative
to the player to move, is linearly decodable at higher accuracy.

\textbf{Choice of $(O,I,T)$.} $T$ is the state of one square, $I$ is the
move sequence, and $O$ is a nested set of move summaries that never
simulate flips: $O_1$ records only the move number and the square's
position; $O_2$ adds whether the square was among the last four moves and
by which player, and its distance to the last move; $O_3$ adds whether the
square has ever been played, by which player (relative to the player to
move), and how many moves ago. $O_3$ is thus the board as it would be if
no disc were ever flipped. The floor measures how much of the board state
is predictable without computing flips; for $O_3$, the headroom is exactly
what the flip dynamics add.

\textbf{Model and probes.} We use the released Othello-GPT checkpoint of \citet{li2022emergent} and
evaluate it on 1{,}000 games of uniformly random legal play, the same
distribution as its training data. The model predicts a legal next move
for 99.94\% of positions (58{,}958/58{,}992; 98.67\% after a pass, which
the input does not mark). Floor and probes share one set of rows: eight
random positions per game and every square at each (512{,}000 rows),
split by game into 60\% fit, 20\% selection and 20\% test. Rows at the
final move (1.6\%) are excluded from the probe because the model reads
at most 59 moves. For each square we fit the MLP and linear decoders of
the decoder bank on the 512-d residual stream after the embedding or after each of the
8 blocks. The layer and each square's decoder are chosen by McFadden
$R^2$ on the selection games, refitted on fit and selection, and scored
once on the test games; 95\% CIs resample test games. The selected layer
is 5 for black/white/empty and 6 for mine/theirs/empty. Probe accuracy
is 0.892 (0.887--0.897) for black/white/empty, with a linear probe at
0.745, and 0.985 (0.980--0.989) for mine/theirs/empty, with a linear
probe at 0.986. Our
probes are trained on about 4{,}700 positions per square, fewer than
in the original work, so their accuracies, and hence $U_H$, are lower
bounds.

\textbf{Result} (Table~\ref{tab:othello}). The floor rises from 0.62
($O_1$) to $\approx 0.80$ ($O_3$). For black/white/empty, our nonlinear probe (0.89)
recovers 72\% of the headroom above $O_1$ but only 45\% above $O_3$: much
of its accuracy is available from disc placement without flips. The
linear black/white probe (0.74) recovers 33\% above $O_1$ and falls
below the $O_3$ floor ($U_H=-0.30$), so its accuracy is explained by
placement alone. For mine/theirs/empty the representation is linear: the
linear probe reaches 0.99 and recovers 92\% of the headroom even above
$O_3$ (96\% above $O_1$), matching the nonlinear probe. The world-model
claim therefore holds most strongly in the relative encoding. The floor depends on which move summaries are allowed;
$O_3$ is the richest summary we tested that still excludes flips. A
floor that approximates flips heuristically would be higher and would
lower $U_H$.

\subsection{Truth directions}
\begin{table}
\centering
\begin{tabular}{llll}
\toprule
Dataset & Decoder & Floor acc. (95\% CI) & Linear acc. \\
\midrule
cities & Linear & 0.46 (0.40--0.52) & 0.46 \\
neg\_cities & Linear & 0.45 (0.39--0.50) & 0.45 \\
sp\_en\_trans & Linear & 0.65 (0.52--0.76) & 0.65 \\
neg\_sp\_en\_trans & GBM & 0.49 (0.38--0.62) & 0.56 \\
larger\_than & Linear & 0.99 (0.97--1.00) & 0.99 \\
smaller\_than & Linear & 0.97 (0.96--0.99) & 0.97 \\
cities\_cities\_conj & Shallow GBM & 0.56 (0.50--0.62) & 0.54 \\
cities\_cities\_disj & Linear & 0.53 (0.48--0.59) & 0.53 \\
companies\_true\_false & Shallow GBM & 0.53 (0.46--0.59) & 0.43 \\
common\_claim\_true\_false & Linear & 0.62 (0.59--0.65) & 0.62 \\
counterfact\_true\_false & GBM & 0.53 (0.52--0.54) & 0.50 \\
\bottomrule
\end{tabular}
\caption{Wording-only floors for the truth datasets.
The floor is the test accuracy of the
selected decoder on word and character $n$-grams of the statement; Linear acc.\ is the linear decoder's. All datasets are balanced, so
chance accuracy is 0.50 (0.51 for sp\_en\_trans and neg\_sp\_en\_trans).
The ceiling is taken as 1, since the truth value is fixed by the
statement and world knowledge.}
\label{tab:truth}
\end{table}

\textbf{Published claim.}
\citet{marks2023geometry} report linear directions in LLM activations that separate true from false statements and transfer across datasets, including to negated statements.

\textbf{Choice of $(O,I,T)$.}
The target $T$ is the binary truth label, $I$ is the full statement, and $O$ is its wording-only representation using $n$-gram features. The floor therefore measures how well truth can be predicted from surface form alone, without access to model activations or pretrained world knowledge. We evaluate on a random split. For transfer experiments, the floor decoder is trained on the same source dataset as the activation probe and evaluated unchanged on the target dataset; we call this the transfer floor.

\textbf{Result.}
Table~\ref{tab:truth} shows that wording alone is close to chance ($0.5$) on most truth datasets. For cross-dataset transfer (Table~\ref{tab:truth_transfer}), we compare the published probes with this transfer floor.

When trained on \texttt{larger\_than} and \texttt{smaller\_than} and tested on \texttt{sp\_en\_trans}, the floor is $0.49$, while the published 13B and 70B probes both reach $0.97$. Similarly, training on \texttt{cities} and \texttt{neg\_cities} and testing on \texttt{neg\_sp\_en\_trans} gives a floor of $0.50$, compared with probe accuracies of $0.96$ and $0.99$. In both cases, the probe transfers substantially better than surface wording alone.

\begin{table}[!htbp]
\centering
\setlength{\tabcolsep}{4pt}
\begin{tabular}{lllll}
\toprule
\makecell{\textcolor{blue}{Train} $\to$ \textcolor{green!60!black}{Test}} &
Decoder &
\makecell{Floor acc.\\(95\% CI)} &
\makecell{Linear\\acc.} &
\makecell{Published probe acc.\\(13B / 70B)} \\
\midrule

\textcolor{blue}{larger\_than + smaller\_than}
$\to$
\textcolor{green!60!black}{sp\_en\_trans}
&
GBM
&
0.49 (0.44--0.55)
&
0.50
&
0.97 / 0.97
\\

\textcolor{blue}{cities + neg\_cities }
$\to$
\textcolor{green!60!black}{neg\_sp\_en\_trans}
&
Shallow GBM
&
0.50 (0.45--0.55)
&
0.50
&
0.96 / 0.99
\\

\textcolor{blue}{cities}
$\to$
\textcolor{green!60!black}{neg\_cities \tablefootnote{The decoder is selected on held-out source rows, which contain unseen city--country pairs, so no decoder is informative there (shallow GBM pseudo-$R^2= 0.035$, linear pseudo-$R^2 = 0.001$); neg\_cities repeats every cities pair with ``not'' inserted and the label flipped, so pairs that shallow GBM memorised during refitting are wrong on the target ($0.35$, below the $0.50$ chance level), while the near-prior linear decoder stays at chance ($0.47$).
}}
&
Shallow GBM
&
0.35 (0.32--0.37)
&
0.47
&
0.75 / 0.37
\\

\textcolor{blue}{larger\_than}
$\to$
\textcolor{green!60!black}{smaller\_than}
&
Linear
&
0.00 (0.00--0.00)
&
0.00
&
0.07 / 0.55
\\

\bottomrule
\end{tabular}

\caption{Wording-only floors for cross-dataset transfer, against the published
mass-mean probe accuracy on Llama-2-13B and Llama-2-70B. Floors are trained on
the source datasets and scored on the target; chance accuracy is 0.50.}

\label{tab:truth_transfer}
\end{table}

The other two transfers give a different pattern. For
\texttt{larger\_than}$\rightarrow$\texttt{smaller\_than}, the floor is $0.00$, while the 13B and 70B probes obtain $0.07$ and $0.55$. For
\texttt{cities}$\rightarrow$\texttt{neg\_cities}, the floor is $0.35$, while the probes obtain $0.75$ and $0.37$. Both floors fall well below chance ($0.50$) because cues learned on the source dataset invert on the target, so headroom normalization is not applicable (Section~\ref{sec:theory}); we compare the probe accuracies directly with the floor and with chance.

\subsection{User attributes in conversations}
\label{app:llm_users}

\textbf{Published claim.} \citet{chen2024designing} train linear probes
on Llama-2-Chat-13B to read a user's age, gender, education, and
socioeconomic status from a conversation, as evidence that the model
builds an internal model of its user. They report two kinds of probe.
\emph{Reading} probes are applied after a prompt that asks the model
about the user, and reach 0.94--0.98. \emph{Control} probes are applied
at the end of the user's message and are used for steering, and reach
0.91--0.96.

\textbf{Choice of $(O,I,T)$.} $T$ is the user's attribute, $I$ is the
conversation, and $O$ is its surface wording: word 1--2-gram and
character 2--4-gram TF-IDF features, with no pretrained model. The
linear decoder uses the sparse features directly, and the other decoders
use a 1{,}024-component truncated SVD of them. The linear decoder wins every
split.

\textbf{Matching the published evaluation.} We use the released synthetic
conversations, in which GPT-3.5 and Llama-2-Chat role-play users with a
given attribute (2{,}900--4{,}500 conversations per attribute), pooled
across both sources as the published probes were trained. We use a
stratified 80/20 split, with 20\% of the training rows held out for
decoder selection. Chance accuracy is
the share of the most common class.

\textbf{Additional evaluations.} To see where the cue comes from, we also
fit the floor on the user's turns only, on each source separately, and
on GPT-3.5 conversations tested on Llama-2-Chat ones.
\begin{table}
\centering
\setlength{\tabcolsep}{4pt}
\begin{tabular}{llllll}
\toprule
Attribute &
Evaluation &
\makecell{Floor acc.\\(95\% CI)} &
\makecell{Chance\\acc.} &
\makecell{Published probe acc.\\(reading / control)} &
\makecell{$U_H$\\(reading / control)} \\
\midrule

\rowcolor{neurocolor}
\multicolumn{6}{l}{\textbf{Age}} \\

& Pooled & 0.98 (0.97--0.99) & 0.25 & 0.98 / 0.96 & 0.11 / $-$0.78 \\
& User turns & 0.98 (0.97--0.99) & 0.25 & -- & -- \\
& GPT-3.5 & 0.98 (0.97--0.99) & 0.25 & -- & -- \\
& Llama-2-Chat & 0.93 (0.89--0.96) & 0.25 & -- & -- \\
& GPT-3.5$\to$Llama-2-Chat & 0.88 (0.86--0.90) & 0.25 & -- & -- \\

\rowcolor{chemcolor}
\multicolumn{6}{l}{\textbf{Gender}} \\

& Pooled & 0.92 (0.89--0.94) & 0.50 & 0.94 / 0.91 & 0.27 / $-$0.09 \\
& User turns & 0.91 (0.88--0.93) & 0.50 & -- & -- \\
& GPT-3.5 & 0.95 (0.93--0.97) & 0.50 & -- & -- \\
& Llama-2-Chat & 0.77 (0.71--0.82) & 0.50 & -- & -- \\
& GPT-3.5$\to$Llama-2-Chat & 0.76 (0.73--0.78) & 0.50 & -- & -- \\

\rowcolor{metacolor}
\multicolumn{6}{l}{\textbf{Education}} \\

& Pooled & 0.97 (0.96--0.98) & 0.33 & 0.96 / 0.93 & $-$0.24 / $-$1.17 \\
& User turns & 0.98 (0.97--0.99) & 0.33 & -- & -- \\
& GPT-3.5 & 0.97 (0.96--0.98) & 0.33 & -- & -- \\

\rowcolor{omicscolor}
\multicolumn{6}{l}{\textbf{Socioeconomic status}} \\

& Pooled & 0.96 (0.95--0.98) & 0.33 & 0.97 / 0.95 & 0.14 / $-$0.43 \\
& User turns & 0.95 (0.93--0.97) & 0.33 & -- & -- \\
& GPT-3.5 & 0.97 (0.95--0.98) & 0.33 & -- & -- \\
& Llama-2-Chat & 0.97 (0.95--0.99) & 0.33 & -- & -- \\
& GPT-3.5$\to$Llama-2-Chat & 0.90 (0.87--0.92) & 0.33 & -- & -- \\

\bottomrule
\end{tabular}
\caption{Wording-only floors for user attributes. \emph{Pooled}: both
role-play sources, as the published probes were trained.
\emph{User turns}: the user's turns only. \emph{GPT-3.5},
\emph{Llama-2-Chat}: one source. \emph{GPT-3.5 $\to$ Llama-2-Chat}: fitted on
GPT-3.5 conversations and tested on Llama-2-Chat ones. The released data contain no Llama-2-Chat conversations for education, so those rows are absent. The floor is the
test accuracy of the selected decoder (linear in every row). $U_H$
uses a ceiling of 1 and is unstable here because
the headroom is small.}
\label{tab:user}
\end{table}
\textbf{Result} (Table~\ref{tab:user}). Wording alone predicts every
attribute almost as well as the published probes. Each published reading
probe lies within the floor's 95\% interval: 0.98 against a
floor of 0.98 for age, 0.94 against 0.92 for gender, 0.96 against 0.97
for education, and 0.97 against 0.96 for socioeconomic status. Every
control probe falls below the floor. The published accuracies are
therefore largely explained by cues in the wording.

The floor from the user's turns alone (0.91--0.98) matches the floor
from the full conversation, so the cues are in what the role-played user
writes. Trained on GPT-3.5 conversations and tested on Llama-2-Chat ones,
the floor falls to 0.76--0.90, so part of the cue is the generating
model's style rather than the attribute itself.

\textbf{Reading $U_H$ near a ceiling.} With floors of 0.92--0.98, the
headroom is only 0.02--0.08, so a difference of a few points between
probe and floor becomes a large $U_H$. The reading probes give $U_H$ of
0.11, 0.27, and 0.14 for age, gender, and socioeconomic status, and
$-0.24$ for education; the control probes give $-0.09$ to $-1.17$. These
values mainly reflect how small the headroom is, so we read this
analysis by the raw comparison above rather than by $U_H$.

\textbf{Caveats.} The conversations were written to reflect each
attribute, so the label is visible in the text by design. This shows
that the published accuracies cannot establish a user model, not that
the model lacks one; testing that would need data where the attribute
is not cued in the wording.


\section{Conditional probing and predictive $\mathcal V$-information}
\label{app:conditional}

\citet{hewitt2021conditional},
based on predictive $\mathcal V$-information \citep{xu2020theory}, ask how much information about a linguistic property, such as part of speech,
a contextual layer of a language model (ELMo, RoBERTa) contains beyond what
the non-contextual word embeddings already provide. The baseline $B$ is
therefore the embedding layer, and the quantity of interest is the
information that a deeper layer $H$ adds to it, $
I_{\mathcal V}(H \to Y \mid B) = H_{\mathcal V}(Y \mid B) - H_{\mathcal V}(Y \mid B, H),$
estimated by training one probe on $B$ and one on the concatenation $[B;H]$.
They contrast this with the earlier practice of comparing a probe on $H$ with
a probe on $B$ separately. That standalone comparison penalizes $H$ for any
information about word identity it has discarded, even when it adds new
contextual information. Under conditional probing, deeper layers turn out to
carry more information beyond the embeddings than standalone comparisons had
suggested.

Our floor  plays the role of the baseline defined by them, with two differences: $O$ is a
declared information set chosen by the scientific question rather than
another layer of the model, and we pair it with a ceiling $\rho^2_I$, the
predictability supported by the full input.



\subsection{Comparing probe normalization with conditional probes}
\label{app:conditional-relations}

Here, $T$ is the target, $O$ the observed information that defines the floor $\rho_O^2$, $I$ the full input, $H$ the representation
and $B$ a baseline (here $B=O$). $R^2_X$ is the held-out $R^2$ of a probe
on $X$, and $[B;H]$ is the concatenation of $B$ and $H$. The normalized score, $U_H$ has the form of a skill score. In our method, two reference points, floor and ceiling are what $O$ and $I$ themselves determine about $T$, not a fixed baseline and a perfect score.

\paragraph{For continuous targets, conditional probing reduces to a
difference in $R^2$.}
$\mathcal V$-information measures how much better a probe predicts $T$ when it
is given $X$ than when it is given nothing, with prediction quality scored by
log-loss. Suppose the probe outputs a Gaussian with mean $\varphi(X)$ and fixed
variance $\tfrac12$. Its log-loss is then
$-\log p(T\mid X) = (T-\varphi(X))^2 + \text{const}$, so minimizing log-loss
is the same as minimizing squared error, and the best achievable log-loss is
the probe's MSE plus a constant. With no input, the best prediction is the
mean of $T$, whose MSE is $\mathrm{Var}(T)$. Subtracting,
\[
I_{\mathcal V}(X\to T) = \mathrm{Var}(T) - \mathrm{MSE}_X
= \mathrm{Var}(T)\,R^2_X .
\]
The same subtraction with baseline $B$ gives
\[
\frac{I_{\mathcal V}(H\to T\mid B)}{\mathrm{Var}(T)}
= \frac{\mathrm{MSE}_B - \mathrm{MSE}_{B,H}}{\mathrm{Var}(T)}
= R^2_{B,H}-R^2_B .
\]
For continuous targets, conditional probing is therefore just the gain in
$R^2$ from adding $H$ to $B$, so we report it in $R^2$ units for the
meta-analysis. For Othello the targets are categorical, so we report the
corresponding gain in accuracy.

\paragraph{$U_{H\mid O}$ is conditional probing with $B=O$ divided by the headroom.}

The normalized  score $U_H$ compares the probe on $H$ with the floor directly. We define a conditional
variant, $U_{H|O}$ that credits $H$ only for what it adds to $O$:
\[
U_{H|O} := \frac{R^2_{O,H}-\rho^2_O}{\rho^2_I-\rho^2_O},
\]
where $R^2_{O,H}$ is the held-out $R^2$ of a probe given both $O$ and $H$.
Its numerator is conditional probing with $B=O$, so
$U_{H|O}$ is conditional probing normalized by the headroom.

\paragraph{In the population, $U_H\le U_{H\mid O}\le 1$.}
The two scores share the denominator $\rho_I^2-\rho_O^2$ and differ only in
whether the probe sees $H$ alone or $H$ together with $O$. With Bayes-optimal
predictors, more information never lowers $R^2$. Since $(O,H)$ contains $H$,
$R^2_{O,H}\ge R^2_H$, which gives $U_{H\mid O}\ge U_H$. Since $(O,H)$ is a
function of $I$ when $H=\Phi(I)$, $R^2_{O,H}\le\rho_I^2$, which gives
$U_{H\mid O}\le 1$. The gap
\[
U_{H\mid O}-U_H=\frac{R^2_{O,H}-R^2_H}{\rho_I^2-\rho_O^2}
\]
is the information about $T$ that $O$ carries and $H$ has discarded. It is
zero when $H$ retains everything in $O$ that is relevant to $T$. In the
population, $U_H$ is therefore conservative: it can understate, but not overstate, what $H$ adds beyond $O$.
With finite samples, estimated reference points and a fixed decoder family, either inequality can fail;
out of distribution in Table~\ref{tab:meta_cond}, for example, $U_H$ slightly exceeds $U_{H\mid O}$.

\subsection{Meta-analysis.}
$H$ is the last layer of the trained xl transformer at the final position. $O$ is the summary statistics of $\sigma^2$ (Appendix~\ref{app:endpoint-features}). Every probe (for $O$, $H$, $[O;H]$) is the linear decoder from the decoder bank of
Table~\ref{tab:canonical-probe-bank}.

\begin{table}[!htbp]
\centering
\begin{tabular}{lllll}
\toprule
Condition & \multicolumn{2}{c}{ID} & \multicolumn{2}{c}{OOD} \\
Target & $\log S_1$ & $\tau^2$ & $\log S_1$ & $\tau^2$ \\
\midrule
Estimated floor $\hat\rho_O^2$ & $0.01 \pm 0.01$ & $-0.00 \pm 0.00$ & $0.09 \pm 0.01$ & $-0.00 \pm 0.00$ \\
Estimated ceiling $\hat\rho_I^2$ & $0.63 \pm 0.02$ & $0.66 \pm 0.01$ & $0.41 \pm 0.02$ & $0.36 \pm 0.03$ \\
Estimated headroom $\hat\Delta=\hat\rho_I^2-\hat\rho_O^2$ & $0.62 \pm 0.01$ & $0.66 \pm 0.01$ & $0.33 \pm 0.02$ & $0.36 \pm 0.03$ \\
Hidden probe $R^2_H$ & $0.60 \pm 0.04$ & $0.64 \pm 0.02$ & $0.40 \pm 0.02$ & $0.34 \pm 0.02$ \\
Probe on $[O;H]$, $R^2_{O,H}$ & $0.59 \pm 0.04$ & $0.64 \pm 0.02$ & $0.40 \pm 0.02$ & $0.34 \pm 0.03$ \\
Normalized score $ U_H$ & $0.95 \pm 0.04$ & $0.97 \pm 0.04$ & $0.96 \pm 0.02$ & $0.97 \pm 0.02$ \\
Conditional normalized score $ U_{H\mid O}$ & $0.95 \pm 0.05$ & $0.97 \pm 0.04$ & $0.95 \pm 0.02$ & $0.95 \pm 0.03$ \\
Conditional probe ($B=O$), $R^2_{O,H}-\hat\rho_O^2$ & $0.58 \pm 0.04$ & $0.64 \pm 0.03$ & $0.31 \pm 0.02$ & $0.34 \pm 0.02$ \\
\bottomrule
\end{tabular}
\caption{Meta-analysis transformer (xl, last layer, final position): normalized headroom and conditional probing with baseline $B=O$. Every probe, floor and ceiling is the \textbf{linear decoder}; the floor and ceiling are estimated from $O$ and $I$. Mean $\pm$ SD over five seeds.}
\label{tab:meta_cond}
\end{table}

We found that conditional normalized score is roughly equal to normalized score (Table~\ref{tab:meta_cond}). For both  $\tau^2$ and $\log S_1$, $U_H$ and $U_{H\mid O}$ differ by at most $0.02$ in every condition. Raw conditional probe gains mix up the representation and the data and normalizing by headroom separates them.
From in distribution to out of distribution, Conditional Probe $(H\mid B=O)$ halves from $0.58\to0.31$ for $\log S_1$ and
$0.64\to0.34$ for $\tau^2$. Alone, this indicates that the representation got twice as bad. But the estimated headroom $\hat\rho_I^2-\hat\rho_O^2$ also shrinks ($0.62\to0.33$ for $\log S_1$; $0.66\to0.36$ for $\tau^2$), and the normalized score
does not fall: $U_{H\mid O}=0.95\to0.95$ for $\log S_1$ and $0.97\to0.95$ for $\tau^2$. The drop therefore reflects information missing from the shifted data, not a worse representation. Conditional probing cannot differentiate without a ceiling score.
\begin{table}[t!htbp]
\centering
\begin{tabular}{lll}
\toprule
Target & Black/White/Empty & Mine/Theirs/Empty \\
\midrule
Layer & 5 & 6 \\
Estimated floor (rung $O_3$) & 0.762 (0.754--0.768) & 0.803 (0.797--0.808) \\
Ceiling (exact) & 1.000 & 1.000 \\
Headroom & 0.238 & 0.197 \\
Probe on $O$ & 0.768 (0.762--0.774) & 0.808 (0.803--0.814) \\
Hidden probe & 0.745 (0.738--0.752) & 0.986 (0.981--0.990) \\
Probe on $[O;H]$ & 0.767 (0.760--0.773) & 0.989 (0.984--0.992) \\
Normalized score $U_H$ & -0.069 & 0.929 \\
Conditional normalized score $U_{H\mid O}$ & 0.021 & 0.942 \\
Conditional probe ($B=O$), $\mathrm{Acc}_{O,H}-\mathrm{Acc}_O$ & -0.001 & 0.180 \\
\bottomrule
\end{tabular}
\caption{Othello-GPT: normalized headroom and conditional probing with baseline $B=O$ (rung $O_3$: disc placement without flips). Test accuracy of the linear decoder, one probe per square; the ceiling is exactly 1 because the board is a function of the moves. The estimated floor is the linear decoder of the floor ladder (Table~\ref{tab:othello}), fitted once across all squares; the probe on $O$ fits one linear probe per square on the same features and on the probe's rows, which exclude the final move, so the two differ slightly. The conditional probe is measured from the probe on $O$; $U_H$ and $U_{H\mid O}$ are measured from the estimated floor.}
\label{tab:appK-othello}
\end{table}

\subsection{Othello-GPT.} We fit one linear probe per square on the residual stream
(block 5 for black/white/empty, block 6 for mine/theirs/empty; 1,000 games split
60/20/20), with $O$ the move-sequence features of rung $O_3$ (disc placement without
flips) and a ceiling of 1, since the board is a function of the moves. Because
every probe here, including the floor, is linear, the floor (0.76 and 0.80) is slightly
lower than the decoder-bank floor of Appendix~\ref{app:llm}. The standalone and
conditional scores reach the same verdict (Table~\ref{tab:appK-othello}). For
black/white/empty, $H$ adds nothing beyond disc placement: the conditional gain is
$-0.001$, $U_H=-0.07$ and $U_{H|O}=0.02$. For mine/theirs/empty, $H$ captures most of
what the flips add: the conditional gain is $0.18$, $U_H=0.93$ and $U_{H|O}=0.94$.
The normalized and conditional normalized scores agree in both cases. Therefore, both methods support the conclusion of
Section~\ref{sec:llm_probes}: a linear readout of the absolute board is explained by
disc placement, while with mine/theirs labels a linear probe recovers the board state that requires simulating flips.

\newpage
\section{Additional figures and tables}
\FloatBarrier
\begin{table}[!htbp]
\centering
\begin{tabular}{llllllll}
\toprule
 & \multicolumn{2}{c}{$\log S_1$ $R^2$} & \multicolumn{2}{c}{$\tau^2$ $R^2$} & \multicolumn{2}{c}{$\mathrm{MSE}_{\rm model}/\mathrm{MSE}_{\rm oracle}$} & MSE \\
 & ID & OOD & ID & OOD & ID & OOD & OOD/ID \\
\midrule
xs & 0.28 $\pm$ 0.20 & 0.30 $\pm$ 0.04 & 0.22 $\pm$ 0.20 & 0.24 $\pm$ 0.03 & 1.05 $\pm$ 0.01 & 5.05 $\pm$ 2.25 & 28.3 \\
s & 0.60 $\pm$ 0.04 & 0.29 $\pm$ 0.03 & 0.54 $\pm$ 0.03 & 0.26 $\pm$ 0.02 & 1.02 $\pm$ 0.00 & 2.44 $\pm$ 0.37 & 14.2 \\
m & 0.60 $\pm$ 0.03 & 0.34 $\pm$ 0.03 & 0.60 $\pm$ 0.04 & 0.28 $\pm$ 0.03 & 1.03 $\pm$ 0.00 & 2.58 $\pm$ 0.73 & 14.9 \\
l & 0.61 $\pm$ 0.02 & 0.35 $\pm$ 0.05 & 0.58 $\pm$ 0.05 & 0.29 $\pm$ 0.05 & 1.03 $\pm$ 0.01 & 2.23 $\pm$ 0.59 & 12.9 \\
xl & 0.64 $\pm$ 0.05 & 0.40 $\pm$ 0.02 & 0.64 $\pm$ 0.02 & 0.34 $\pm$ 0.02 & 1.03 $\pm$ 0.00 & 2.08 $\pm$ 0.51 & 12.0 \\
\bottomrule
\end{tabular}
\caption{In-distribution (ID) vs.\ out-of-distribution (OOD) probe $R^2$ and prediction MSE at the final checkpoint (mean $\pm$ sample standard deviation across five seeds). MSE is measured at the final position; the oracle is the conditional-Bayes estimator $S_2/(S_1+1)$, which uses the true $\tau^2$.}
\label{tab:dissociation}
\end{table}
\begin{table*}[!htbp]
\centering
\small
\setlength{\tabcolsep}{5pt}
\renewcommand{\arraystretch}{1.08}

\begin{tabularx}{\textwidth}{Xccccc}
\toprule
Target &
Floor &
Ceiling &
Headroom &
\makecell{Representation\\probe} &
$U_H$ \\
\midrule

\rowcolor{neurocolor}
\multicolumn{6}{l}{\textbf{Neuroscience}} \\

Force $x$
& $0.11 \pm 0.02$
& $0.99 \pm 0.01$
& $0.88 \pm 0.03$
& $0.75 \pm 0.01$
& $0.72 \pm 0.02$ \\

Force $y$
& $0.09 \pm 0.02$
& $1.00 \pm 0.00$
& $0.91 \pm 0.02$
& $0.57 \pm 0.02$
& $0.53 \pm 0.03$ \\

\addlinespace
\rowcolor{chemcolor}
\multicolumn{6}{l}{\textbf{Chemistry}} \\

HOMO
& $0.39 \pm 0.00$
& $0.63 \pm 0.00$
& $0.24 \pm 0.00$
& $0.56 \pm 0.01$
& $0.69 \pm 0.07$ \\

LUMO
& $0.50 \pm 0.02$
& $0.69 \pm 0.01$
& $0.19 \pm 0.01$
& $0.66 \pm 0.01$
& $0.82 \pm 0.09$ \\

\addlinespace
\rowcolor{metacolor}
\multicolumn{6}{l}{\textbf{Meta-analysis}} \\

$\log S_1$ (ID)
& $0.01 \pm 0.00$
& $0.82 \pm 0.01$
& $0.81 \pm 0.02$
& $0.64 \pm 0.05$
& $0.78 \pm 0.06$ \\

$\tau^2$ (ID)
& $0.00 \pm 0.00$
& $0.76 \pm 0.01$
& $0.76 \pm 0.01$
& $0.64 \pm 0.02$
& $0.84 \pm 0.04$ \\

$\log S_1$ (OOD)
& $0.09 \pm 0.01$
& $0.47 \pm 0.02$
& $0.37 \pm 0.02$
& $0.40 \pm 0.02$
& $0.82 \pm 0.08$ \\

$\tau^2$ (OOD)
& $0.00 \pm 0.00$
& $0.40 \pm 0.02$
& $0.40 \pm 0.02$
& $0.34 \pm 0.02$
& $0.85 \pm 0.08$ \\

\addlinespace
\rowcolor{omicscolor}
\multicolumn{6}{l}{\textbf{Single-cell}} \\

Synthetic
& $-0.02 \pm 0.02$
& $0.87 \pm 0.01$
& $0.89 \pm 0.02$
& $0.60 \pm 0.03$
& $0.70 \pm 0.03$ \\

10k human
& $0.60 \pm 0.02$
& $0.97 \pm 0.01$
& $0.38 \pm 0.02$
& $0.81 \pm 0.02$
& $0.57 \pm 0.03$ \\

20k donors (pooled)
& $0.13 \pm 0.04$
& $0.97 \pm 0.00$
& $0.84 \pm 0.04$
& $0.57 \pm 0.03$
& $0.52 \pm 0.03$ \\

40k breast
& $0.15 \pm 0.06$
& $0.87 \pm 0.02$
& $0.72 \pm 0.04$
& $0.35 \pm 0.04$
& $0.28 \pm 0.05$ \\

40k lung
& $0.06 \pm 0.06$
& $0.86 \pm 0.02$
& $0.80 \pm 0.05$
& $0.44 \pm 0.05$
& $0.47 \pm 0.05$ \\

\bottomrule
\end{tabularx}
\caption{Headroom-normalized decodability across domains
(mean $\pm$ sample standard deviation across five seeds).
$U_H=(R_H^2-\rho_O^2)/(\rho_I^2-\rho_O^2)$ is the fraction of
headroom recovered by the representation probe.
Neuroscience uses the $O_4$ pre-bump floor and a decoder-bank probe on Area~2 response counts, fitted within each recording session; meta-analysis uses the exact reference points of
Table~\ref{tab:exact-endpoints} with final-checkpoint xl last-layer probes;
single-cell data use reference points estimated at full sequencing depth and a probe on frozen scGPT representations (64 PCA components); chemistry uses the $O_4$ physicochemical floor and a probe on the LoRA-fine-tuned TabPFN.}

\label{tab:cross-domain-calibration-legacy}
\end{table*}

\end{document}